\documentclass{article}

\usepackage{microtype}
\usepackage{graphicx}
\usepackage{subcaption}
\usepackage{booktabs} 

\usepackage{hyperref}

\usepackage[accepted]{icml2026}

\usepackage{amsmath}
\usepackage{amssymb}
\usepackage{mathtools}
\usepackage{amsthm}
\usepackage[export]{adjustbox}

\theoremstyle{plain}
\newtheorem{theorem}{Theorem}[section]
\newtheorem{proposition}[theorem]{Proposition}
\newtheorem{lemma}[theorem]{Lemma}

\theoremstyle{definition}
\newtheorem{definition}[theorem]{Definition}

\newtheorem{hypothesis}[theorem]{Hypothesis}
\theoremstyle{remark}

\icmltitlerunning{
Gradient Descent with Large Step Size Restores Symmetry in Deep Linear Networks with Multi-Pathway
}

\begin{document}

\twocolumn[
    \icmltitle{\texorpdfstring{Gradient Descent with Large Step Size Restores Symmetry \\
    in Deep Linear Networks with Multi-Pathway}{Gradient Descent with Large Step Size Restores Symmetry in Deep Linear Networks with Multi-Pathway}}
    
    
    
    

    \begin{icmlauthorlist}
        \icmlauthor{Hee-Sung Kim}{yyy}
        \icmlauthor{Sungyoon Lee}{yyy}
    \end{icmlauthorlist}
    
    \icmlaffiliation{yyy}{Department of Computer Science, Hanyang University, Seoul, Korea}
    \icmlcorrespondingauthor{Sungyoon Lee}{sungyoonlee@hanyang.ac.kr}
    
    \icmlkeywords{Machine Learning, ICML}
    
    \vskip 0.3in
]



\printAffiliationsAndNotice{}  

\begin{abstract}
    Recent analyses of multi-pathway Deep Linear Networks use Gradient Flow to predict a ``winner-takes-all'' specialization in which path symmetry breaks and each feature concentrates in a single pathway.
In this work, we show that discrete Gradient Descent (GD) with a large step size tells a different story. 
We prove that single-path solutions are sharp minima, whereas distributing signals across pathways reduces sharpness by a factor that decreases with both the number of pathways and depth.
Consequently, while early training reproduces the depth-driven symmetry breaking predicted by GF, oscillations at the Edge of Stability subsequently override this tendency and drive the network into a re-balancing phase, where signals redistribute across pathways.
Together, these results clarify how depth shapes pathway competition and explain why large-step GD favors shared representations rather than persistent single-pathway dominance.

\end{abstract}

\newcommand{\W}{\mathbf{W}}
\newcommand{\A}{\mathbf{A}}
\newcommand{\I}{\mathbf{I}}
\newcommand{\Thetab}{\boldsymbol{\Theta}} 
\newcommand{\vecop}{\mathrm{vec}}
\newcommand{\diag}{\mathrm{diag}}

\section{Introduction}

Understanding the training dynamics of Deep Neural Networks remains a central challenge in deep learning theory.
Although deep architectures are highly expressive, how optimization algorithms select specific solutions among many global minima—the \textit{implicit bias} of the optimizer—remains poorly understood~\citep{zhang2017understanding, neyshabur2017exploring, 
gunasekar2017implicit, soudry2018implicit}. 
To characterize this bias, a growing body of work analyzes Deep Linear Networks (DLNs) \citep{saxe2014exact, arora2018optimization, lampinen2019analytic, Shi2022learning}, under the Gradient Flow (GF) approximation, which assumes an infinitesimally small learning rate.

Recently, \citet{Shi2022learning} utilized the GF to argue that multi-pathway DLNs undergo ``symmetry breaking,'' converging to a sparse ``winner-takes-all'' solution where parallel over-parameterization becomes redundant. 
However, this continuous-time analysis overlooks the discrete dynamics of Gradient Descent (GD) with large learning rates, which characterize practical training at the Edge of Stability \citep{cohen2021gradient}. 
Given that GD actively interacts with the loss curvature to avoid sharp minima, it remains a critical question whether the symmetry-breaking phenomenon persists under GD with large learning rates.


In this work, we demonstrate that the answer is no: while GF predicts symmetry breaking, the discreteness of GD induces a fundamentally different phenomenon we term \textbf{pathway re-balancing}. By analyzing the geometry of the loss landscape, we prove that sparse, single-path solutions correspond to sharp minima, whereas solutions balanced across multiple pathways are significantly flatter. Consequently, when the learning rate is sufficiently large, the implicit regularization of GD drives the network away from unstable, sparse configurations toward stable, balanced configurations.

Our main contributions are summarized as follows:

\begin{itemize}
    \item \textbf{Sharpness Reduction via Parallelism:} We theoretically derive the relationship between the number of pathways $H$, depth $L$, and sharpness of global minima. We prove that balancing signals across $H$ pathways reduces sharpness by $H^{2/L-1}$ (Theorem~\ref{thm:sharpness_reduction}).
    
    \item \textbf{Re-balancing Dynamics at the Edge of Stability:} We identify two distinct phases of training under large step sizes. Initially, the network exhibits symmetry breaking similar to GF. However, as the dominant path sharpens, the model enters a \textbf{re-balancing} phase. Here, violent oscillations force the network to flatten its landscape by distributing signals across multiple paths.

    \item \textbf{Worst-Case Return Threshold}: Based on a \textit{deep linear chain} model, we derive a depth-dependent upper bound on learning rate that guarantees the trajectory survives violent oscillations. The ratio of this threshold to the classical stability limit increases with depth, broadening the window for re-balancing.
\end{itemize}

Beyond the multi-pathway setting, our results suggest that architectural biases predicted under Gradient Flow—a foundation shared by many recent analyses of DLN-based models—may require re-examination once discrete dynamics and finite learning rates are accounted for. 
This highlights the essential role of discrete optimization in shaping the final network structure and motivates revisiting GF-based predictions through the lens of GD.

\section{Related Work}

\textbf{Deep Linear Networks and Multi-Pathway Dynamics.}
Deep linear networks (DLNs) serve as a canonical testbed for isolating optimization and representation learning mechanisms \citep{baldi1989neural, saxe2014exact, arora2018optimization}. They have been extensively used to study exact training dynamics and emergent phenomena \citep{saxe2019mathematical, lampinen2019analytic, jdomine2023exact, nam2025position}. 
There has been recent attention to multi-branch architectures. Notably, \citet{Shi2022learning} and \citet{saxe2022neural} demonstrate that under continuous-time Gradient Flow (GF), parallel pathways exhibit a ``winner-takes-all'' dynamic. 
Specifically, they identify that some pathways dominate the feature learning driven either by small initialization asymmetries \citep{Shi2022learning} or by structural asymmetry such as depth or shared representations across pathways \citep{saxe2022neural}.

\textbf{Implicit Bias of GD and Edge of Stability.}
In contrast to GF, discrete-time Gradient Descent (GD) with large learning rates induces distinct dynamical regimes, often operating at the ``Edge of Stability'' (EoS) \citep{cohen2021gradient, wu2018how}. Theoretical and empirical studies suggest that instability in this regime acts as an implicit regularizer, driving models toward flatter minima \citep{damian2023selfstabilization} or inducing transient ``catapult'' phases that favor generalization \citep{lewkowycz2020large, zhu2024catapults}. Recent works have formalized these discrete dynamics through mechanisms like self-stabilization \citep{damian2023selfstabilization} and central flows \citep{cohen2025understanding}, highlighting how GD avoids the sharpest directions in the landscape.

\textbf{GD in Linear Models.}
Recent research has begun to bridge these insights specifically within linear models. \citet{marion2024deep} and \citet{even2023SGD} show that GD in linear networks explicitly favors flat minima, differing from the implicit bias of GF. Most relevant to our work, \citet{ghosh2025learning} analyze deep matrix factorization beyond the stability threshold, identifying oscillatory behaviors and convergence properties that violate GF predictions. Our work extends these findings to the multi-pathway setting of \citet{Shi2022learning}, demonstrating that the stability constraints of large-step GD override the symmetry-breaking tendency of GF, forcing a \emph{re-balancing} of features across pathways to reduce sharpness.

\section{Problem Setup}
\label{sec:setup}
We study a deep linear network with $H$ parallel pathways indexed by $h \in [H]= \{1,\cdots,H\}$. Pathway $h$ has depth $L_h$ and weight matrices
$W_{h\ell} \in \mathbb{R}^{d \times d}$ for layers $\ell \in [L_h]$. The architecture follows the multi-pathway formulation of \citet{Shi2022learning}. The end-to-end map of pathway $h$ is the ordered product
\begin{equation}
\Omega_h \;:=\; W_{hL_h} W_{hL_h-1} \cdots W_{h1}.
\end{equation}
The overall input--output map of the network is the sum of the pathway maps:
\begin{equation}
M \;:=\; \sum\nolimits_{h=1}^H \Omega_h.
\end{equation}
As in prior theoretical work on deep matrix factorization \citep{ghosh2025learning}, we focus on the square case $W_{h\ell}\in\mathbb{R}^{d\times d}$ for theoretical analysis.

\subsection{Optimization Objective}
Let $\Theta=\{W_{h\ell}\}_{h\in[H],\ell\in[L_h]}$ denote all parameters. We fit a target matrix $M_\star\in\mathbb{R}^{d\times d}$ by minimizing the squared Frobenius loss
\begin{equation}
\mathcal{L}(\Theta) \;=\; \frac{1}{2}\,\|M - M_\star\|_F^2.
\end{equation}
We consider discrete-time gradient descent updates
\begin{equation}
W_{h\ell}(t+1) \;=\; W_{h\ell}(t) \;-\; \eta\,\nabla_{W_{h\ell}}\mathcal{L}(\Theta(t)),
\end{equation}
with stepsize $\eta>0$.

\subsection{Target-Aligned Parametrization and the SVS Set}
\label{sec:svs}
Let the target admit the SVD
\begin{equation}
M_\star = U_\star \Sigma_\star V_\star^\top,
\qquad
\Sigma_\star=\diag(\sigma_{\star1},\cdots,\sigma_{\star d}).
\end{equation}
A standard simplification in the analysis of deep linear networks aligns each pathway's end-to-end singular vectors with those of the target, so the dynamics decouple across modes \citep{saxe2014exact}. We adopt the multi-pathway extension of this parametrization, the \emph{singular vector stationary} (SVS) set of \citet{ghosh2025learning}. The SVS set consists of parameter configurations $\Theta$ for which each layer factors as
\begin{equation}
W_{h\ell} = Q_{h\ell+1} \Sigma_{h\ell} Q_{h\ell}^\top,
\,\,
Q_{h1} = V_\star,\,\, Q_{hL_h+1} = U_\star,
\label{eq:svs_def}
\end{equation}
with fixed orthogonal matrices $\{Q_{h,\ell}\}$ and diagonal $\Sigma_{h\ell}$.
Under \eqref{eq:svs_def}, adjacent orthogonals cancel in the product, so every pathway map is diagonal in the target basis:
\begin{equation}
\Omega_h \;=\; U_\star\Big(\prod\nolimits_{\ell=1}^{L_h}\Sigma_{h\ell}\Big)V_\star^\top.
\end{equation}
Gradient descent preserves the SVS set: if $\Theta(t)$ lies in the set, so does $\Theta(t+1)$ (Appendix~\ref{app:svs_depthbalance_invariant}). 
Analogous target-aligned reductions appear in \citet{saxe2014exact, saxe2022neural, arora2019Implicit, gidel2019implicit, varre2023Spectral, chou2024gradient, kwon2024Efficient}.

We index modes by the target singular vector pairs $(u_{\star i}, v_{\star i})$, the $i$-th columns of $U_\star$ and $V_\star$, and track the scalar \emph{mode coefficients}
\begin{align}
    \sigma_{hi}(t) &\;:=\; u_{\star i}^\top \Omega_h(t)\,v_{\star i},
    \\
\sigma_i(t) &\;:=\; u_{\star i}^\top M(t)\,v_{\star i} =\sum\nolimits_{h=1}^H \sigma_{hi}(t).
\end{align}
On the SVS set, $\sigma_{hi}(t)$ equals the $i$-th singular value of $\Omega_h(t)$, and the learning dynamics decouple across modes $i \in [d]$.

Mode-wise decoupling on the SVS set decomposes the loss into independent scalar mode losses:
\begin{equation}
\mathcal{L}(\Theta) = \sum_{i=1}^d \mathcal{L}_i,
\quad
\mathcal{L}_i = \frac{1}{2}\Big(\sum_{h=1}^H \sigma_{hi}-\sigma_{\star i}\Big)^2.
\label{eq:mode_loss}
\end{equation}

\subsection{Depth-Balanced Initialization}
\label{sec:init}
We initialize each pathway with
\begin{equation}
W_{h\ell}(0) \;=\; \alpha_h^{1/L_h}\, I_d,
\label{eq:balanced_init}
\end{equation}
where $\alpha_h>0$ are small pathway-dependent scales. This initialization places $\Theta(0)$ in the SVS set and equalizes the layerwise singular values within each pathway. It therefore lies on the \emph{depth-balanced manifold}, defined by
\begin{equation}
\sigma_{hi}^{(\ell)} \;=\; \sigma_{hi}^{1/L_h},
\qquad \forall\, h \in [H],\, \ell \in [L_h],\, i \in [d],
\label{eq:depth_balanced}
\end{equation}
where $\sigma_{hi}^{(\ell)}$ denotes the
$i$-th singular value of layer $\ell$ in pathway $h$.
The trajectory remains on this manifold (and in the SVS set) throughout training for symmetric targets $M_\star = V_\star \Sigma_\star V_\star^\top$ (Proposition~\ref{prop:depthwise-invariant} in Appendix~\ref{app:svs_depthbalance_invariant}).
The scales $\alpha_h$ may differ across pathways. Such tiny initialization asymmetries trigger the symmetry-breaking dynamics analyzed in Section~\ref{sec:dynamics}.

\section{Sharpness Reduction in Multi-Pathway Deep Linear Models}

In this section, we derive the relationship between the number of pathways $H$ and the loss sharpness at global minima in depth-balanced manifold. We assume homogeneous depth $L_h=L$ in this section. We demonstrate that distributing a target feature across multiple pathways significantly reduces sharpness. Detailed proofs for all results are provided in the Appendix~\ref{app:sharpness_reduction}.

\subsection{Sharpness Analysis}

The sharpness of the loss is determined by the maximum eigenvalue of the Hessian $\nabla^2 \mathcal{L}(\Theta)$ at a global minimum where $M = M_\star$.
On the SVS set (Section~\ref{sec:svs}), the Hessian decomposes across modes, so we can analyze the Hessian contribution from each mode $i \in [d]$ independently.

\begin{proposition}[Mode-Wise Dominant Eigenvalue]
\label{prop:mode-eig}
At any global minimum within the SVS set restricted to the depth-balanced manifold, the Hessian contribution from mode $i$ has a single nonzero eigenvalue given by:
\begin{equation}
    \lambda_i = L \sum_{h=1}^H \sigma_{hi}^{2-\frac{2}{L}}.
\end{equation}
\end{proposition}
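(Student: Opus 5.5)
At a global minimum the residual $r_i := \sum_h \sigma_{hi} - \sigma_{\star i}$ vanishes for every mode. So the plan is to use the Gauss--Newton structure of the Hessian: the residual-weighted second-derivative term drops out, leaving a rank-one matrix per mode. Concretely, for the squared loss $\mathcal{L}=\tfrac12\|M-M_\star\|_F^2$ we have, at $M=M_\star$,
\begin{equation*}
\nabla^2\mathcal{L}(\Theta) = J^\top J, \qquad J := \frac{\partial\,\mathrm{vec}(M)}{\partial\,\Theta},
\end{equation*}
because the second-order term is contracted with $M-M_\star=0$.

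I will then split $J^\top J$ across modes. In the target basis, $J^\top J$ is a sum over the output coordinates $(j,k)$ of $U_\star^\top M V_\star$ of rank-one terms $g_{jk} g_{jk}^\top$, where $g_{jk} = \nabla_\Theta\,(u_{\star j}^\top M v_{\star k})$. The \emph{mode-$i$ contribution} is the diagonal term $g_{ii} g_{ii}^\top = \nabla_\Theta \sigma_i\,(\nabla_\Theta \sigma_i)^\top$. This is rank one, so its single nonzero eigenvalue is $\|\nabla_\Theta \sigma_i\|^2$.

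The remaining work is to compute this norm on the SVS set. From $\sigma_i = \sum_h u_{\star i}^\top W_{hL}\cdots W_{h1} v_{\star i}$, the gradient with respect to $W_{h\ell}$ is
\begin{equation*}
\nabla_{W_{h\ell}}\sigma_i = (W_{hL}\cdots W_{h\ell+1})^\top u_{\star i}\, v_{\star i}^\top (W_{h\ell-1}\cdots W_{h1})^\top .
\end{equation*}
By \eqref{eq:svs_def}, the left factor equals $\bigl(\prod_{m>\ell}\sigma_{hi}^{(m)}\bigr) q_{h\ell+1,i}$ and the right factor equals $\bigl(\prod_{m<\ell}\sigma_{hi}^{(m)}\bigr) q_{h\ell,i}^\top$, where $q_{h\ell,i}$ denotes the $i$-th column of $Q_{h\ell}$. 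Each is a unit vector times a scalar, so the squared Frobenius norm is $\prod_{m\ne\ell}(\sigma_{hi}^{(m)})^2$. On the depth-balanced manifold \eqref{eq:depth_balanced} this equals $\sigma_{hi}^{2(L-1)/L}$.

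Summing over the $L$ layers and the $H$ pathways gives $\lambda_i = L\sum_h \sigma_{hi}^{2-2/L}$, which is the claimed formula. The even power makes sign issues harmless.

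The main obstacle is making the phrase ``Hessian contribution from mode $i$'' precise. The off-diagonal terms $g_{jk}$ with $j\neq k$ also appear in the full Hessian, and they couple the same parameters. I would define the mode-$i$ contribution as the term $g_{ii}g_{ii}^\top$ in the decomposition above. I would then check that the vectors $g_{ii}$ for different $i$ are mutually orthogonal. This holds because their layer-wise components are rank-one matrices $q_{h\ell+1,i}q_{h\ell,i}^\top$, which are orthogonal across $i$. Consequently the diagonal terms are simultaneously diagonalizable with disjoint supports, which justifies treating each mode independently.
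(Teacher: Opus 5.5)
Your proof is correct, and its core is the same as the paper's. At zero residual the second-order term vanishes, the mode-$i$ term is a rank-one Gram matrix, and its eigenvalue is a squared Jacobian norm in which each of the $L$ layers of pathway $h$ contributes $\sigma_{hi}^{2-2/L}$.

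The difference is the space you compute in. The paper works in the reduced SVS chart $\{\sigma_{hi}^{(\ell)}\}$ with the scalar loss $\mathcal{L}_i=\tfrac12 r_i^2$, so its Jacobian is just the partial derivatives of a product of scalars. That route is shorter, but it only gives the Hessian of the loss restricted to the diagonal chart; the link to the sharpness of the full Hessian is asserted rather than derived.

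You work with all matrix entries and the full Gauss--Newton matrix $J^\top J=\sum_{j,k}g_{jk}g_{jk}^\top$. This ties $\lambda_i$ to a genuine eigenvalue of $\nabla^2\mathcal{L}(\Theta)$ at the minimum, but only after one strengthening. You must show $g_{ii}$ is orthogonal to the off-diagonal $g_{jk}$ with $j\neq k$, not just to the other $g_{i'i'}$. Your own computation already gives this: the $W_{h\ell}$-component of $g_{jk}$ is a multiple of $q_{h\ell+1,j}q_{h\ell,k}^\top$, and these matrices form an orthonormal basis of $\mathbb{R}^{d\times d}$. Hence all the $g_{jk}$ are mutually orthogonal and diagonalize $J^\top J$ exactly. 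Note that ``orthogonal'' is the accurate word here, not ``disjoint supports.''

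This view also exposes something the reduced chart hides. The ambient Hessian has extra cross-mode eigenvalues $\|g_{jk}\|^2=\sum_h\sum_{\ell=1}^{L}\sigma_{hj}^{2(L-\ell)/L}\sigma_{hk}^{2(\ell-1)/L}$. Weighted AM--GM bounds each of these by $\tfrac12(\lambda_j+\lambda_k)$, and that bound is what actually justifies the paper's subsequent claim $\lambda_{\max}=\max_i\lambda_i$.
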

Since the Hessian on this manifold is block-diagonal across modes with rank-one blocks, the loss sharpness equals $\lambda_{\max} = \max_{i \in [d]} \lambda_i$.

This result highlights that sharpness depends on how the total singular value $\sigma_{\star i}$ is distributed among the $H$ pathways. We now state the main theorem regarding the optimal distribution of these values.
\begin{figure*}[!t]
    \centering
    \begin{tabular}{cl} 
        \vspace{-4mm}
        \rotatebox[origin=c]{90}{$\quad\eta < 2/S_1$} & 
        \begin{subfigure}[b]{0.9\textwidth}
            \centering
            \includegraphics[width=\linewidth, valign=c]{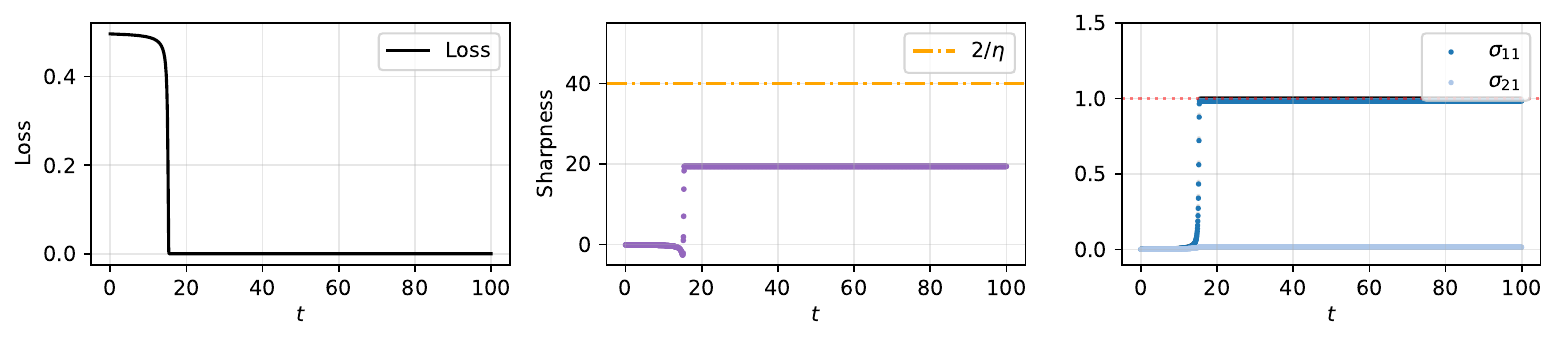} 
        \end{subfigure} \\
        \vspace{-0.4cm}
        \rotatebox[origin=c]{90}{$\quad\eta > 2/S_1$} & 
        \begin{subfigure}[b]{0.9\textwidth}
            \centering
            \includegraphics[width=\linewidth, valign=c]{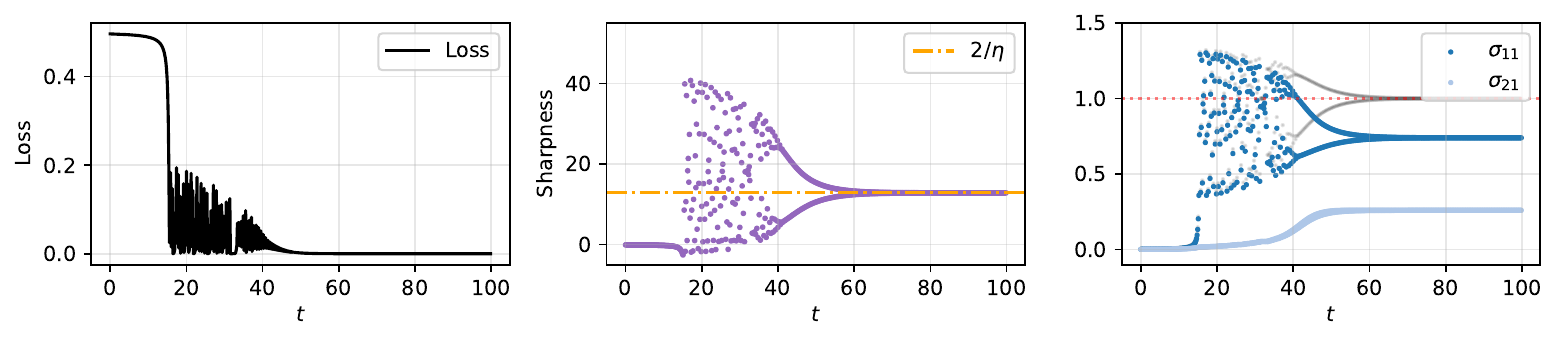} 
        \end{subfigure} \\
        
        \rotatebox[origin=c]{90}{$\quad\eta = 2/\lambda_1^{\min}$} & 
        \begin{subfigure}[b]{0.9\textwidth}
            \centering
            \includegraphics[width=\linewidth, valign=c]{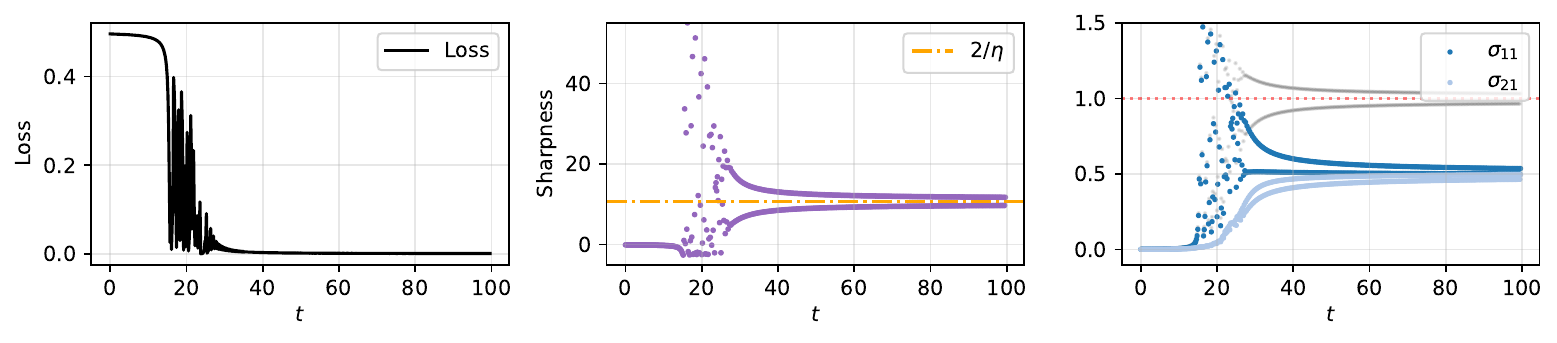} 
        \end{subfigure}
    \end{tabular}
    \caption{Training dynamics of GD with multi-path DLNs ($L=20$, $H=2$, $\sigma_{\star1} =1$) across different learning rates $\eta$. The evolution of pathway singular values is shown in blue ($\sigma_{11}, h=1$), light blue ($\sigma_{21}, h=2$), and gray ($\sigma_1=\sigma_{11}+\sigma_{21}$). While small $\eta$ leads to a single-path solution, GD with a larger learning rate ($\eta > 2/S_1$) drives the system toward a more balanced configuration where singular values are distributed across pathways. Consequently, the sharpness $\lambda_{\max}$ (purple dots) is suppressed and converges to $2/\eta$.}
    \label{fig:Symmetric_breaking_rebalancing}
\end{figure*}

\begin{theorem}[Sharpness Reduction via Parallelism]
\label{thm:sharpness_reduction}
Assume $L > 2$. Among all global minima satisfying the constraint $\sum_{h=1}^H \sigma_{hi} = \sigma_{\star i}$, the mode-wise sharpness $\lambda_i$ is minimized when the target singular value is distributed equally across all $H$ pathways:
\begin{equation}
    \sigma_{1i} = \cdots = \sigma_{Hi} = \frac{\sigma_{\star i}}{H}.
\end{equation}
The minimal eigenvalue on the depth-balanced manifold is given by:
\begin{equation}
    \lambda_i^{\min} = L \, H^{\frac{2}{L}-1} \, \sigma_{\star i}^{2-\frac{2}{L}}.
\end{equation}
\end{theorem}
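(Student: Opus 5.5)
The plan is to use Proposition~\ref{prop:mode-eig} to turn the claim into a scalar constrained minimization. At a global minimum on the depth-balanced manifold, mode $i$ contributes $\lambda_i = L\sum_{h=1}^H \sigma_{hi}^{p}$ with $p := 2-\tfrac{2}{L}$. So we must minimize $f(x_1,\dots,x_H)=\sum_h x_h^{p}$ over the feasible set $\sum_h x_h = \sigma_{\star i}$. On the depth-balanced manifold each $\sigma_{hi}=(\sigma_{hi}^{(\ell)})^{L}$ is a power of a layer singular value, so we restrict to $x_h\ge 0$; this matches the small positive initialization, and for $\sigma_{\star i}\ge 0$ it is the relevant domain. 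If negative values had to be allowed for even $L$, we would replace $x_h^p$ by $|x_h|^p$; the bound below only gets stronger, since $\sum_h |x_h|\ge \sigma_{\star i}$.

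The key observation is that $L>2$ gives $p = 2-2/L \in (1,2)$. Hence $x\mapsto x^{p}$ is strictly convex on $[0,\infty)$, because its second derivative $p(p-1)x^{p-2}$ is positive for $x>0$ and the function is continuous at $0$. By Jensen's inequality (equivalently, the power-mean inequality),
\[
\frac{1}{H}\sum_{h=1}^H x_h^{p} \;\ge\; \Big(\frac{1}{H}\sum_{h=1}^H x_h\Big)^{p} = \Big(\frac{\sigma_{\star i}}{H}\Big)^{p}.
\]
Strict convexity makes equality hold iff all $x_h$ are equal, i.e. $x_h=\sigma_{\star i}/H$. Substituting gives
\[
\lambda_i^{\min} = L\,H\,(\sigma_{\star i}/H)^{2-2/L} = L\,H^{\frac{2}{L}-1}\sigma_{\star i}^{2-\frac{2}{L}},
\]
which is the stated value. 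The equal split is attained: with $W_{h\ell}=(\sigma_{\star i}/H)^{1/L}$ in mode $i$ for every layer, the point is a global minimum on the depth-balanced manifold. As a sanity check, the single-path point $x=(\sigma_{\star i},0,\dots,0)$ gives $L\sigma_{\star i}^{p}$, so the reduction factor is $H^{2/L-1}<1$.

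I do not expect a hard step. The only care needed is domain bookkeeping: justifying nonnegativity (or handling signs via absolute values), and noting that convexity fails at $L=2$ (where $p=1$ and $f$ is constant on the simplex) and reverses for $L=1$. This is exactly why the hypothesis $L>2$ is required.
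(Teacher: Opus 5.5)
Your proof is correct and matches the paper's: Proposition~\ref{prop:mode-eig} reduces the claim to minimizing $L\sum_h \sigma_{hi}^{p}$ with $p=2-2/L>1$ under $\sum_h\sigma_{hi}=\sigma_{\star i}$, and Jensen's inequality for the convex map $x\mapsto x^{p}$ gives both the equal split and $\lambda_i^{\min}=L\,H^{2/L-1}\sigma_{\star i}^{2-2/L}$. One small slip in your domain remark: on the depth-balanced manifold $\sigma_{hi}=a_h^{L}$ can be negative only for odd $L$ (never for even $L$), but your $|x_h|^{p}$ fix is valid regardless, since there $\lambda_i=L\sum_h a_h^{2L-2}=L\sum_h|\sigma_{hi}|^{2-2/L}$.
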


Theorem~\ref{thm:sharpness_reduction} implies a significant reduction in sharpness compared to a single-pathway model. Comparing the balanced multi-pathway sharpness to that of a single pathway ($\lambda_i^{\min}(H=1)$), we obtain the reduction factor:
\begin{equation}
    \frac{\lambda_i^{\min}}{\lambda_i^{\min}(H=1)} = H^{\frac{2}{L}-1} < 1.
\end{equation}
Since $2/L - 1 < 0$ for $L > 2$, increasing either $H$ or $L$ strictly decreases this ratio.

Under the conventional ordering $\sigma_{\star 1} \geq\sigma_{\star 2} \geq \cdots \geq \sigma_{\star d}$, the largest target mode dominates the sharpness: $\max_i \lambda_i^{\min}=\lambda_1^{\min}$.
The reduction factor $H^{2/L-1}$ applies every mode, so re-balancing benefits any mode $i$ whose single-path sharpness $S_i = L\sigma_{\star i}^{2-2/L}$ exceeds the stability threshold $2/\eta$ (Section~\ref{sec:dynamics}).

\section{Dynamics of GD Beyond Classical Stability}
\label{sec:dynamics}
We now study how multi-pathway DLNs learn when $\eta$ exceeds the stability threshold of the sharpest minima. 
The dynamics reflect two opposing forces: the architectural bias of GF toward ``winner-takes-all'' specialization, and the implicit bias of large-step GD toward flat minima. Small learning rates let the former dominate and reproduce GF's symmetry breaking; large learning rates let the latter override it, driving the network into a distinct \textit{re-balancing} phase.

\begin{figure*}
    \centering
    \begin{tabular}{ccc} 

        {$\quad\eta < 2/S_1$} & {$\quad\eta > 2/S_1$} & {$\quad\eta =2/\lambda_1^{\min}$} \\
        \begin{subfigure}[b]{0.3\textwidth}
            \centering
            \includegraphics[width=\linewidth, valign=c]{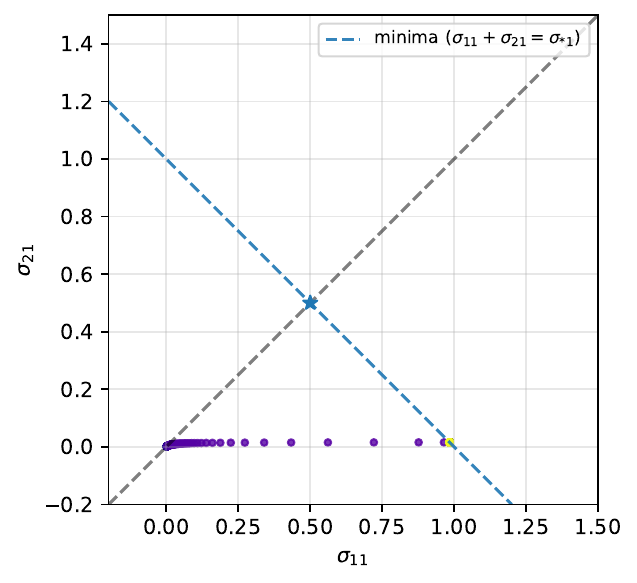} 
        \end{subfigure} 
        &
        \begin{subfigure}[b]{0.3\textwidth}
            \centering
            \includegraphics[width=\linewidth, valign=c]{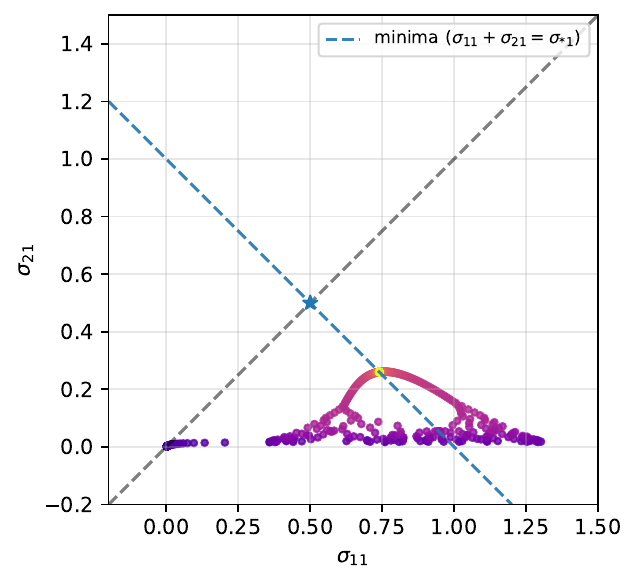} 
        \end{subfigure} 
        &
        \begin{subfigure}[b]{0.35\textwidth}
            \centering
            \includegraphics[width=\linewidth, valign=c]{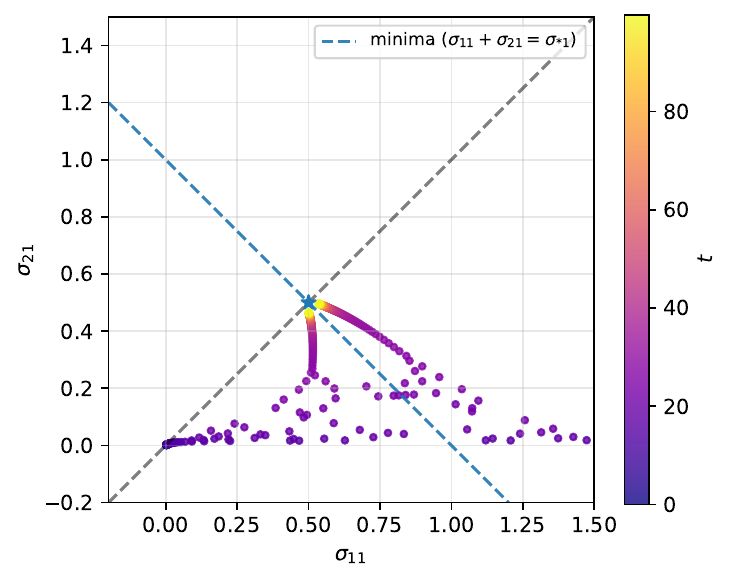} 
        \end{subfigure}
    \end{tabular}
    \caption{Trajectories of $(\sigma_{11},\sigma_{21})$ for different learning rates $\eta$. In the stable regime (left), the dominant pathway suppresses others. Beyond the stability threshold (middle, right), the pathways bifurcate into the more balanced minimum ($\lambda_{\max} < S_1$).}
    \label{fig:main_trajetory}
\end{figure*}

\subsection{Two Stages of Training with Large Learning Rate}
We focus on mode $i=1$ with homogeneous depth $L$, where $S_i=L \sigma_{\star  i}^{2 - 2/L}$ is the sharpness of the ``winner-takes-all'' solution carrying the entire target in a single pathway ( $\sigma_{11} = \sigma_{\star1}$ and $\sigma_{h1} = 0$ for $h \neq 1$).
To trigger symmetry breaking, we initialize each pathway with a slightly different scale ($\alpha_1 > \alpha_2> \cdots> \alpha_H$). 

\textbf{Small Learning Rate Regime ($\eta < 2/S_1$)}.
When the learning rate is small, the discrete GD dynamics closely approximate the continuous GF.
In this regime, small differences in initialization lead to the ``winner-takes-all'' symmetry breaking \citep{Shi2022learning}: the relative growth rate $\dot\sigma_{hi}/\dot\sigma_{ki} = (\sigma_{hi}/\sigma_{ki})^{L-1}$ amplifies infinitesimal initial asymmetries through the depth-$L$ exponent, so the pathway with the largest initialization for a given mode grows exponentially faster than the others and suppresses competing gradients.
As shown in the first row of Figure~\ref{fig:Symmetric_breaking_rebalancing}, the network converges to a dominant-path solution ($\sigma_{11} = \sigma_{\star 1}$) while other pathways stay near zero, yielding a sharp minimum ($\lambda_{\max} \approx S_1$).

\textbf{Large Learning Rate Regime ($\eta > 2/S_1$)}. Training with a learning rate large enough to destabilize the single-path solution exhibits two distinct phases:
\begin{itemize}
    \item \textbf{Phase 1: Symmetry Breaking.} Early in training, the singular values are small, so local sharpness stays below $2/\eta$. The dynamics track GF, and the same depth-$L$ amplification mechanism that drives the small-LR regime concentrates the signal in the dominant pathway.
    \item \textbf{Phase 2: Re-balancing (Drift via Oscillations).} As the $\sigma_{11}$ approaches $\sigma_{\star1}$, the sharpness rises near $S_1$ and eventually exceeds $2/\eta$. GD can no longer settle into this sharp minimum and enters the Edge of Stability regime, where residuals oscillate around the loss surface rather than decay. These oscillations are not noise: they drive a systematic transfer of signal from the dominant pathway to the others, reducing sharpness toward the stability threshold.
\end{itemize}
Appendix~\ref{app:path_gap_decay} identifies the mechanism driving this phase. 
On the SVS depth-balanced manifold, the dynamics reduce to a scalar recursion per mode and pathway.
Under gradient flow,a conserved quantity fixes the leaf coordinate $z$ that measures pathway imbalance, and the sharpness at any zero-loss point grows quadratically in $\|z\|$.
Large-step GD at the Edge of Stability breaks the conservation law: under local self-stabilization, the residual alternates in sign, and $\|z\|$ decays over pairs of steps. 
The oscillations thus act as a directed drift toward flatter, more balanced minima.

This drift acts only while the sharpness exceeds $2/\eta$, so the trajectory re-balances until the $\sigma_{hi}$ are distributed enough to enter the
stability window $$2/S_1 < \eta < 2/\lambda_1^{\min}.$$
By Theorem~\ref{thm:sharpness_reduction}, this balanced configuration is a significantly flatter minimum, $\lambda_1^{\min}= H^{2/L-1}S_1$. 
At $\eta=2/\lambda^{\min}_1$ the stopping boundary coincides with the fully balanced minimum; at intermediate rates the trajectory halts at a partially balanced configuration.
Instability at the Edge of Stability therefore acts not as noise but as a directed force that selects flatter, pathway-balanced minima (Figures~\ref{fig:main_trajetory}).

\begin{figure}[!ht]
    \centering
    \includegraphics[width=\linewidth, valign=c]{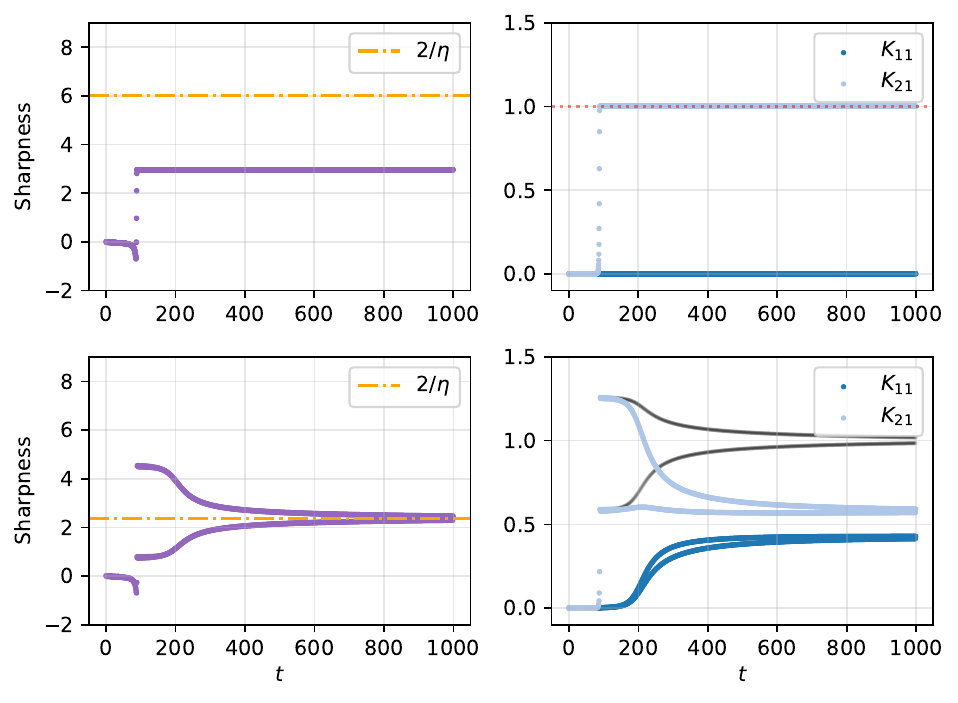}
    \caption{\textbf{Symmetry Breaking vs. Re-balancing in Nonlinear Networks.} Training dynamics of a two-pathway MLP ($L=3$, $H=2$, Tanh activation). Training with small $\eta$ (Top row) shows symmetry breaking and Training with larger $\eta=2/\lambda_1^{\min}$ (Bottom row) shows re-balancing phase.}
    \label{fig:MLP_Symmetric_breaking_rebalancing}
\end{figure}

\textbf{Nonlinear Setting.} While our theory assumes linearity, we observe the same transition in multi-pathway MLPs with Tanh activation (Figure~\ref{fig:MLP_Symmetric_breaking_rebalancing}). 
Following \citet{Shi2022learning}, we use the canonical basis for inputs $x$ and initialize the weights from $\mathcal{N}(0,\frac{0.02}{n_\text{in}+n_\text{out}})$.
We monitor learning dynamics via the projection $K_{h1} = u_1^\top \Omega_h v_1$.
In the stable regime ($\eta \approx 1/S_1$), we recover the ``winner-takes-all'' symmetry breaking of linear networks. A large learning rate ($\eta \approx 2/\lambda_1^{\min}$) instead drives the system toward a flatter, balanced minimum ($K_{11} \approx K_{21}$). suggesting that the re-balancing mechanism extends to nonlinear architectures. Figure~\ref{fig:MLP_Symmetric_breaking_rebalancing_app} in Appendix~\ref{app:other_fig} shows results at additional learning rates; MLPs typically need slightly higher rates than linear networks to fully balance.

\subsection{Contrast with Depth-Wise Balancing} A fundamental distinction exists between the pathway re-balancing observed in this work and the depth-wise balancing analyzed in single-pathway networks. \citet{ghosh2025learning} demonstrate that large learning rates minimize the layer-wise balancing gap, which is contingent on explicit initialization asymmetries and serves only to rectify unbalanced initial conditions. In standard Deep Linear Networks (DLNs), layers naturally remain aligned throughout Gradient Flow (GF) trajectories due to conservation laws (e.g., $W_{\ell+1}W_{\ell+1}^\top - W_\ell^\top W_\ell = \text{const}$). The depth-wise balancing is only prominent with highly unbalanced initializations $W_{\ell+1}W_{\ell+1}^\top - W_\ell^\top W_\ell \gg0$. 
However, for balanced or zero initializations ($W_{\ell+1}W_{\ell+1}^\top - W_\ell^\top W_\ell \approx0$), GF and GD trajectories remain qualitatively indistinguishable regarding their balancing dynamics.

In contrast, symmetry breaking in multi-pathway architectures is a spontaneous phenomenon. As shown by \citet{Shi2022learning}, GF dynamics inherently amplify infinitesimal asymmetries, driving the system toward a dominant-path solution (``winner-takes-all'').
Unlike depth-wise imbalance, which depends on specific initial conditions, pathway imbalance is the natural attractor of continuous-time dynamics. Our results demonstrate that discrete-time instability actively counteracts this intrinsic bias.
Specifically, it destabilizes the sharp, single-path minima favored by GF, thereby forcing the network into a balanced configuration that continuous dynamics cannot achieve.

\subsection{Heterogeneous Depth between Pathways}
\label{sec:hetero_depth}
\begin{figure}[ht]
    \centering
    \includegraphics[width=1\linewidth]{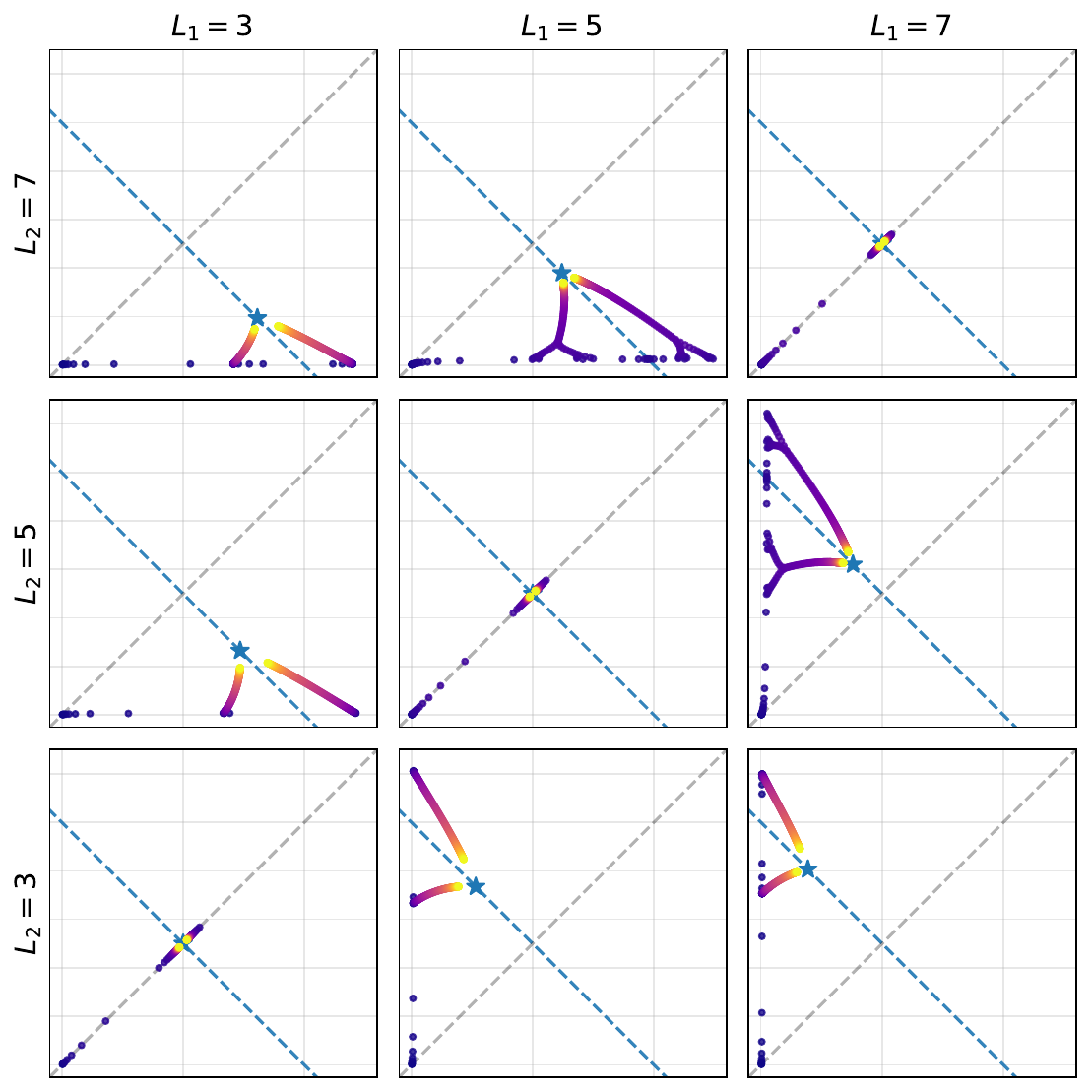}
    \caption{Trajectories of $(\sigma_{11},\sigma_{21})$ for heterogeneous depth models with learning rate $\eta^*=2/\lambda_1^{\min}$. See Figure~\ref{fig:main_trajetory} together.}
    \label{fig:rebalancing_hetrogeneous}
\end{figure}
So far we assumed homogeneous depth $L$, where only initialization differences trigger symmetry breaking. 
We now allow heterogeneous depths $L_h$ across pathways. 
Different depths break symmetry structurally: even under balanced initialization, gradient magnitudes scale with depth, so pathways accelerate at different rates.

We first extend the sharpness–allocation relationship to heterogeneous depths.

\begin{proposition}[Sharpness with Heterogeneous Depth]
\label{prop:heterogeneous_sharpness}
Consider a multi-pathway network where pathway $h$ has depth $L_h$. On the depth-balanced manifold, the dominant eigenvalue of the Hessian for mode $i$ is given by:
\begin{equation}
    \lambda_i = \sum_{h=1}^H L_h \, \sigma_{hi}^{2 - \frac{2}{L_h}}.
\end{equation}
\end{proposition}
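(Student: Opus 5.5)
The plan is to repeat the computation behind Proposition~\ref{prop:mode-eig}, letting the depth vary with $h$. On the SVS set the loss decouples across modes as in \eqref{eq:mode_loss}. So I fix a mode $i$ and restrict to the scalar layer values $s_{h\ell}:=\sigma_{hi}^{(\ell)}$. I then show that the Hessian block for mode $i$, at a zero-loss point, is rank one, and read off its eigenvalue. The block is taken with respect to all entries of the $W_{h\ell}$, but it lives in the directions $u$ that move $s_{h\ell}$, namely $Q_{h\ell+1} e_i e_i^\top Q_{h\ell}^\top$.

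The mode-$i$ loss is
\[
\mathcal{L}_i=\tfrac12 r_i^2,\qquad r_i=\sum_h \prod_{\ell=1}^{L_h} s_{h\ell}-\sigma_{\star i}.
\]
By the Gauss--Newton decomposition,
\[
\nabla^2\mathcal{L}_i = \nabla r_i\,\nabla r_i^\top + r_i\,\nabla^2 r_i .
\]
At a global minimum $r_i=0$, so only the outer product $\nabla r_i\nabla r_i^\top$ survives. This matrix is rank one, with single nonzero eigenvalue $\|\nabla r_i\|^2$. To justify that the full-parameter Hessian contribution of mode $i$ is exactly this, I compute the directional derivative of $u_{\star i}^\top M v_{\star i}$ with respect to $W_{h\ell}$ in full matrix space. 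It equals
\[
\Big(\prod_{k>\ell}W_{hk}\Big)^{\!\top} u_{\star i}\, v_{\star i}^\top \Big(\prod_{k<\ell}W_{hk}\Big)^{\!\top}.
\]
On the SVS set this collapses to $\big(\prod_{k\ne\ell}s_{hk}\big)\, Q_{h\ell+1}e_ie_i^\top Q_{h\ell}^\top$. That is a unit-Frobenius-norm direction scaled by $\prod_{k\neq \ell} s_{hk}$, so the full gradient norm matches the scalar one.

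Next I evaluate on the depth-balanced manifold \eqref{eq:depth_balanced}, where $s_{h\ell}=\sigma_{hi}^{1/L_h}$. Each partial derivative is
\[
\frac{\partial r_i}{\partial s_{h\ell}}=\prod_{k\neq\ell}s_{hk}=\sigma_{hi}^{(L_h-1)/L_h}.
\]
Summing squares over the $L_h$ layers of pathway $h$ gives $L_h\,\sigma_{hi}^{2-2/L_h}$. Summing over pathways gives
\[
\lambda_i=\|\nabla r_i\|^2=\sum_h L_h\,\sigma_{hi}^{2-2/L_h},
\]
which is the claim. Since the expression is symmetric in how it handles each pathway, nothing beyond per-pathway bookkeeping of $L_h$ changes relative to the homogeneous case.

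The main obstacle is being careful that the statement concerns the dominant eigenvalue of the full Hessian, not just of the scalar reduction, and that the other modes' blocks do not interfere. Off-mode gradient directions $e_ie_j^\top$ with $i\ne j$ contribute cross terms involving products of singular values. At a global minimum their second-order contribution comes only through residuals or through Gauss--Newton terms for entries $j\ne i$. I would handle this by noting that the full Hessian at a zero-loss point is $J^\top J$, where $J$ is the Jacobian of $\mathrm{vec}(M)$. The Jacobian rows for the diagonal entries $(i,i)$ in the target basis are mutually orthogonal, because they involve disjoint rank-one directions $e_ie_i^\top$. Their Gram entries therefore give exactly the $\lambda_i$ above. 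The off-diagonal entries $(i,j)$ form separate blocks. For the stated mode-$i$ contribution, it suffices to exhibit the rank-one block with eigenvalue $\lambda_i$, as the paper does for Proposition~\ref{prop:mode-eig}.
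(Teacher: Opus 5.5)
Your proposal is correct and follows the paper's argument. At a zero-loss point the mode-$i$ Hessian reduces to the rank-one Gram matrix $J_i^\top J_i$, and summing the squared depth-balanced partials $\sigma_{hi}^{1-1/L_h}$ over the $L_h$ layers of each pathway gives $\lambda_i=\sum_h L_h\sigma_{hi}^{2-2/L_h}$; your extra check is a useful strengthening the paper skips: the full-matrix gradient of $u_{\star i}^\top M v_{\star i}$ with respect to $W_{h\ell}$ is $\prod_{k\neq\ell}s_{hk}$ times the unit-norm direction $Q_{h\ell+1}e_ie_i^\top Q_{h\ell}^\top$, and these directions are orthogonal across entries, which justifies working in scalar coordinates.
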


Unlike the homogeneous case, equal splitting ($\sigma_{hi} = \sigma_{\star i}/H$) no longer minimizes sharpness. The Lagrange conditions for minimizing $\lambda_i$ subject to $\sum_h \sigma_{hi} = \sigma_{\star i}$ give:
\begin{equation}
    2(L_h - 1) \sigma_{hi}^{1 - \frac{2}{L_h}} = \mu, \quad \forall h \in [H], \label{eq:langrange}
\end{equation}
where $\mu$ is the Lagrange multiplier enforcing the constraint.
The condition equalizes $\partial\lambda_i/\partial\sigma_{hi}$ across pathways. The shallow pathway still carries the largest share of $\sigma_{\star i}$ but the optimum spreads more mass into deeper pathways than the ``winner-takes-all'' solution.
These equations lack a closed form, but we can solve them numerically for any $\{L_h\}$ to obtain the optimal allocation $\sigma_{hi}^{\text{opt}}$ and minimum sharpness $\lambda_i^{\min}$.

\textbf{Kinetic bias vs. stability bias.} Heterogeneous depth provides a stringent test of our theory. Under GF (or small $\eta$), the shallower pathway learns faster and absorbs the signal \citep{saxe2022neural}, so structural asymmetry alone drives ``winner-takes-all'' toward the shallowest path.
But Proposition 5.1 shows that concentrating the full signal in a shallow pathway yields a sharper minimum than concentrating it in a deep one, since $2 - 2/L_h$ shrinks as $L_h$ decreases. 
GF thus favors shallow paths kinetically, while large-step GD favors distributed paths for stability—the two forces conflict.

Training with large $\eta$ resolves this conflict in two phases:
\begin{itemize}
    \item \textbf{Early Phase (Structural Symmetry Breaking):} Near initialization, the per-pathway growth rate scales as $\sigma_{hi}^{(L_h - 1)/L_h}$, which is larger for smaller $L_h$ when $\sigma_{hi} \ll 1$. The shorter pathway (e.g., $L_1=3$) receives larger gradients and grows exponentially faster than the deeper pathway (e.g., $L_2=7$), driving the system toward a shallow-path-dominated solution.
    \item \textbf{Late Phase (Re-balancing):} Once the shallow path captures most of the signal, local sharpness climbs past $2/\eta$and the path destabilizes. The induced oscillations (Section 5.1) keep the residual nonzero, which drives the deeper, flatter pathway upward until the configuration reaches the stability window.
\end{itemize}

Figure~\ref{fig:rebalancing_hetrogeneous} shows the dynamics for $(L_1, L_2) \in \{3, 5, 7\}^2$ at $\eta = 2/\lambda_1^{\min}$ (from our numerical solution).
We initialize all pathways with the same scale to isolate architectural effects. 
The trajectories converge to the numerically predicted equilibrium that equalizes marginal sharpness across pathways. 
Large-step GD therefore overrides structural asymmetry and restores a balanced representation, even when architecture favors a single pathway. See Figure~\ref{fig:trajectory_max} in Appendix~\ref{app:other_fig} for results across a wider depth range $(L_1, L_2) \in \{3, 4, 5, 6, 7\}^2$.

\section{A Worst-Case Return Threshold from a Deep Linear Chain (Role of Depth)}
\label{sec:wcr}
 
Re-balancing demands a large step size: to destabilize the sharp single-path minimum, $\eta$ must exceed $2/S_1$, where $S_1=L\sigma_{\star1}^{2-2/L}$ is the dominant mode's sharpness; pushing $\eta$ higher unseats more modes, balancing the top $p$ (made precise below). But $\eta$ has a ceiling. Re-balancing first passes through a transient single-path phase, and an overlarge step there drives the dominant pathway's singular value across zero---a sign flip that breaks the SVS description and aborts re-balancing. The failure is worst when the signal concentrates in a single pathway: a \emph{deep linear chain}, from which we derive a depth-dependent ceiling $\eta_{\mathrm{WCR}}$. The resulting overshoot-but-return window, measured relative to the onset scale, widens with depth, so deeper networks tolerate the large steps required for re-balancing.
 
\paragraph{The binding constraint: the transient single pathway.}
During symmetry breaking (Section~\ref{sec:dynamics}), the dominant mode concentrates in one pathway while the others remain near zero. This dominant pathway is the sharpest component of the landscape and therefore sets the tightest stability constraint. Hence, a step size large enough to induce re-balancing can drive it into violent oscillations. To keep the SVS description valid, the dominant coordinate must remain positive: crossing $u_{\star1}^\top \Omega_h v_{\star1}\le 0$ corresponds to a singular-vector sign flip across the origin saddle. Thus, the binding question is whether the dominant coordinate survives its overshoot.
 
\paragraph{Reduction to a deep linear chain.}
On the SVS depth-balanced manifold, per-layer scalars $a_{h1}\ge 0$ carry mode $1$, with end-to-end value $\sigma_{h1}=a_{h1}^{L}$ (Appendix~\ref{app:wcr}). When the mode concentrates in a single pathway, the contributions of the other pathways become negligible in the residual, and the dynamics collapse to one scalar—the \emph{deep-chain map}.
\begin{equation}
w_{t+1}=f_\eta(w_t):=w_t-\eta\,w_t^{L-1}\bigl(w_t^{L}-\sigma_{\star1}\bigr),
\label{eq:deep_chain_map}
\end{equation}
where $w_t \equiv a_{11}(t)$ with target fixed point $w_\star=\sigma_{\star1}^{1/L}$. 
The $w^{L-1}$ prefactor makes the update highly sensitive to the current per-layer scale: once the iterate overshoots $w_\star$, the next correction can be large enough to cross the origin.
 
\paragraph{The worst-case return threshold.}
We ask that the chain recover from \emph{any} overshoot without crossing the origin.
\begin{definition}[Worst-case return threshold]
\label{def:wcr}
With $w_\star=\sigma_{\star1}^{1/L}$ and $f_\eta$ from \eqref{eq:deep_chain_map}, a stepsize $\eta$ is \emph{two-step return-safe} if every overshoot from $(0, w_\star)$ returns to the interval $(0, w_\star)$ after one further step:
\begin{equation*}
\forall\, w\in(0,w_\star):\, f_\eta(w)>w_\star\ \Longrightarrow\ 0<f_\eta^{\circ 2}(w)<w_\star.
\end{equation*}
The \emph{worst-case return threshold} $\eta_{\mathrm{WCR}}(L,\sigma_{\star1})$ is the supremum of all return-safe stepsizes.
\end{definition}
On the relevant overshoot branch, the binding case is the largest overshoot, produced by the maximizer of $f_\eta$ (Appendix~\ref{app:wcr}); if that worst overshoot returns, every smaller one does too. Figure~\ref{fig:wcr_mechanism} shows the knife-edge: at $\eta\approx 0.97\,\eta_{\mathrm{WCR}}$ the iterate bounces back into $(0,w_\star)$, while at $\eta\approx 1.03\,\eta_{\mathrm{WCR}}$ it overshoots the origin into the sign-flip region.
 
\begin{figure}[t]
\centering
\includegraphics[width=0.92\linewidth]{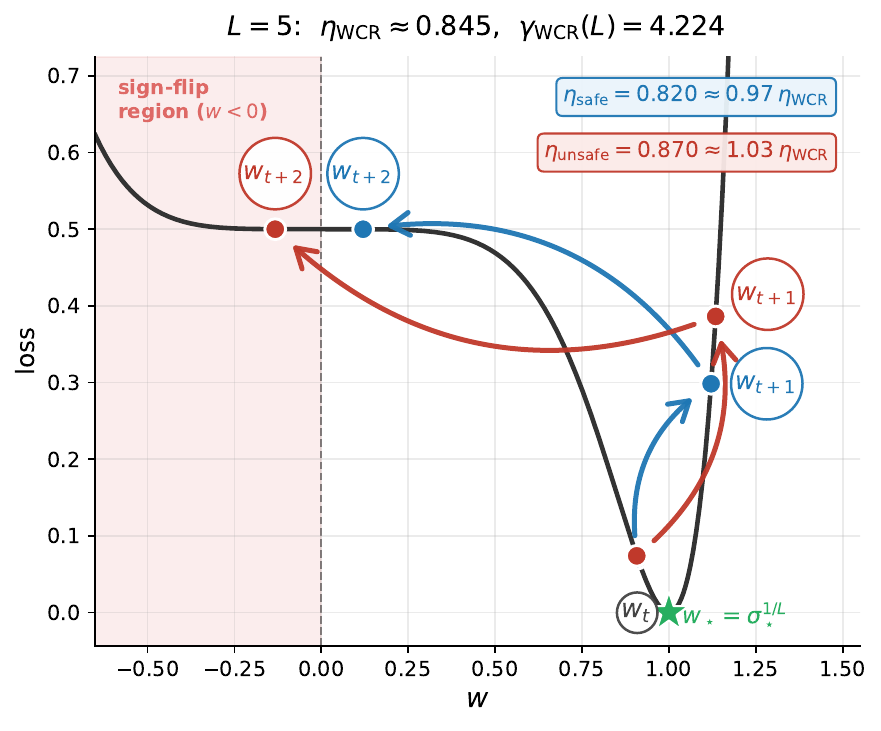}
\caption{Loss $\tfrac12\,(w^{L}-\sigma_{\star1})^2$ of the deep linear chain ($L=5$, $\sigma_{\star1}=1$; $w$ is the per-layer scale, minimizer $w_\star=\sigma_{\star1}^{1/L}=1$). GD follows the map~\eqref{eq:deep_chain_map}. From $w_t$, the maximizer of $f_\eta$ producing the largest overshoot $w_{t+1}$, the second step $w_{t+2}$ either returns to $(0,w_\star)$ (blue, $\eta\approx0.97\,\eta_{\mathrm{WCR}}$) or crosses the origin into $w<0$ (red, $\eta\approx1.03\,\eta_{\mathrm{WCR}}$). $\eta_{\mathrm{WCR}}$ is the knife-edge between the two.}
\label{fig:wcr_mechanism}
\end{figure}
 
\paragraph{Closed-form scaling and the role of depth.}
The threshold separates into a curvature scale and a depth factor. Writing $\gamma:=\eta S_1$---so the overshoot-onset scale $\eta_1=1/S_1$ is $\gamma=1$ and the classical stability bound $\eta_{\mathrm{stable}}=2/S_1$ is $\gamma=2$---we have
\begin{equation}
\eta_{\mathrm{WCR}}(L,\sigma_{\star1})=\frac{\gamma_{\mathrm{WCR}}(L)}{S_1},
\label{eq:eta_wcr_factorized}
\end{equation}
with $\gamma_{\mathrm{WCR}}(L)$ depending only on depth.
\begin{theorem}[Existence and computation]
\label{thm:wcr_characterization}
For every integer $L\ge 2$ there is a unique $\gamma_{\mathrm{WCR}}(L)>2$ for which~\eqref{eq:eta_wcr_factorized} satisfies Definition~\ref{def:wcr}. It is the root of a single scalar equation, computable by bisection (Appendix~\ref{app:wcr}).
\end{theorem}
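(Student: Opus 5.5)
We reduce the deep-chain map to a scale-free form and then run a monotonicity argument. Substitute $w = w_\star x$ with $w_\star=\sigma_{\star1}^{1/L}$. Since $S_1 = L\sigma_{\star1}^{2-2/L}=L w_\star^{2L-2}$, the map \eqref{eq:deep_chain_map} becomes
\begin{equation*}
x_{t+1}=g_\gamma(x_t):=x_t-\frac{\gamma}{L}\,x_t^{L-1}\bigl(x_t^{L}-1\bigr),\qquad \gamma=\eta S_1 .
\end{equation*}
All dependence on $\sigma_{\star1}$ disappears. Definition~\ref{def:wcr} then reads: for all $x\in(0,1)$ with $g_\gamma(x)>1$, we need $0<g_\gamma^{\circ2}(x)<1$. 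This already gives the factorization \eqref{eq:eta_wcr_factorized}, with $\gamma_{\mathrm{WCR}}$ depending only on $L$.

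\textbf{Step 1: the binding overshoot.} On $(0,1)$ the map satisfies $g_\gamma(x)>x$, and $g_\gamma'(x)=1+\frac{\gamma}{L}\bigl((L-1)x^{L-2}-(2L-1)x^{2L-2}\bigr)$. This has a unique interior critical structure. I would show $g_\gamma$ is unimodal on $[0,1]$, so it has a unique maximizer $x_m(\gamma)$ whenever $\gamma>1$. Indeed, $g'_\gamma(1)=1-\gamma<0$ exactly when $\gamma>1$, which matches the overshoot-onset scale. Let $M(\gamma)=g_\gamma(x_m)$ be the largest overshoot. The overshoot set $\{x: g_\gamma(x)>1\}$ maps into $(1,M(\gamma)]$.

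Next, on $(1,\infty)$ the map $g_\gamma$ is strictly decreasing, since both terms of $g'_\gamma$ are negative for $x\ge1$ once $\gamma\ge1$. Also $g_\gamma(y)<1$ there. So the return condition holds for every overshoot iff $g_\gamma(M(\gamma))>0$. This reduces Definition~\ref{def:wcr} to the single scalar equation
\begin{equation*}
\Phi_L(\gamma):=g_\gamma\bigl(M(\gamma)\bigr)=0,
\end{equation*}
where $x_m(\gamma)$ solves the polynomial equation $g'_\gamma(x)=0$.

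\textbf{Step 2: monotonicity and the root.} I would show that $M(\gamma)$ is increasing, by the envelope theorem: $\partial_\gamma g_\gamma(x_m) = -x_m^{L-1}(x_m^L-1)/L>0$. Then $\Phi_L(\gamma) = M - \frac{\gamma}{L}M^{L-1}(M^L-1)$ is strictly decreasing in $\gamma$ wherever $M>1$, because both the $\gamma$ factor and the $M$-dependence push it down. (One has to check that $y\mapsto y-\frac{\gamma}{L}y^{L-1}(y^L-1)$ decreases on $y>1$, which is the Step 1 fact.)

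At $\gamma=2$ I would verify $\Phi_L(2)>0$. The tool is an explicit bound $M(2)\le y_0$, where $y_0$ satisfies $g_2(y_0)>0$; this can be checked using $x_m<1$ and $M(2)-1\le$ a small quantity. As $\gamma\to\infty$, $M(\gamma)\to\infty$ and $\Phi_L\to-\infty$. Continuity and strict monotonicity then give a unique root $\gamma_{\mathrm{WCR}}(L)>2$. The sign change makes bisection valid. Return-safety holds exactly for $\gamma<\gamma_{\mathrm{WCR}}$, so the supremum equals the root.

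\textbf{Main obstacle.} The hardest step is the inequality $\Phi_L(2)>0$ uniformly over integers $L\ge2$, i.e.\ showing the worst overshoot at the classical stability limit still returns positively. I would first compute $L=2$ directly; there $x_m$ is explicit from a quadratic in $x^2$. For general $L$, I would bound $x_m$ from the critical-point equation, $(2L-1)x_m^{2L-2}-(L-1)x_m^{L-2}=L/2$. From this I would bound $M(2)\le 1+\frac{2}{L}x_m^{L-1}(1-x_m^L)$, and then compare $\frac{2}{L}M^{L-1}(M^L-1)$ against $M$ using $(1+\epsilon)^{L}\le e^{L\epsilon}$. A secondary subtlety is the unimodality claim in Step 1, which I would handle by writing $g'_\gamma$ as a function of $u=x^{L}$ and using sign changes (Descartes' rule).
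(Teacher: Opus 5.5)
Your proposal is correct, and its first half is the paper's argument: the normalization $x=w/w_\star$ (the paper uses $\alpha=\gamma/L$), the decrease of $g_\gamma$ on $[1,\infty)$ via $g_\gamma'(x)\le 1-\gamma$, and the reduction of Definition~\ref{def:wcr} to positivity of $g_\gamma$ at the worst overshoot are exactly Lemmas~\ref{lem:wcr_monotone_right} and~\ref{lem:wcr_reduce_to_gvmax}.

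\textbf{Where the routes diverge.} The paper eliminates the step size through the boundary system $g'(c)=0$, $g(g(c))=0$. This gives the algebraic equation~\eqref{eq:wcr_root_equation} in $y=c^L$, from which it reads off $\gamma_{\mathrm{WCR}}=Ly^{2/L-1}/B(y)$. That equation avoids a nested solve for the maximizer, and it is the form used for the $\Theta(\log L)$ asymptotics. You instead keep $\gamma$ as the unknown. You show $\Phi_L$ is continuous and strictly decreasing on $\gamma>1$, with $\Phi_L(2)>0$ and $\Phi_L\to-\infty$.

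\textbf{What your route buys.} The paper never proves three things: that its root is unique (this is only asserted in Definition~\ref{def:app_alpha_wcr}), that bisection brackets a sign change, or that $\gamma_{\mathrm{WCR}}>2$ (only the numerical table supports this). Your argument supplies all three. Moreover, $y\mapsto y^{2/L-1}/B(y)$ is a strictly decreasing bijection of $\bigl(\tfrac{L-1}{2L-1},1\bigr)$ onto $(1/L,\infty)$, so your monotonicity also certifies that the paper's $y$-equation has exactly one root.

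\textbf{Your ``main obstacle'' closes with the crudest bound you mention.} Use $x_m^{L-1}(1-x_m^L)\le\max_{[0,1]}x^{L-1}(1-x^L)=\tfrac{L}{2L-1}\bigl(\tfrac{L-1}{2L-1}\bigr)^{(L-1)/L}$. This gives $M(2)-1\le\epsilon_L:=\tfrac{2}{2L-1}\bigl(\tfrac{L-1}{2L-1}\bigr)^{(L-1)/L}$. Also, $\Phi_L(2)>0$ is equivalent to $\tfrac{2}{L}M^{L-2}(M^L-1)<1$, whose left side is increasing in $M>1$. Now check cases:
\begin{itemize}
\item $L=2$: the condition is $M^2<2$, and $(1+\epsilon_2)^2\approx 1.92$.
\item $L=3$: the left side is at most about $0.65$.
\item $L\ge 4$: $\theta:=L\epsilon_L\le\tfrac{8}{7}\,2^{-3/4}<0.68$, and $\tfrac{2}{L}e^{\theta}(e^{\theta}-1)<0.97$.
\end{itemize}
For $L=2$ everything is explicit: $M^2=(\gamma+2)^3/(27\gamma)$, and $\gamma_{\mathrm{WCR}}(2)=3\sqrt{3}-2\approx 3.196$, matching the table.

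\textbf{Two small fixes.} First, the envelope theorem is unnecessary. For each $x\in(0,1)$, $g_\gamma(x)$ is strictly increasing in $\gamma$, so $M(\gamma)$ is as well. Second, say explicitly that for $\gamma\le 1$ there is no overshoot, since $g_\gamma'\ge 1-\gamma\ge 0$ on $[0,1]$. Return-safety then holds vacuously there, and the safe set is exactly $\gamma<\gamma_{\mathrm{WCR}}$.
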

Since $\gamma_{\mathrm{WCR}}(L)>2$, the safe ceiling always exceeds the classical bound $\eta_{\mathrm{stable}}=2/S_1$: the chain tolerates steps beyond the classical linear-stability threshold.
The margin over the onset scale is exactly this constant,
\begin{equation}
\frac{\eta_{\mathrm{WCR}}}{\eta_1}=\gamma_{\mathrm{WCR}}(L),
\label{eq:ratio_etaWCR_eta1}
\end{equation}
independent of the target $\sigma_{\star1}$.
\begin{proposition}[Growth with depth]
\label{prop:wcr_growth}
The normalized WCR ratio $\gamma_{\mathrm{WCR}}(L)=\eta_{\mathrm{WCR}}/\eta_1$ grows unboundedly with depth and admits
the asymptotic scaling
\begin{equation}
\gamma_{\mathrm{WCR}}(L) = \Theta(\log L).
\end{equation}
In particular, deeper chains admit a strictly larger window of ``overshoot-but-return'' learning rates:
\begin{align}
\eta_1 < \eta &< \eta_{\mathrm{WCR}}(L,\sigma_{\star1})\nonumber \\ 
\text{with}\quad \frac{\eta_{\mathrm{WCR}}}{\eta_1}&=\gamma_{\mathrm{WCR}}(L)\uparrow \ \text{as }L\uparrow,
\end{align}
so that the maximal safe overshoot amplitude (in the worst-case sense of Definition~\ref{def:wcr}) increases with depth.
\end{proposition}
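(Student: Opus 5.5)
The plan is to reduce the worst-case return condition to an explicit scalar inequality in $(\gamma,L)$ and then squeeze $\gamma_{\mathrm{WCR}}(L)$ between $c_1\log L$ and $C_2\log L$ for large $L$.

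\textbf{Normalization.} Write $w=w_\star x$ with $w_\star=\sigma_{\star1}^{1/L}$. Since $S_1=L\,w_\star^{2L-2}$, the map \eqref{eq:deep_chain_map} becomes the $\sigma_{\star1}$-free map
\begin{equation*}
g_\gamma(x)=x+\frac{\gamma}{L}\,x^{L-1}\bigl(1-x^{L}\bigr),\qquad \gamma=\eta S_1 .
\end{equation*}
Consistent with Theorem~\ref{thm:wcr_characterization}, return-safety is therefore a property of $(\gamma,L)$ alone. Following the remark after Definition~\ref{def:wcr}, the binding case is the maximal overshoot $y_\gamma=\max_{x\in(0,1)}g_\gamma(x)$. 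For $y>1$ the return $g_\gamma(y)<1$ is automatic, so the only constraint is positivity:
\begin{equation*}
R_L(\gamma):=\frac{\gamma}{L}\,y_\gamma^{L-2}\bigl(y_\gamma^{L}-1\bigr)<1 .
\end{equation*}
I will take from Theorem~\ref{thm:wcr_characterization} that $R_L$ is increasing in $\gamma$, so that $\gamma_{\mathrm{WCR}}(L)$ is its unique crossing of $1$. This is plausible because $y_\gamma$ is increasing in $\gamma$. It then suffices to show two things: $R_L(c_1\log L)<1$, and $R_L(C_2\log L)>1$, for all large $L$.

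\textbf{Asymptotics of the worst overshoot.} Substitute $x=1-u/L$, so that $x^L=e^{-u}(1+O(u^2/L))$. Then
\begin{equation*}
g_\gamma(x)=1+\frac{1}{L}\Bigl[\gamma e^{-u}\bigl(1-e^{-u}\bigr)-u\Bigr]+O\!\Bigl(\frac{(1+\gamma)u^2}{L^2}\Bigr).
\end{equation*}
This gives $y_\gamma=1+v_\gamma/L+o(1/L)$ with
\begin{equation*}
v_\gamma=\max_{u>0}\Bigl[\gamma e^{-u}\bigl(1-e^{-u}\bigr)-u\Bigr].
\end{equation*}
The elementary bounds I need are
\begin{equation*}
\tfrac{\gamma}{4}-\log 2\;\le\; v_\gamma\;\le\;\tfrac{\gamma}{4},
\end{equation*}
where the lower bound comes from taking $u=\log 2$.

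I must also check that the maximizer of $g_\gamma$ really lies in the window $u=O(1)$, i.e.\ $1-x=O(1/L)$. For $x\le 1-K/L$ the increment $\tfrac{\gamma}{L}x^{L-1}(1-x^L)$ is at most $\tfrac{\gamma}{L}e^{-K+o(1)}$, while the loss $1-x\ge K/L$ is larger; so such $x$ cannot overshoot by more than the window does. With $\gamma=O(\log L)$, the error terms are $O(\log^3 L/L^2)$, which is negligible.

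\textbf{Plugging into the return condition.} From $y_\gamma^L=e^{v_\gamma}(1+o(1))$ we get
\begin{equation*}
R_L(\gamma)=\frac{\gamma}{L}\,e^{v_\gamma}\bigl(e^{v_\gamma}-1\bigr)(1+o(1)),
\end{equation*}
which is of order $\gamma e^{\gamma/2}/L$.
\begin{itemize}
\item For $\gamma=c_1\log L$ with $c_1<2$: $R_L\lesssim c_1\log L\cdot L^{c_1/2-1}\to 0$, so the step is safe and $\gamma_{\mathrm{WCR}}\ge c_1\log L$.
\item For $\gamma=C_2\log L$ with $C_2>2$: using $v_\gamma\ge\gamma/4-\log 2$ gives $R_L\gtrsim L^{C_2/2-1}\to\infty$, so the step is unsafe and $\gamma_{\mathrm{WCR}}\le C_2\log L$.
\end{itemize}
Together these give $\gamma_{\mathrm{WCR}}(L)=\Theta(\log L)$, with leading constant near $2$. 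Unboundedness follows immediately. Strict increase in $L$ for every $L$, as opposed to only eventually, would need a separate argument: either compare $R_L$ and $R_{L+1}$ at fixed $\gamma$ via the normalized map, or verify small $L$ numerically with the bisection of Theorem~\ref{thm:wcr_characterization}.

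\textbf{Main obstacle.} The hard step is making the overshoot asymptotics rigorous uniformly in $\gamma$ up to $C\log L$. This means justifying replacement of $(1-u/L)^L$ by $e^{-u}$ inside both $y_\gamma$ and the exponentiated quantity $y_\gamma^{2L}$, where an $O(1/L)$ error in $y_\gamma$ becomes a multiplicative $e^{O(1)}$ factor in $R_L$. My approach is to avoid exact asymptotics and use one-sided inequalities: $\log(1-s)\le -s$ for the lower bounds, and $\log(1-s)\ge -s-s^2$ for $s\le 1/2$ for the upper bounds. Constant-factor slack of this size is absorbed by the $L^{\pm\epsilon}$ margins in the two bullets above.
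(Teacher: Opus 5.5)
Your argument is correct, and it reaches $\gamma_{\mathrm{WCR}}(L)=\Theta(\log L)$ by a different route from the paper's.

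\textbf{The paper's route.} It works with the boundary system $g_\alpha'(c)=0$, $g_\alpha(g_\alpha(c))=0$. It eliminates $\alpha$ to obtain a scalar root equation in $y=c^L$, then expands near $y=1/2$ via $t=2y-1$, getting $e^{1/t}\asymp Lt$ and $\gamma_{\mathrm{WCR}}\sim 2/t$. This requires the a priori localization $t\downarrow 0$, $Lt\to\infty$ of the root, which the paper asserts rather than proves. In return it yields the refined form $2\log L-2\log\log L+O(1)$.

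\textbf{Your route.} You keep $\gamma$ as the unknown and analyze the safety functional $R_L(\gamma)$ directly through the continuum limit $x=1-u/L$, so the worst overshoot is carried by the variational constant $v_\gamma$. The crude bounds $\gamma/4-\log 2\le v_\gamma\le\gamma/4$ already force $\gamma_{\mathrm{WCR}}(L)/\log L\to 2$. You never need to locate the maximizer $c$ or the root $y$; your maximizing $e^{-u}\to 1/2$ is exactly the paper's $y\to 1/2$. Your version is more self-contained; the paper's gives the second-order term.

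\textbf{Small remarks.} You need not borrow monotonicity of $R_L$ in $\gamma$ from Theorem~\ref{thm:wcr_characterization}, whose appendix proof does not actually establish uniqueness. Since $\partial_\gamma g_\gamma(x)=x^{L-1}(1-x^L)/L>0$ on $(0,1)$, $y_\gamma$ is nondecreasing in $\gamma$. Hence $R_L$ is a product of positive nondecreasing factors, with $\gamma/L$ strictly increasing. You need this only for the upper bound; the lower bound follows from the supremum definition alone.

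Also, the set of overshooting $x$ is $u\lesssim\log\gamma=O(\log\log L)$ rather than $u=O(1)$. Your error estimate already tolerates $u$ up to $O(\log L)$, so nothing changes.

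Finally, as you note, strict increase of $\gamma_{\mathrm{WCR}}(L)$ at every $L$ does not follow from the asymptotics. The paper supports this only with its numerical table, so that part of the statement rests on numerics in both versions. By monotonicity in $\gamma$, it is equivalent to checking $R_{L+1}(\gamma_{\mathrm{WCR}}(L))<1$.
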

This is also visible in the return-map view in Appendix~\ref{app:wcr} (Figure~\ref{fig:update_map}): at a fixed ratio $\eta/\eta_1$, the worst overshoot's next iterate stays positive as $L$ grows, turning a sign flip at small depth into a safe return at large depth.
 
\paragraph{The admissible window and rank-$p$ balancing.}
We can now identify which modes re-balance. On the depth-balanced manifold, mode $i$ has single-path sharpness $S_i$ and flattest balanced sharpness $\lambda_i^{\min}=H^{2/L-1}S_i$ (Theorem~\ref{thm:sharpness_reduction}), and leaves the single-path attractor once $\eta>2/S_i$. Since $S_1\ge\cdots\ge S_d$, a step in the band
\begin{align}
&\frac{2}{S_p}<\eta<\min\Bigl\{\frac{2}{S_{p+1}},\,\eta_{\max}\Bigr\},\nonumber\\
&\eta_{\max}:=\min\Bigl\{\tfrac{2}{\lambda_1^{\min}},\,\eta_{\mathrm{WCR}}\Bigr\},
\label{eq:admissible_window}
\end{align}
re-balances the top $p$ modes and leaves the tail in the GF-like regime; the achievable rank is the largest $p$ with $2/S_p<\eta_{\max}$. The two ceilings in $\eta_{\max}$ guard different phases: $2/\lambda_1^{\min}$ keeps the \emph{balanced} attractor stable, while $\eta_{\mathrm{WCR}}$ keeps the dominant path alive through the \emph{transient}. In normalized units they read $\gamma=2H^{1-2/L}$ and $\gamma=\gamma_{\mathrm{WCR}}(L)$; both increase with depth, and which one binds depends on $H$ and $L$. Either way, depth raises $\eta_{\max}$, and with it the number of modes that re-balance.
 
\paragraph{A mechanism for warm-up.}
The ceiling also hints at why learning-rate warm-up helps. Sharpness peaks early, while the dominant path is still forming, so a step large enough to re-balance later may exceed $\eta_{\mathrm{WCR}}$ during that peak and trigger a sign flip. warm-up resolves the tension: a small early step carries the trajectory safely through the sharp transient, and a later increase sustains the drift toward flat, multi-pathway minima. In this view, warm-up does not change the re-balancing mechanism; it delays the large-step regime until the trajectory has passed the most fragile single-path bottleneck.

\section{Discussion}
Large-step Gradient Descent qualitatively changes the 
multi-pathway learning predicted by continuous-time theories. 
Beyond our specific setting, this finding affects how we interpret a broader class of GF-based architectural bias results.

Theoretical accounts of over-parameterized models often predict that training selects sparse, isolated substructures. GF analyses of multi-pathway DLNs predict ``winner-takes-all'' specialization \citep{Shi2022learning, saxe2022neural}, and the Lottery Ticket Hypothesis \citep{frankle2019lottery} conjectures that trained over-parameterized networks contain sparse subnetworks carrying most of their function. 
Our results suggest that, even within such a subnetwork view, large-step GD's preference for low-curvature minima can reshape which substructures emerge: rather than concentrating signal in a single pathway, GD tends to spread it across parallel ones.
How this flat-minima bias interacts with the formation of sparse subnetworks under large-step GD is an open question worth exploring.

GF-based analyses also argue that structural asymmetries (e.g., depth differences) drive the system toward a winner-takes-all solution favoring the shallower path \citep{saxe2022neural}.
Section~\ref{sec:hetero_depth} shows that large-step GD overrides such asymmetries within the range of depths we test: once the sharp single-path minimum becomes GD-unstable, the system moves toward a flatter, distributed configuration even when GF would predict a strongly asymmetric solution. 
In this regime, the optimizer's implicit bias toward flat minima outweighs the structural bias toward shallow pathways predicted by GF.

\section{Conclusion}
We studied how large-step Gradient Descent shapes learning in multi-pathway Deep Linear Networks, and showed that its behavior departs qualitatively from the continuous-time picture.
First, distributing a target feature equally across $H$ parallel pathways reduces sharpness by $H^{2/L-1}$  relative to a single-path solution (Theorem~\ref{thm:sharpness_reduction}): sparse minima are the sharpest configurations on the depth-balanced manifold, while balanced ones are the flattest.
Second, training at the Edge of Stability proceeds in two phases. Symmetry breaking concentrates the signal in a dominant pathway, as GF predicts. Once local sharpness exceeds $2/\eta$, oscillating residuals transfer mass to subordinate pathways until the configuration satisfies the stability constraint—overriding both initialization-induced and depth-induced asymmetries.
Third, a Worst-Case Return threshold derived from a deep linear chain bounds $\eta$  from above to guarantee the trajectory survives the transient oscillations. The ratio $\eta_{\text{WCR}}/\eta_{\text{stable}}$ grows as $\Theta(\log L)$, so deeper architectures admit a wider window of re-balancing learning rates.

Together, these results show that the symmetry breaking predicted by Gradient Flow does not persist once the infinitesimal-learning-rate assumption is relaxed. Architectural biases derived under continuous-time dynamics deserve re-examination through the lens of discrete optimization. Extending this analysis to nonlinear and modular architectures, such as Mixture-of-Experts and multi-head attention, is a natural next step.

\section*{Impact Statement}
This paper presents work whose goal is to advance the field of Machine
Learning. There are many potential societal consequences of our work, none which we feel must be specifically highlighted here.

\section*{Acknowledgments}
We thank the anonymous reviewers for insightful reviews. This work was partially supported
by Institute of Information \& communications Technology Planning \& Evaluation (IITP) grants
(RS-2020-II201373, Artificial Intelligence Graduate School Program (Hanyang University); RS-2023-002206284, Artificial intelligence for prediction of structure-based protein interaction reflecting physicochemical principles); the BK21 FOUR (Fostering Outstanding Universities for Research) project; NRF2024S1A5C3A02043653, Socio-Technological Solutions for Bridging the AI Divide: A Blockchain and Federated Learning-Based AI Training Data Platform) and Korea Institute for Advanced Study (KIAS) grant funded by the Korean government (MSIT).
\bibliography{tool/Linear_EoS}
\bibliographystyle{icml2026}

\newpage
\appendix
\onecolumn
\section{Notation}
\label{app:notation}

The following table summarizes the key mathematical notations used throughout this paper.

\vspace{1em}
\begin{center}
\renewcommand{\arraystretch}{1.4}
\begin{tabular}{lp{11cm}}
\toprule
\textbf{Notation} & \textbf{Description} \\
\midrule
$H$ & Number of parallel pathways in the network. \\
$h$ & Index for a specific pathway, $h \in [H] = \{1, \dots, H\}$. \\
$L_h, L$ & Depth of pathway $h$, and homogeneous depth when $L_h = L$ for all $h$. \\
$d$ & Dimension of the square weight matrices ($d \times d$). \\
$W_{h\ell}$ & Weight matrix of layer $\ell \in [L_h]$ in pathway $h$. \\
$\Omega_h$ & End-to-end transformation matrix of pathway $h$ ($\Omega_h = W_{hL_h} \cdots W_{h1}$). \\
$M, M_\star$ & Overall map of the network ($M = \sum_{h=1}^H \Omega_h$) and the target matrix. \\
$\Theta, \Theta_h$ & Set of all network parameters, and parameters specific to pathway $h$. \\
$u_{\star i}, v_{\star i}$ & $i$-th left and right singular vectors of the target matrix $M_\star$. \\
$\sigma_{\star i}$ & $i$-th target singular value. \\
$\sigma_{hi}(t)$ & Singular mode coefficient of pathway $h$ for mode $i$ at time $t$. \\
$\sigma_i(t)$ & Total singular mode coefficient of the network for mode $i$ ($\sum_{h=1}^H \sigma_{hi}=u_{\star i}^\top M v_{\star i}$). \\
$\alpha_h$ & Pathway-dependent initialization scale. \\
$\mathcal{L}(\Theta), \mathcal{L}_i$ & Total squared Frobenius loss, and independent scalar mode loss for mode $i$. \\
$\eta$ & Learning rate (step size) of discrete gradient descent. \\
$\lambda_i$ & Dominant eigenvalue of the Hessian for mode $i$ (sharpness). \\
$\lambda_i^{\min}$ & Minimal eigenvalue of the Hessian on the depth-balanced manifold. \\
$S_i$ & Sharpness of a single-path (``winner-takes-all'') solution for mode $i$. \\
$a_{hi}(t)$ & Per-layer scalar parameter on the depth-balanced SVS manifold ($a_{hi}^L = \sigma_{hi}$). \\
$w_t, w_\star$ & State variable representing $a_{11}(t)$ and its target fixed point ($\sigma_{\star 1}^{1/L}$) in the deep-chain map. \\
$\eta_{\text{stable}}$ & Classical linear stability bound for the learning rate ($2/S_1$). \\
$\eta_{\text{WCR}}$ & Worst-Case Return (WCR) threshold for the learning rate. \\
$\gamma, \gamma_{\text{WCR}}(L)$ & Normalized step size ($\eta S_1$) and the depth-dependent constant for the WCR threshold. \\
\bottomrule
\end{tabular}
\end{center}
\newpage 

\section{Invariance of Depthwise Balancing under the SVS Assumption}
\label{sec:svs-depthwise-balance}

This section justifies the restriction to the depth-balanced manifold used in our sharpness analysis.
Our argument follows the singular-vector-stationary (SVS) viewpoint developed for deep matrix
factorization in \cite{ghosh2025learning} and adapts it to the multi-pathway architecture.

\paragraph{Setting.}
We consider the multi-pathway deep linear network in Section~\ref{sec:setup}: there are $H$ parallel pathways.
Pathway $h\in\{1,\dots,H\}$ has depth $L_h$, layer weights $\{W_{h\ell}\in\mathbb{R}^{d\times d}\}_{\ell=1}^{L_h}$,
end-to-end map $\Omega_h := W_{hL_h}\cdots W_{h1}$, total map $M := \sum_{h=1}^H \Omega_h$,
and squared Frobenius loss
\begin{equation}
\mathcal{L}(\Theta) := \frac12\|M - M_\star\|_F^2,
\qquad
M_\star = U_\star \Sigma_\star V_\star^{\top},\;\;\Sigma_\star=\mathrm{diag}(\sigma_{1\star},\cdots,\sigma_{d\star}).
\end{equation}
Throughout, we work in the (mode) coordinates induced by the fixed singular vectors $(U_\star,V_\star)$ of $M_\star$.

\subsection{SVS parametrization and mode-wise decoupling}

\begin{definition}[SVS parametrization for each pathway]
\label{def:svs-multipath}
We say $\Theta$ lies in the SVS set associated with $M_\star$ if for each pathway $h$ there exist
\emph{fixed} orthogonal matrices $\{Q_{h\ell}\}_{\ell=1}^{L_h+1}$ such that $Q_{h1}=V_\star$, $Q_{hL_h+1}=U_\star$,
and each layer admits the decomposition
\begin{equation}
W_{h\ell} = Q_{h\ell+1}\,\Sigma_{h\ell}\,Q_{h\ell}^\top,
\qquad
\Sigma_{h\ell}=\mathrm{diag}(\sigma_{h1}^{(\ell)},\ldots,\sigma_{hd}^{(\ell)}),
\end{equation}
for some diagonal $\Sigma_{h\ell}$ (with nonnegative diagonal entries, interpreted as layer singular values in the SVS chart).
\end{definition}

Under Definition~\ref{def:svs-multipath}, adjacent orthogonals cancel in products, so for each pathway
\begin{equation}
\Omega_h
= W_{hL_h}\cdots W_{h1}
= U_\star \Big(\prod_{\ell=1}^{L_h}\Sigma_{h\ell}\Big) V_{\star}^\top
=: U_\star \Sigma_h V_\star^\top,
\end{equation}
where $\Sigma_h$ is diagonal and its $i$-th diagonal entry is the end-to-end singular value in mode $i$:
\begin{equation}
\sigma_{hi} := \prod_{\ell=1}^{L_h}\sigma_{hi}^{(\ell)}.
\label{eq:end2end-sv}
\end{equation}
Consequently,
\begin{equation}
M
= \sum_{h=1}^H \Omega_h
= U_\star \Big(\sum_{h=1}^H \Sigma_h\Big)V^{\star\top},
\end{equation}
so $M$ shares the singular vectors $(U_\star,V_\star)$ with $M_\star$ and the loss decouples across modes.

\begin{lemma}[Mode-wise decoupling in the SVS set]
\label{lem:modewise-decouple}
If $\Theta$ lies in the SVS set, then the loss decomposes as
\begin{equation}
\mathcal{L}(\Theta) = \sum_{i=1}^d \mathcal{L}_i(\Theta),
\qquad
\mathcal{L}_i(\Theta) = \frac12\Big(\sum_{h=1}^H \sigma_{hi} - \sigma_{\star i}\Big)^2,
\end{equation}
where $\sigma_{hi}$ is defined in \eqref{eq:end2end-sv}.
\end{lemma}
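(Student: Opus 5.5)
The plan is to write $M$ in the target singular basis and then use orthogonal invariance of the Frobenius norm. By Definition~\ref{def:svs-multipath}, the product $W_{hL_h}\cdots W_{h1}$ telescopes. Each interior factor $Q_{h\ell}^\top Q_{h\ell}$ equals $I_d$, which leaves
\begin{equation*}
\Omega_h = Q_{hL_h+1}\Big(\prod_{\ell=1}^{L_h}\Sigma_{h\ell}\Big)Q_{h1}^\top = U_\star \Sigma_h V_\star^\top .
\end{equation*}
Here we use the boundary conditions $Q_{h1}=V_\star$ and $Q_{hL_h+1}=U_\star$. Since $\Sigma_h$ is a product of diagonal matrices, it is diagonal with entries $\sigma_{hi}$ as in \eqref{eq:end2end-sv}. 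Summing over $h$ then gives $M = U_\star\big(\sum_h \Sigma_h\big)V_\star^\top$.

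Next we form the residual. Because $M_\star = U_\star\Sigma_\star V_\star^\top$, it factors as
\begin{equation*}
M - M_\star = U_\star\Big(\sum_{h=1}^H \Sigma_h - \Sigma_\star\Big)V_\star^\top .
\end{equation*}
The Frobenius norm is invariant under left and right multiplication by orthogonal matrices, because $\|UAV^\top\|_F^2=\mathrm{tr}(VA^\top U^\top U A V^\top)=\mathrm{tr}(A^\top A)$. Hence $\mathcal{L}(\Theta)=\tfrac12\|D\|_F^2$, where $D:=\sum_h\Sigma_h-\Sigma_\star$ is diagonal. The squared Frobenius norm of a diagonal matrix is the sum of its squared diagonal entries. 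The $i$-th entry of $D$ is $\sum_h\sigma_{hi}-\sigma_{\star i}$, and this yields exactly $\sum_i \mathcal{L}_i$.

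There is no real obstacle here. The only point needing care is bookkeeping: $U_\star$ and $V_\star$ are genuinely orthogonal, which holds because $M_\star$ is square and we take a full SVD. Also, $\Sigma_\star$ is indexed in the same order as the diagonal entries of $\Sigma_{h\ell}$, so the $i$-th entries match mode by mode. I would also note that nonnegativity of the entries is not needed for the identity itself.
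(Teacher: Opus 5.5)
Your proposal is correct and follows the paper's proof: write $M - M_\star = U_\star\bigl(\sum_h \Sigma_h - \Sigma_\star\bigr)V_\star^\top$, use unitary invariance of the Frobenius norm, and sum the squared diagonal entries. The only difference is that you also verify the telescoping of $\Omega_h$ inside the proof, which the paper does just before stating the lemma.
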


\begin{proof}
Since $M-M_\star = U_\star\big(\sum_h\Sigma_h-\Sigma_\star\big)V^{\star\top}$ and Frobenius norm is unitarily invariant,
\[
\|M-M_\star\|_F^2
=\Big\|\sum_{h=1}^H\Sigma_h-\Sigma_\star\Big\|_F^2
=\sum_{i=1}^d\Big(\sum_{h=1}^H\sigma_{hi}-\sigma_{\star i}\Big)^2,
\]
which yields the claim.
\end{proof}

\subsection{Depthwise-balanced manifold and its invariance}

\begin{definition}[Depthwise-balanced manifold]
\label{def:depthwise-balanced}
For each pathway $h$ and mode $i$, define the end-to-end singular value $\sigma_{hi}$ as in \eqref{eq:end2end-sv}.
The depthwise-balanced manifold $\mathcal{B}$ is the subset of the SVS set where, for all $h$ and $i$,
\begin{equation}
\sigma_{hi}^{(\ell)} = \sigma_{hi}^{1/L_h}\quad \forall \ell\in\{1,\dots,L_h\},
\qquad\text{equivalently}\qquad
\sigma_{h1,i}=\cdots=\sigma_{hL_h,i}.
\end{equation}
\end{definition}

\begin{proposition}[Depthwise balancing is invariant under GD in the SVS set]
\label{prop:depthwise-invariant}
Assume the iterates remain in the SVS set and are updated by gradient descent:
\[
W_{h\ell}(t+1)=W_{h\ell}(t)-\eta\,\nabla_{W_{h\ell}}\mathcal{L}(\Theta(t)),
\qquad \forall h,\ell.
\]
If $\Theta(t)\in\mathcal{B}$ (Definition~\ref{def:depthwise-balanced}) for some $t$, then $\Theta(t+1)\in\mathcal{B}$.
Consequently, if the initialization is depthwise-balanced, then $\Theta(t)\in\mathcal{B}$ for all $t\ge 0$.

Moreover, letting the mode residual be
\begin{equation}
r_i(t) := \sum_{h=1}^H \sigma_{hi}(t) - \sigma_{\star i},
\label{eq:mode-residual}
\end{equation}
the GD update of each layer singular value (in the SVS coordinates) is, for all $h,\ell,i$,
\begin{equation}
\sigma_{hi}^{(\ell)}(t+1)
=
\sigma_{hi}^{(\ell)}(t)
-\eta\, r_i(t)\!\!\prod_{j\in\{1,\dots,L_h\},\, j\neq \ell}\!\! \sigma_{h,j,i}(t).
\label{eq:sv-update}
\end{equation}
\end{proposition}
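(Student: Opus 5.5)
First I compute the gradient of $\mathcal{L}$ with respect to a single layer $W_{h\ell}$ while $\Theta(t)$ is in the SVS set. Write $P_{h\ell}^{>} := W_{hL_h}\cdots W_{h\ell+1}$ and $P_{h\ell}^{<} := W_{h\ell-1}\cdots W_{h1}$, so that $\Omega_h = P_{h\ell}^{>} W_{h\ell} P_{h\ell}^{<}$. The chain rule gives
\[
\nabla_{W_{h\ell}}\mathcal{L} = (P_{h\ell}^{>})^\top (M - M_\star)(P_{h\ell}^{<})^\top .
\]
I then substitute the SVS factorization. Adjacent orthogonals cancel, which gives $P_{h\ell}^{>} = U_\star \big(\prod_{j>\ell}\Sigma_{hj}\big) Q_{h\ell+1}^\top$ and $P_{h\ell}^{<} = Q_{h\ell}\big(\prod_{j<\ell}\Sigma_{hj}\big)V_\star^\top$. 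Lemma~\ref{lem:modewise-decouple} gives $M-M_\star = U_\star R V_\star^\top$ with $R=\diag(r_1,\dots,r_d)$. Putting these together,
\[
\nabla_{W_{h\ell}}\mathcal{L} = Q_{h\ell+1}\Big(\prod_{j>\ell}\Sigma_{hj}\Big) R \Big(\prod_{j<\ell}\Sigma_{hj}\Big) Q_{h\ell}^\top .
\]
All the middle factors are diagonal, so they commute. The gradient therefore has the form $Q_{h\ell+1} D Q_{h\ell}^\top$ with $D$ diagonal and $D_{ii} = r_i\prod_{j\ne \ell}\sigma_{hi}^{(j)}$.

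Next I subtract $\eta$ times this gradient from $W_{h\ell}=Q_{h\ell+1}\Sigma_{h\ell}Q_{h\ell}^\top$. The orthogonal frames stay fixed and only the diagonal changes, so $\Sigma_{h\ell}(t+1)=\Sigma_{h\ell}(t)-\eta D$. Reading this off entrywise is exactly \eqref{eq:sv-update}. Because the same frames $Q_{h\ell}$ persist, this step also shows that GD preserves the SVS set, up to the sign caveat discussed below.

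For invariance of $\mathcal{B}$, suppose $\sigma_{hi}^{(\ell)}(t)=a_{hi}$ for every $\ell$. Then $\prod_{j\ne\ell}\sigma_{hi}^{(j)}(t)=a_{hi}^{L_h-1}$, which does not depend on $\ell$. By \eqref{eq:sv-update}, every layer then gets the same new value $a_{hi}-\eta r_i(t)a_{hi}^{L_h-1}$. So the layers remain equal at $t+1$, i.e.\ $\Theta(t+1)\in\mathcal{B}$. Induction from the balanced initialization \eqref{eq:balanced_init}, where every layer equals $\alpha_h^{1/L_h}I_d$ and the SVS frames can be chosen accordingly, gives $\Theta(t)\in\mathcal{B}$ for all $t$.

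The main obstacle is one subtlety in the SVS chart. The updated diagonal entries could become negative, and a general SVS chart requires the frames to be fixed across layers. The proposition handles this by assuming the iterates remain in the SVS set, so I would interpret the diagonal entries as signed coordinates in that chart. The equality argument is purely algebraic and does not depend on sign.

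I also need care at initialization. The choice $Q_{h\ell}=V_\star$ for $\ell\le L_h$ works for the interior frames. However, $Q_{hL_h+1}=U_\star$ forces $W_{hL_h}=U_\star\Sigma V_\star^\top$, which equals a multiple of the identity only if $U_\star=V_\star$. This is why the main text restricts to symmetric $M_\star=V_\star\Sigma_\star V_\star^\top$. I would state that restriction explicitly when invoking the initialization.
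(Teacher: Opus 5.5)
Your proposal is correct and follows essentially the same route as the paper: the chain-rule gradient, collapsing the partial products in the fixed SVS frames to obtain $Q_{h\ell+1} D Q_{h\ell}^\top$ with $D_{ii}=r_i\prod_{j\neq\ell}\sigma_{hi}^{(j)}$, reading off the layerwise update, noting that on $\mathcal{B}$ this product equals $a_{hi}^{L_h-1}$ independently of $\ell$, and then inducting. Your two caveats are valid and slightly more careful than the paper's proof, which leaves them to the standing SVS assumption and to the main text's restriction to symmetric targets: the diagonal entries must be read as signed chart coordinates, and the identity initialization lies in the SVS set only when $U_\star=V_\star$.
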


\begin{proof}
Fix a pathway $h$ and layer $\ell$. The standard gradient formula for deep linear networks gives
\begin{equation}
\nabla_{W_{h\ell}}\mathcal{L}
=
W_{hL_h:\ell+1}^\top\,(M-M_\star)\,W_{h\ell-1:1}^\top,
\label{eq:matrix-gradient}
\end{equation}
where $W_{h,a:b}:=W_{h,a}\cdots W_{h,b}$ with the convention $W_{h,a:b}=I$ when $a<b$.

Under Definition~\ref{def:svs-multipath}, partial products collapse to the fixed bases with diagonal cores:
\[
W_{h,L_h:\ell+1}^\top
= Q_{h,\ell+1}\Big(\prod_{j=\ell+1}^{L_h}\Sigma_{h,j}\Big)U^{\star\top},
\qquad
W_{h,\ell-1:1}^\top
= V_\star\Big(\prod_{j=1}^{\ell-1}\Sigma_{h,j}\Big)Q_{h,\ell}^\top.
\]
Also, since $M = U_\star(\sum_k\Sigma_k)V^{\star\top}$, we have
\[
M-M_\star
= U_\star\,\mathrm{diag}(r_1,\dots,r_d)\,V^{\star\top},
\]
with $r_i$ defined in \eqref{eq:mode-residual}. Substituting these expressions into \eqref{eq:matrix-gradient}
and using $U^{\star\top}U_\star=I$, $V^{\star\top}V_\star=I$ yields
\[
\nabla_{W_{h\ell}}\mathcal{L}
=
Q_{h,\ell+1}\Big(\prod_{j=\ell+1}^{L_h}\Sigma_{h,j}\Big)\mathrm{diag}(r_1,\dots,r_d)\Big(\prod_{j=1}^{\ell-1}\Sigma_{h,j}\Big)Q_{h,\ell}^\top.
\]
All matrices in the middle are diagonal and commute, hence the gradient shares the same left/right orthogonals
as $W_{h\ell}$ and has diagonal entries
\[
\Big[\nabla_{W_{h\ell}}\mathcal{L}\Big]_{i,i}
= r_i \prod_{j\neq \ell}\sigma_{h,j,i}.
\]
Therefore the GD update preserves the SVS chart and, at the level of diagonal entries, is exactly \eqref{eq:sv-update}.

Now assume $\Theta(t)\in\mathcal{B}$. Then for each fixed $(h,i)$ all layer singular values are equal:
$\sigma_{h,1,i}(t)=\cdots=\sigma_{h,L_h,i}(t)=:s_{hi}(t)$.
Hence the multiplicative factor in \eqref{eq:sv-update} becomes
\[
\prod_{j\neq \ell}\sigma_{h,j,i}(t) = s_{hi}(t)^{L_h-1},
\]
which is independent of $\ell$. Thus every layer in the same pathway receives the same update in mode $i$, implying
$\sigma_{h,1,i}(t+1)=\cdots=\sigma_{h,L_h,i}(t+1)$. This is precisely $\Theta(t+1)\in\mathcal{B}$.
The final claim follows by induction on $t$.
\end{proof}

\paragraph{Implication for our initialization.}
Our depth-balanced diagonal initialization (Eq.~(4)) satisfies $\sigma_{h,1,i}(0)=\cdots=\sigma_{h,L_h,i}(0)$
for each pathway $h$ and mode $i$. Hence, \emph{conditional on remaining in the SVS set},
Proposition~\ref{prop:depthwise-invariant} justifies restricting the analysis to the depth-balanced manifold throughout training.

\section{Sharpness Reduction in Multi-Pathway Linear Models}
\label{app:sharpness_reduction}
In this section, we rigorously derive the relationship between the number of parallel pathways $H$ and the loss sharpness at a balanced global minimum. We demonstrate that distributing a target feature across multiple pathways significantly reduces the network's sharpness, thereby extending the edge of stability regime.

\subsection{Problem Formulation and Hessian Structure}

Consider a Deep Linear Network (DLN) composed of $H$ parallel pathways, where the output matrix $M$ is the sum of the outputs of individual pathways $\Omega_h$. The objective is to minimize the squared Frobenius loss:
\begin{equation}
    \mathcal{L}(\Theta) = \frac{1}{2} \| M - M_\star \|_F^2 = \frac{1}{2} \left\| \sum_{h=1}^H \Omega_h(\Theta_h) - M_\star \right\|_F^2,
\end{equation}
where $\Theta_h = (W_{h,1}, \dots, W_{h,L})$ represents the parameters of the $h$-th pathway, and $\Omega_h = W_{h,L} \cdots W_{h,1}$.

To analyze the sharpness, we examine the Hessian of the loss, $\nabla^2 \mathcal{L}(\Theta)$, at a global minima where $M = M_\star$. 
Utilizing the vectorization of parameters, the Hessian can be decomposed into blocks corresponding to the interactions between pathways $h$ and $k$.
Following the derivation of \citet{ghosh2025learning} in Appendix C.3.2, the $(h,k)$-th block of the Hessian is given by:
\begin{equation}
    \mathbf{H}_{h,k} = \nabla_{\text{vec}(\Theta_h)} \nabla_{\text{vec}(\Theta_k)}^\top \mathcal{L} \approx \mathbf{A}_h^\top \mathbf{A}_k,
\end{equation}

\subsection{Mode-Wise Decoupling in the SVS Set}

We now refine the argument by leveraging the \emph{singular vector stationary} (SVS) structure used in \citet{ghosh2025learning}.
Assume that training is restricted to the SVS set associated with the target SVD
\[
M_\star = U \,\Sigma_\star\, V^\top, \qquad \Sigma_\star=\mathrm{diag}(\sigma_{\star1},\dots,\sigma_{\star d}),
\]
so that the singular vectors of all layer matrices remain fixed (equal to those of $M_\star$) throughout training.
In particular, for each pathway $h$ and layer $\ell$ we may write
\[
W_{h\ell} \;=\; U\, \Sigma_{h\ell}\, V^\top,\qquad 
\Sigma_{h\ell}=\mathrm{diag}(\sigma_{h\ell,1},\dots,\sigma_{h\ell,d}),
\]
hence each end-to-end map is
\begin{align*}
    \Omega_h \;&=\; U \,\Sigma_h\, V^\top,\\
\Sigma_h&=\prod_{\ell=1}^L \Sigma_{h\ell}=\mathrm{diag}(\sigma_{h1},\dots,\sigma_{hd}),\\
\quad \sigma_{hi} :&= \prod_{\ell=1}^L \sigma_{hi}^{(\ell)}.
\end{align*}

Since $M=\sum_{h=1}^H \Omega_h$ and all $\Omega_h$ share the same singular vectors $(U,V)$, the total map is also diagonal in the same basis:
\begin{align*}
    M \;&=\; U \Big(\sum_{h=1}^H \Sigma_h\Big) V^\top
\\\Longrightarrow\quad
\sigma_i(M)&=\sum_{h=1}^H \sigma_{hi}\quad \text{for each mode } i\in[d].
\end{align*}

Therefore the loss decomposes into a sum of independent \emph{mode losses}
\begin{equation}
\label{eq:mode-loss}
\mathcal{L}(\Theta)
=\frac{1}{2}\big\|M-M_\star\big\|_F^2
=\frac{1}{2}\sum_{i=1}^d \Big(\sum_{h=1}^H \sigma_{hi}-\sigma_{\star i}\Big)^2
=: \sum_{i=1}^d \mathcal{L}_i.
\end{equation}
This SVS decoupling implies that the Hessian decomposes (up to permutation) into blocks indexed by modes $i$, and the dominant positive-curvature directions can be analyzed mode-by-mode.

\subsection{Mode-$i$ Hessian Eigenvalue on the Depth-Balanced Manifold}

We refine the Hessian expression starting from
\[
\mathbf{H}_{h,k}
=\nabla_{\mathrm{vec}(\Theta_h)} \nabla_{\mathrm{vec}(\Theta_k)}^\top \mathcal{L}
\approx A_h^\top A_k
\qquad 
\]
$\text{at a global minimum } (M=M_\star)$, where $A_h$ is the Jacobian of $\mathrm{vec}(\Omega_h)$ w.r.t.\ $\mathrm{vec}(\Theta_h)$.
Inside the SVS set, the Jacobian further decomposes across modes. Concretely, the scalar residual for mode $i$ is
\[
r_i := \sum_{h=1}^H \sigma_{hi}-\sigma_{\star i},
\]
so $\mathcal{L}_i=\tfrac12 r_i^2$ and, at any global minimum, $r_i=0$ for all $i$.
Hence the mode-$i$ Hessian is a Gram matrix of the mode-$i$ Jacobian:
\[
\nabla^2 \mathcal{L}_i \big|_{r_i=0} = J_i^\top J_i.
\]

\paragraph{Depth-wise balance.}
We now specialize to the \emph{depth-balanced} manifold within each pathway (consistent with balanced initialization and the ``balanced minimum'' analyzed in \citet{ghosh2025learning}):
\begin{equation}
\label{eq:depth-balance}
\sigma_{hi}^{(\ell)} \;=\; \sigma_{hi}^{1/L}
\qquad \forall \, h\in[H],\; \ell\in[L],\; i\in[d].
\end{equation}
That is, depth-wise the singular value for a fixed mode $i$ is equally split across layers, while \emph{pathway-wise} the end-to-end values $\sigma_{hi}$ may be unbalanced across $h$.

\paragraph{Jacobian in mode $i$.}
Consider the coordinates $\{\sigma_{hi}^{(\ell)}\}_{h,\ell}$ in mode $i$.
Since $\sigma_{hi}=\prod_{\ell=1}^L \sigma_{hi}^{(\ell)}$, we have
\[
\frac{\partial \sigma_{hi}}{\partial \sigma_{hi}^{(\ell)}}
=\frac{\sigma_{hi}}{\sigma_{hi}^{(\ell)}}
\overset{\eqref{eq:depth-balance}}{=}
\frac{\sigma_{hi}}{\sigma_{hi}^{1/L}}
=\sigma_{hi}^{1-\frac{1}{L}}.
\]
Because $r_i=\sum_h \sigma_{hi}-\sigma_{\star i}$, the mode-$i$ Jacobian entries are
\[
\frac{\partial r_i}{\partial \sigma_{hi}^{(\ell)}}=\frac{\partial \sigma_{hi}}{\partial \sigma_{hi}^{(\ell)}}
=\sigma_{hi}^{1-\frac{1}{L}}.
\]
Thus, restricted to these depth-balanced SVS coordinates, the mode-$i$ Jacobian vector is

\begin{align*}
    J_i \;=\; \big(&\underbrace{\sigma_{1i}^{1-1/L},\dots,\sigma_{1i}^{1-1/L}}_{L \text{ entries}},\;
\underbrace{\sigma_{2i}^{1-1/L},\dots,\sigma_{2i}^{1-1/L}}_{L \text{ entries}},
\dots,\underbrace{\sigma_{Hi}^{1-1/L},\dots,\sigma_{Hi}^{1-1/L}}_{L \text{ entries}}\big),
\end{align*}

and therefore its squared norm is
\begin{equation}
\label{eq:lambda-i-general}
\|J_i\|_2^2 \;=\; \sum_{h=1}^H \sum_{\ell=1}^L \big(\sigma_{hi}^{1-1/L}\big)^2
\;=\; L \sum_{h=1}^H \sigma_{hi}^{\,2-\frac{2}{L}}.
\end{equation}

\begin{lemma}[Mode-wise dominant eigenvalue]
\label{lem:mode-eig}
At any global minimum within the SVS set, and restricted to the depth-balanced manifold \eqref{eq:depth-balance},
the Hessian contribution from mode $i$ has a single nonzero eigenvalue given by
\begin{equation}
\label{eq:lambda-i}
\lambda_i \;=\; L \sum_{h=1}^H \sigma_{hi}^{\,2-\frac{2}{L}},
\end{equation}
with eigenvector proportional to the mode-$i$ Jacobian direction $J_i$.
\end{lemma}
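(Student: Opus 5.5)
The proof uses the Gauss--Newton structure of the Hessian at a zero-residual point, combined with the mode-wise decoupling of Lemma~\ref{lem:modewise-decouple}.

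\textbf{Step 1: reduce to a Gram matrix.} Write $\mathcal{L}_i=\tfrac12 r_i^2$ with $r_i=\sum_h\sigma_{hi}-\sigma_{\star i}$. Differentiating twice gives
\[
\nabla^2\mathcal{L}_i=\nabla r_i\,\nabla r_i^\top+r_i\,\nabla^2 r_i .
\]
At a global minimum $r_i=0$, so the second term vanishes exactly. The mode-$i$ Hessian is therefore $J_iJ_i^\top$, where $J_i=\nabla r_i$ is taken in the SVS coordinates $\{\sigma_{hi}^{(\ell)}\}_{h,\ell}$. Any coordinate belonging to a different mode $j\neq i$ does not enter $r_i$, so these blocks do not interact.

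\textbf{Step 2: compute the gradient on the depth-balanced manifold.} Since $\sigma_{hi}=\prod_\ell\sigma_{hi}^{(\ell)}$, we have $\partial r_i/\partial\sigma_{hi}^{(\ell)}=\prod_{j\neq\ell}\sigma_{hi}^{(j)}$. Imposing \eqref{eq:depth-balance}, this equals $\sigma_{hi}^{1-1/L}$, which is the vector $J_i$ displayed above.

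\textbf{Step 3: read off the spectrum.} A rank-one positive semidefinite matrix $J_iJ_i^\top$ has exactly one nonzero eigenvalue, $\|J_i\|^2$, with eigenvector $J_i$. Every vector orthogonal to $J_i$ is in the kernel. By \eqref{eq:lambda-i-general}, $\|J_i\|^2=L\sum_h\sigma_{hi}^{2-2/L}$, which is the claimed formula. The degenerate case $J_i=0$ cannot occur at a minimum with $\sigma_{\star i}>0$, because some $\sigma_{hi}$ must then be positive.

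\textbf{Main obstacle: justifying the SVS coordinates.} The delicate point is showing that using the diagonal coordinates loses nothing relative to the full Hessian in $\mathrm{vec}(W_{h\ell})$.

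I would handle this with the chain rule at a zero-residual point. There the full Hessian equals $\mathcal{J}^\top\mathcal{J}$, where $\mathcal{J}$ is the Jacobian of $\mathrm{vec}(M-M_\star)$. The $(u_{\star i},v_{\star i})$ component of $\mathcal{J}$ is $\partial r_i$. Its gradient with respect to $W_{h\ell}$ is $Q_{h\ell+1}e_i e_i^\top Q_{h\ell}^\top\cdot\prod_{j\neq\ell}\sigma_{hi}^{(j)}$, and these matrices have Frobenius norm $\sigma_{hi}^{1-1/L}$.

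So the full-space mode-$i$ direction has the same norm as $J_i$. This reproduces $\lambda_i$ as the eigenvalue contributed by the $(i,i)$ output component. The off-diagonal output components $(u_{\star i},v_{\star j})$ are what the paper's ``mode contribution'' restriction sets aside. I would state this explicitly rather than attempt to bound those components here.
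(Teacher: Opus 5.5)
Your proposal is correct and follows essentially the same argument as the paper: at $r_i=0$ the mode-$i$ Hessian in the depth-balanced SVS coordinates is the rank-one Gram matrix of $J_i$, whose only nonzero eigenvalue is $\|J_i\|^2=L\sum_h\sigma_{hi}^{2-2/L}$. Three additions go beyond what the paper leaves implicit and are welcome. The first is the exact Gauss--Newton identity in place of the paper's ``$\approx$''. The second is the nondegeneracy remark for $\sigma_{\star i}>0$. The third is the full-parameter-space check, which is exact because the rows of $\mathcal{J}$ for distinct output pairs $(a,b)$ are mutually orthogonal, since $\langle Q_{h\ell+1}e_ae_b^\top Q_{h\ell}^\top,\,Q_{h\ell+1}e_ce_d^\top Q_{h\ell}^\top\rangle_F=\delta_{ac}\delta_{bd}$.
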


\begin{proof}
At a global minimum $r_i=0$, we have $\nabla^2 \mathcal{L}_i = J_i^\top J_i$, which is rank-one on the considered coordinates.
Hence its only nonzero eigenvalue equals $\|J_i\|_2^2$, which is \eqref{eq:lambda-i-general}.
\end{proof}

\subsection{Pathway-Wise Balancing Minimizes Sharpness in Each Mode}

Lemma~\ref{lem:mode-eig} shows that, for a fixed mode $i$, the sharpness contribution depends on the pathway end-to-end singular values
$\{\sigma_{hi}\}_{h=1}^H$ through the convex functional $\sum_h \sigma_{hi}^{p}$ with
\[
p \;:=\; 2-\frac{2}{L}.
\]
For $L>2$, we have $p>1$, so $x\mapsto x^p$ is convex on $\mathbb{R}_+$.

At a global minimum, mode $i$ must satisfy the constraint (from \eqref{eq:mode-loss})
\begin{equation}
\label{eq:sum-constraint}
\sum_{h=1}^H \sigma_{hi} \;=\; \sigma_{\star i}.
\end{equation}
We now optimize \eqref{eq:lambda-i} subject to \eqref{eq:sum-constraint}.

\begin{theorem}[Pathway-wise equal split minimizes mode-$i$ sharpness]
\label{thm:jensen}
Assume $L>2$ (so $p>1$). Among all global minima satisfying \eqref{eq:sum-constraint}, the mode-$i$ eigenvalue \eqref{eq:lambda-i}
is minimized when the pathway contributions are equal:
\[
\sigma_{1i}=\cdots=\sigma_{Hi}=\frac{\sigma_{\star i}}{H}.
\]
In that case,
\begin{equation}
\label{eq:lambda-i-balanced}
\lambda_i^{\min}
\;=\; L\,H^{\frac{2}{L}-1}\,\sigma_{\star i}^{\,2-\frac{2}{L}}.
\end{equation}
\end{theorem}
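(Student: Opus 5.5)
The plan is to treat this as a convex minimization over the simplex and apply Jensen's inequality. By Lemma~\ref{lem:mode-eig}, at any global minimum in the SVS set on the depth-balanced manifold,
\[
\lambda_i = L\sum_{h=1}^H \phi(\sigma_{hi}), \qquad \phi(x):=x^{p},\quad p=2-\tfrac{2}{L},
\]
and the feasible set is $\{\sigma_{hi}\ge 0,\ \sum_h \sigma_{hi}=\sigma_{\star i}\}$ by \eqref{eq:sum-constraint}. Nonnegativity holds because the layer singular values in the SVS chart are nonnegative (Definition~\ref{def:svs-multipath}).

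\textbf{Convexity.} I will check that $\phi$ is convex on $\mathbb{R}_+$ when $L>2$. Here $p>1$, so $\phi''(x)=p(p-1)x^{p-2}>0$ for $x>0$, and $\phi$ is continuous at $0$. It is in fact strictly convex on $[0,\infty)$.

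\textbf{Jensen lower bound.} Applying Jensen's inequality with uniform weights $1/H$ gives
\[
\frac1H\sum_h \sigma_{hi}^{p}\ \ge\ \Big(\frac1H\sum_h\sigma_{hi}\Big)^{p}=\Big(\frac{\sigma_{\star i}}{H}\Big)^{p}.
\]
Multiplying by $LH$ yields
\[
\lambda_i\ \ge\ LH\cdot H^{-p}\sigma_{\star i}^{p}=L\,H^{1-p}\sigma_{\star i}^{p}=L\,H^{\frac{2}{L}-1}\sigma_{\star i}^{2-\frac{2}{L}}.
\]
This matches \eqref{eq:lambda-i-balanced}.

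\textbf{Attainment and uniqueness.} The equal split $\sigma_{hi}=\sigma_{\star i}/H$ is feasible, since it is nonnegative and sums correctly. It attains the bound, because each term equals $(\sigma_{\star i}/H)^p$. By strict convexity, equality in Jensen forces all $\sigma_{hi}$ to be equal, so the equal split is the unique minimizer.

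I do not expect a real obstacle; this is essentially a one-line argument. The only care needed is at the boundary of the simplex, where some $\sigma_{hi}=0$ and $\phi''$ blows up or is undefined at $0$ when $p<2$. I will handle this by invoking convexity through its definition, or the tangent-line inequality at the interior point $\sigma_{\star i}/H$, rather than through second derivatives at $0$.

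I would also remark that for $L=2$ we have $p=1$ and $\lambda_i$ is constant on the feasible set, and that for $L<2$ concavity would reverse the conclusion. This explains why the statement assumes $L>2$.
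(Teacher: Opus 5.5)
Your proposal is correct and follows the paper's proof essentially verbatim: Jensen's inequality for the convex map $x\mapsto x^{p}$ with $p=2-\tfrac{2}{L}>1$, multiplication by $LH$, and equality exactly at the equal split $\sigma_{hi}=\sigma_{\star i}/H$. Your added care at the boundary of the simplex (where $\phi''$ is singular for $p<2$) and your explicit appeal to strict convexity for uniqueness are small refinements that the paper leaves implicit.
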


\begin{proof}
By Jensen's inequality for the convex function $x^p$ (with $p>1$),
\[
\frac{1}{H}\sum_{h=1}^H \sigma_{hi}^p
\;\ge\;
\left(\frac{1}{H}\sum_{h=1}^H \sigma_{hi}\right)^p
=
\left(\frac{\sigma_{\star i}}{H}\right)^p,
\]
where we used the constraint \eqref{eq:sum-constraint}.
Multiplying by $L H$ yields
\[
\lambda_i = L\sum_{h=1}^H \sigma_{hi}^p \;\ge\; L H\left(\frac{\sigma_{\star i}}{H}\right)^p,
\]
with equality iff $\sigma_{1i}=\cdots=\sigma_{Hi}=\sigma_{\star i}/H$.
Substituting $p=2-\frac{2}{L}$ gives \eqref{eq:lambda-i-balanced}.
\end{proof}

\paragraph{Improvement over a ``single-path'' allocation.}
Consider the extreme (maximally unbalanced) allocation that places all of mode $i$ into one pathway:
\[
\sigma_{1i}=\sigma_{\star i},\qquad \sigma_{2i}=\cdots=\sigma_{Hi}=0.
\]
Then \eqref{eq:lambda-i} gives
\[
\lambda_i^{\mathrm{one\ path}} = L\,\sigma_{\star i}^{\,2-\frac{2}{L}}.
\]
Comparing with the minimum \eqref{eq:lambda-i-balanced}, we obtain the reduction factor
\begin{equation}
\label{eq:reduction-factor}
\frac{\lambda_i^{\min}}{\lambda_i^{\mathrm{one\ path}}}
=
\frac{L\,H\left(\frac{\sigma_{\star i}}{H}\right)^{2-\frac{2}{L}}}{L\,\sigma_{\star i}^{\,2-\frac{2}{L}}}
=
H^{\frac{2}{L}-1}.
\end{equation}
Thus, for $L\geq3$, distributing a mode evenly across $H$ pathways strictly reduces its curvature by a factor
$\left(H\right)^{\frac{2}{L}-1}$ relative to concentrating it in a single pathway.

\subsection{From Mode-wise Eigenvalues to Global Sharpness}

The full Hessian (restricted to the SVS) decomposes into independent mode contributions.
Each mode $i$ contributes a dominant positive eigenvalue $\lambda_i$ given by \eqref{eq:lambda-i}, while additional
eigenvalues correspond to symmetry/gauge directions (often yielding zero eigenvalues) and to non-dominant curvature within each mode.
Consequently, the sharpness satisfies
\[
\lambda_{\max} \;=\; \max_{i\in[d]} \lambda_i.
\]
Under the common ordering $\sigma_{\star1}\ge \sigma_{\star2}\ge \cdots \ge \sigma_{\star d}$, the minimized mode-wise eigenvalues
\eqref{eq:lambda-i-balanced} are monotone in $\sigma_{\star i}$, hence the largest eigenvalue is achieved by the top target mode:
\begin{equation}
\label{eq:lambda-max}
\lambda_{\max}
=
\lambda_1^{\min}
=
L\,H^{\frac{2}{L}-1}\,\sigma_{\star1}^{\,2-\frac{2}{L}}.
\end{equation}
Moreover, the remaining dominant eigenvalues align with the remaining target modes:
for each $i$, the curvature scale is set by $\sigma_{\star i}$ through \eqref{eq:lambda-i-balanced}.
In particular, when $L>2$, increasing the number of pathways $H$ reduces $\lambda_{\max}$ as $H^{\frac{2}{L}-1}$,
thereby extending the edge-of-stability regime to larger learning rates.

\subsection{Proof of Proposition~\ref{prop:heterogeneous_sharpness}}
\label{app:proof_heterogeneous_sharpness}

We extend the mode-wise SVS analysis to the case where pathway $h$ has (possibly different) depth $L_h$.
Recall that, in mode $i$, each pathway contributes an end-to-end singular value
\[
\sigma_{hi} \;=\; \prod_{\ell=1}^{L_h}\sigma_{hi}^{(\ell)},
\]
and the mode-$i$ residual and loss are
\[
r_i \;:=\; \sum_{h=1}^H \sigma_{hi}-\sigma_{\star i},
\qquad
\mathcal{L}_i \;=\; \tfrac12 r_i^2.
\]
At any global minimum we have $r_i=0$, hence (as in the homogeneous-depth case)
\[
\nabla^2 \mathcal{L}_i\big|_{r_i=0} \;=\; J_i^\top J_i,
\]
where $J_i$ denotes the Jacobian of $r_i$ with respect to the mode-$i$ coordinates.

\paragraph{Depth-balanced manifold with heterogeneous depth.}
Assume we restrict to the depth-balanced manifold \emph{within each pathway}:
\begin{equation}
\label{eq:depth-balance-hetero}
\sigma_{hi}^{(\ell)} \;=\; \sigma_{hi}^{1/L_h}
\qquad \forall\, h\in[H],\; \ell\in[L_h],\; i\in[d].
\end{equation}
That is, for fixed $(h,i)$, the contribution of mode $i$ is evenly split across the $L_h$ layers of pathway $h$.

\paragraph{Mode-$i$ Jacobian entries.}
Fix a pathway $h$. Since $\sigma_{hi}=\prod_{\ell=1}^{L_h}\sigma_{hi}^{(\ell)}$, we have for each $\ell\in[L_h]$
\[
\frac{\partial \sigma_{hi}}{\partial \sigma_{hi}^{(\ell)}}
\;=\;
\frac{\sigma_{hi}}{\sigma_{hi}^{(\ell)}}
\;\overset{\eqref{eq:depth-balance-hetero}}{=}\;
\frac{\sigma_{hi}}{\sigma_{hi}^{1/L_h}}
\;=\;
\sigma_{hi}^{\,1-\frac{1}{L_h}}.
\]
Because $r_i=\sum_{h'}\sigma_{h'i}-\sigma_{\star i}$ depends on $\sigma_{hi}$ only through this additive term, it follows that
\[
\frac{\partial r_i}{\partial \sigma_{hi}^{(\ell)}}
\;=\;
\frac{\partial \sigma_{hi}}{\partial \sigma_{hi}^{(\ell)}}
\;=\;
\sigma_{hi}^{\,1-\frac{1}{L_h}}
\qquad \forall\, \ell\in[L_h].
\]
Therefore, in the coordinate system $\{\sigma_{hi}^{(\ell)}\}_{h,\ell}$, the mode-$i$ Jacobian vector $J_i$ consists of
$L_h$ repeated entries of magnitude $\sigma_{hi}^{1-1/L_h}$ for each pathway $h$.

\paragraph{Rank-one Hessian and its dominant eigenvalue.}
Since $\nabla^2\mathcal{L}_i|_{r_i=0}=J_i^\top J_i$ is a Gram matrix of a single vector, it is rank-one on these coordinates,
with the unique nonzero eigenvalue equal to $\|J_i\|_2^2$. Using the structure above,
\begin{align*}
\|J_i\|_2^2
&=\sum_{h=1}^H\sum_{\ell=1}^{L_h}\Big(\sigma_{hi}^{\,1-\frac{1}{L_h}}\Big)^2
=\sum_{h=1}^H L_h\,\sigma_{hi}^{\,2-\frac{2}{L_h}}.
\end{align*}
Hence, on the depth-balanced manifold, the dominant (and only nonzero) eigenvalue contributed by mode $i$ is
\[
\lambda_i \;=\; \sum_{h=1}^H L_h \,\sigma_{hi}^{2-\frac{2}{L_h}},
\]
with eigenvector proportional to $J_i$. This proves Proposition~\ref{prop:heterogeneous_sharpness}.


\section{Stability Constraints Imply Drift Toward Pathway-Balanced Minima}
\label{app:stability_to_path_balance}

\paragraph{Background: self-stabilization as constrained minimization.}
Self-stabilization theory \citep{damian2023selfstabilization} argues that, near the edge of stability, gradient descent (GD) implicitly
tracks a projected GD trajectory that solves a sharpness-constrained problem:
\begin{equation}
\min_{\Theta}\; \mathcal{L}(\Theta)\quad \text{s.t.}\quad S(\Theta)\le \frac{2}{\eta},
\label{eq:ss_constrained}
\end{equation}
where $S(\Theta)=\lambda_{\max}(\nabla^2 \mathcal{L}(\Theta))$ is the sharpness and $\eta$ is the step size.
(See \citet{damian2023selfstabilization} for a formal projected-dynamics statement.)

\paragraph{Our setting.}
We consider the SVS regime, where all layers share the singular vectors of the target
$M_\star = V \Sigma_\star V^\top$, so the dynamics decouple across modes $i\in[d]$ and
\begin{equation}
\mathcal{L}(\Theta)=\frac12\sum_{i=1}^d\Big(\sum_{h=1}^H\sigma_{hi}-\sigma_{\star i}\Big)^2
=: \sum_{i=1}^d \mathcal{L}_i.
\end{equation}
We also restrict to the depth-balanced manifold within each pathway (homogeneous depth $L$):
$\sigma_{hi}^{(\ell)}=\sigma_{hi}^{1/L}$ for all $h,\ell,i$.

\subsection{Local GD stability at a global minimum equals a sharpness bound}
\label{app:local_stability_equals_sharpness}

Let $\Theta_\star$ be any global minimum within SVS and depth-balance, so $\sum_h\sigma_{hi}=\sigma_{\star i}$
for all $i$. The mode-$i$ Hessian contribution is rank-one on the SVS depth-balanced coordinates, with the
single nonzero eigenvalue
\begin{equation}
\lambda_i(\Theta_\star)=L\sum_{h=1}^H \sigma_{hi}^{\,2-\frac{2}{L}}.
\label{eq:lambda_i_paths}
\end{equation}
(Equivalently, $\nabla^2\mathcal{L}_i(\Theta_\star)=J_i^\top J_i$ with $\|J_i\|_2^2=\lambda_i$.)

\begin{proposition}[GD stability $\Leftrightarrow$ sharpness constraint]
\label{prop:gd_stability_sharpness}
Consider full-batch GD $\Theta_{t+1}=\Theta_t-\eta\nabla\mathcal{L}(\Theta_t)$.
At a global minimum $\Theta_\star$ (SVS + depth-balance), the linearized GD map has eigenvalues
$1-\eta\lambda_i(\Theta_\star)$ on each mode-$i$ curvature direction and eigenvalue $1$ on the flat directions.
Hence $\Theta_\star$ is (linearly) stable iff
\begin{equation}
\eta \, \lambda_{\max}(\Theta_\star) < 2,
\qquad \lambda_{\max}(\Theta_\star):=\max_i \lambda_i(\Theta_\star),
\label{eq:stability_condition}
\end{equation}
i.e., iff $S(\Theta_\star)\le 2/\eta$.
\end{proposition}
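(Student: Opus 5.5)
The plan is to linearize the GD map $G(\Theta)=\Theta-\eta\nabla\mathcal{L}(\Theta)$ at the fixed point $\Theta_\star$. Since $\nabla\mathcal{L}(\Theta_\star)=0$, the map fixes $\Theta_\star$ and its Jacobian there is $DG(\Theta_\star)=I-\eta\nabla^2\mathcal{L}(\Theta_\star)$. Because the Hessian is symmetric, the eigenvalues of $DG$ are exactly $1-\eta\mu$ as $\mu$ ranges over the Hessian spectrum, and the eigenvectors are shared. The whole statement therefore reduces to identifying the Hessian spectrum at $\Theta_\star$.

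The first step is to compute the Hessian at a zero-residual point. Writing $\mathcal{L}=\sum_i\tfrac12 r_i^2$, we have
\[
\nabla^2\mathcal{L}=\sum_i\bigl(\nabla r_i\nabla r_i^\top+r_i\nabla^2 r_i\bigr),
\]
and the second term vanishes because $r_i=0$. Restricted to the SVS depth-balanced coordinates $\{\sigma_{hi}^{(\ell)}\}$, each $r_i$ depends only on the mode-$i$ coordinates. The Hessian is therefore block diagonal across modes, with blocks $J_i^\top J_i$. By Lemma~\ref{lem:mode-eig}, each block is rank one: its single nonzero eigenvalue is $\lambda_i(\Theta_\star)$ with eigenvector $J_i/\|J_i\|$, and its orthogonal complement within the block lies in the kernel. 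These kernel directions are the flat directions, along which pathway mass can be redistributed while the sum stays fixed.

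Combining the two steps, $DG$ has eigenvalue $1-\eta\lambda_i$ on the direction $J_i$ and eigenvalue $1$ on the flat directions. Along $J_i$, a perturbation is multiplied by $1-\eta\lambda_i$ each step, so it contracts iff $|1-\eta\lambda_i|<1$. Since $\lambda_i>0$, this is equivalent to $\eta\lambda_i<2$. Requiring this for every mode gives $\eta\lambda_{\max}<2$.

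The flat directions carry eigenvalue exactly $1$. They are neutral rather than expanding, so they do not cause instability, and stability is understood transversally to the manifold of minima. This manifold is precisely the set of flat directions: it is locally a manifold because the constraint $\sum_h\sigma_{hi}=\sigma_{\star i}$ has nonzero gradient $J_i$. The boundary case $\eta\lambda_{\max}=2$ gives eigenvalue $-1$, which is marginal. I will treat it that way and note that the strict inequality corresponds to $S\le 2/\eta$ with equality excluded from linear stability.

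The main obstacle is rigor about the restriction. The full parameter space contains directions leaving the SVS set, such as off-diagonal perturbations, whose Hessian eigenvalues are not covered by Lemma~\ref{lem:mode-eig}. I will state that the claim is about GD restricted to the invariant SVS depth-balanced manifold; Proposition~\ref{prop:depthwise-invariant} guarantees GD stays there, so linearizing the restricted map is legitimate. Within that restriction the spectral argument above is complete.
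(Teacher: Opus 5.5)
Your proposal is correct and follows essentially the same route as the paper: linearize GD at the zero-residual fixed point as $I-\eta\nabla^2\mathcal{L}(\Theta_\star)$, use the block-diagonal, rank-one (Gauss--Newton) structure of the SVS depth-balanced Hessian to get eigenvalues $1-\eta\lambda_i$ and $1$, and impose $|1-\eta\lambda_i|<1$. Your additions only make explicit what the paper's short proof leaves implicit: the statement concerns GD restricted to the invariant SVS (depth-balanced) subspace, stability is read transversally to the manifold of minima, and the case $\eta\lambda_{\max}=2$ is marginal, so the paper's ``$S(\Theta_\star)\le 2/\eta$'' should really be strict.
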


\begin{proof}
Linearizing GD at $\Theta_\star$ yields $\delta\Theta_{t+1}=(I-\eta\nabla^2\mathcal{L}(\Theta_\star))\delta\Theta_t$.
In SVS + depth-balance, the Hessian is block-diagonal across modes, and each mode-$i$ block is rank-one with
eigenvalue $\lambda_i$ in \eqref{eq:lambda_i_paths}. Stability on a positive-curvature direction requires
$|1-\eta\lambda_i|<1$, i.e., $0<\eta\lambda_i<2$. Taking the maximum over $i$ gives \eqref{eq:stability_condition}.
\end{proof}

\subsection{Why the stability constraint selects pathway-balanced minima}
\label{app:stability_selects_balanced}

For fixed mode $i$ and homogeneous depth $L>2$, define $p:=2-\frac{2}{L}>1$.
At any global minimum, the pathway end-to-end singular values satisfy the conservation constraint
$\sum_{h=1}^H\sigma_{hi}=\sigma_{\star i}$, while the mode-$i$ sharpness is proportional to
$\sum_h \sigma_{hi}^p$ by \eqref{eq:lambda_i_paths}.

\begin{theorem}[Balanced split is the flattest global minimum in each mode]
\label{thm:balanced_is_flattest}
Assume $L>2$. Among all global minima satisfying $\sum_h\sigma_{hi}=\sigma_{\star i}$, the mode-$i$ sharpness
$\lambda_i$ is minimized uniquely by the equal split
\begin{equation}
\sigma_{1i}=\cdots=\sigma_{Hi}=\frac{\sigma_{\star i}}{H},
\qquad
\lambda_i^{\min}=L\,H^{\frac{2}{L}-1}\,(\sigma_{\star i})^{\,2-\frac{2}{L}}.
\label{eq:lambda_min_equal_split}
\end{equation}
\end{theorem}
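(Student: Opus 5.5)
The plan is to reduce the statement to a strict convexity argument on the simplex. By \eqref{eq:lambda_i_paths}, at any global minimum in the SVS set restricted to depth balance, the mode-$i$ sharpness is
\[
\lambda_i = L\sum_{h=1}^H \sigma_{hi}^{p}, \qquad p = 2-\tfrac{2}{L}.
\]
The admissible set is the scaled simplex $\{\sigma_{hi}\ge 0,\ \sum_h \sigma_{hi}=\sigma_{\star i}\}$. Nonnegativity comes from the SVS chart (Definition~\ref{def:svs-multipath}), where the layer singular values are nonnegative, so the $\sigma_{hi}$ are as well. So we must minimize $F(x)=\sum_h x_h^p$ over this compact convex set, with the minimizer required to be unique.

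The key step is that $\phi(x)=x^p$ is strictly convex on $[0,\infty)$ when $p>1$, i.e.\ when $L>2$. Its second derivative $p(p-1)x^{p-2}$ is positive on $(0,\infty)$. Strict convexity then extends to the closed half-line, since $\phi$ is continuous at $0$ and strictly increasing.

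Apply Jensen's inequality exactly as in the proof of Theorem~\ref{thm:jensen}:
\[
\tfrac1H\sum_h \phi(\sigma_{hi}) \ge \phi\!\left(\tfrac{\sigma_{\star i}}{H}\right),
\]
which gives $\lambda_i \ge LH(\sigma_{\star i}/H)^p = L H^{2/L-1}\sigma_{\star i}^{2-2/L}$. Equality is attained at the equal split, so this value is the minimum.

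The only new ingredient compared with Theorem~\ref{thm:jensen} is \emph{uniqueness}, and that is where I would be careful. I would use the equality case of Jensen for a strictly convex function: equality holds iff all $\sigma_{hi}$ coincide. Alternatively, one can argue directly. If $\sigma_{ai}\ne\sigma_{bi}$ for some pair $a,b$, replace both by their average. This keeps the constraint and strictly decreases $\phi(\sigma_{ai})+\phi(\sigma_{bi})$ by strict convexity. Hence no non-equal configuration can be a minimizer.

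Compactness of the simplex and continuity of $F$ guarantee that a minimizer exists, so the equal split is the unique minimizer.

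Finally, I would note in passing that $L>2$ is needed. For $L=2$ we get $p=1$, so $F$ is constant on the constraint set and uniqueness fails. This confirms that the hypothesis is sharp and that the strict-convexity step is the crux.
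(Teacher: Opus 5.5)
Your proposal is correct and follows the paper's proof: Jensen's inequality for $x\mapsto x^{p}$ with $p=2-2/L>1$ gives the lower bound $LH(\sigma_{\star i}/H)^{p}$, and the equality case gives uniqueness. Your explicit use of \emph{strict} convexity, and of the nonnegativity of the $\sigma_{hi}$ from the SVS chart, makes precise two things the paper's one-line proof leaves implicit: it asserts the ``equality iff all equal'' case while only stating convexity on $\mathbb{R}_+$.
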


\begin{proof}
Since $p>1$, $x\mapsto x^p$ is convex on $\mathbb{R}_+$. By Jensen,
$\frac1H\sum_h \sigma_{hi}^p \ge \big(\frac1H\sum_h\sigma_{hi}\big)^p = (\sigma_{\star i}/H)^p$,
with equality iff all $\sigma_{hi}$ are equal. Multiply both sides by $LH$ and substitute $p=2-2/L$.
\end{proof}

\paragraph{Consequence (stability-driven drift).}
Combine Proposition~\ref{prop:gd_stability_sharpness} and Theorem~\ref{thm:balanced_is_flattest}.
If the stepsize satisfies
\begin{equation}
\frac{2}{\lambda_i^{\text{one-path}}} < \eta < \frac{2}{\lambda_i^{\min}},
\qquad
\lambda_i^{\text{one-path}} := L(\sigma_{\star i})^{\,2-\frac{2}{L}},
\label{eq:window_onepath_unstable_balanced_stable}
\end{equation}
then any sharp ``winner-takes-all'' minimum (single-path allocation) is unstable, while the equal-split minimum
is stable. Therefore, once GD enters a neighborhood where mode $i$ dominates the curvature, stability constraints
alone rule out convergence to sparse allocations and select balanced allocations as the only stable zero-loss
solutions. This provides a direct (assumption-free) mechanism linking ``stability constraints'' to drift toward
pathway-balanced, flatter minima in our multi-path DLN.

\section{Decay of a Pathway Balancing Gap at the Edge of Stability}
\label{app:path_gap_decay}

This appendix provides a local mechanism explaining why large-step GD reduces
pathway imbalance near the balanced minimum. The analysis is carried out on the
SVS depth-balanced manifold, where each singular mode decouples and the
multi-pathway dynamics reduce to scalar recursions for the per-layer variables
$a_{hi}(t)$. We first establish the invariance of this reduction and introduce
gradient-flow leaf coordinates that separate the balanced direction from the
pathway-imbalance directions. In these coordinates, the balanced minimum is the
unique local minimizer of terminal sharpness, and deviations from the balanced
leaf increase sharpness quadratically.

The main part of the appendix analyzes how discrete GD moves across these
gradient-flow leaves in the Edge-of-Stability regime. We begin with a
sign-free one-step contraction result in a strong-safety neighborhood. We then
prove the central two-step result: under a local self-stabilization condition
stating that the sharpness remains bracketed near the stability threshold,
the residual normal form implies sign alternation and yields a two-step
contraction of the pathway balancing gap, with an explicit quadratic
correction. Thus sign alternation is not imposed as an independent hypothesis;
it follows from the local EoS normal form together with sharpness bracketing.
Finally, we translate the two-step contraction into a formal slow-time
central-flow description, where the drift is active only when the sharpness
excess $(\eta\lambda-2)_+$ is positive, and we state precisely which conclusions
are rigorous local statements and which remain formal asymptotics.

\subsection{Invariance of SVS and preservation of depth-wise balance}
\label{app:svs_depthbalance_invariant}

We extend the SVS invariance argument of \citet{ghosh2025learning} (see also
their deferred SVS proofs) to the multi-pathway architecture
$M=\sum_{h=1}^H\Omega_h$ with $\Omega_h=\prod_{\ell=1}^L W_{h\ell}$.

\begin{lemma}[SVS is invariant under diagonal (SVS) initialization]
\label{lem:svs_invariant_multipath}
Assume the target is $M_\star = V\Sigma_\star V^\top$ and the initialization
satisfies $W_{h\ell}(0)=V\Sigma_{h\ell}(0)V^\top$ with all $\Sigma_{h\ell}(0)$
diagonal (in particular, scalar multiples of the identity, as in our
initialization). Then for all GD iterates $t\ge 0$,
\begin{equation}
W_{h\ell}(t)=V\Sigma_{h\ell}(t)V^\top
\quad\text{with}\quad
\Sigma_{h\ell}(t)\ \text{diagonal},
\end{equation}
so the dynamics remain in the SVS set and decouple on singular values.
\end{lemma}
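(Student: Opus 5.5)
We argue by induction on $t$. The inductive claim is that $W_{h\ell}(t)=V\Sigma_{h\ell}(t)V^\top$ holds for every pathway $h$ and layer $\ell$, with each $\Sigma_{h\ell}(t)$ diagonal. The base case $t=0$ is exactly the hypothesis. When each $\Sigma_{h\ell}(0)=\alpha_h^{1/L_h}I_d$, the factorization holds trivially, since $I_d=VV^\top$ for any orthogonal $V$.

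For the inductive step, assume the form holds at time $t$. The key computation is the gradient formula \eqref{eq:matrix-gradient},
\[
\nabla_{W_{h\ell}}\mathcal{L}=W_{h,L_h:\ell+1}^\top\,(M-M_\star)\,W_{h,\ell-1:1}^\top .
\]

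We first simplify each factor using the inductive form.
\begin{itemize}
\item Because all layers share the single basis $V$ on both sides, adjacent factors $V^\top V=I$ cancel. Every partial product therefore collapses to $V D V^\top$ with $D$ a product of diagonal matrices, hence diagonal. Transposing preserves this form, since diagonal matrices are symmetric.
\item The same collapse gives $\Omega_h=V\bigl(\prod_\ell\Sigma_{h\ell}\bigr)V^\top$. Summing over $h$ shows that $M$ is diagonal in the basis $V$.
\item By the symmetric-target assumption $M_\star=V\Sigma_\star V^\top$, the residual is $M-M_\star=V\,\mathrm{diag}(r_1,\dots,r_d)\,V^\top$.
\end{itemize}

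Substituting these into the gradient formula, all inner $V^\top V$ factors cancel. The gradient becomes $V D_{h\ell}V^\top$, where $D_{h\ell}$ is diagonal with entries $r_i\prod_{j\neq\ell}\sigma^{(j)}_{hi}$; this is the same entrywise formula as \eqref{eq:sv-update}. The GD step then gives
\[
W_{h\ell}(t+1)=V\bigl(\Sigma_{h\ell}(t)-\eta D_{h\ell}\bigr)V^\top,
\]
which again has diagonal core and closes the induction. Decoupling follows at once: each diagonal entry evolves using only mode-$i$ quantities, and Lemma~\ref{lem:modewise-decouple} gives the loss decomposition.

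The only point needing care is why the target must be symmetric. Here all intermediate bases $Q_{h\ell}$ equal $V$ and the two endpoints coincide, so $Q_{h1}=Q_{hL_h+1}=V$. For a general target $U_\star\Sigma_\star V_\star^\top$ one would instead need intermediate orthogonals chaining $V_\star$ to $U_\star$, and the initialization $\alpha_h^{1/L_h}I$ would not fit that chart unless $U_\star=V_\star$. I would state explicitly that this is where symmetry enters. I would also note that the diagonal entries may become negative; they are then signed SVS coordinates rather than singular values, which does not affect the invariance argument.
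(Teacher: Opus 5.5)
Your proposal is correct and follows the paper's proof essentially step for step. Both argue by induction on $t$, collapse the partial products and the residual $M-M_\star$ into the common basis $V$, and observe that the gradient $W_{h,L_h:\ell+1}^\top(M-M_\star)W_{h,\ell-1:1}^\top$ again has the form $V G_{h\ell} V^\top$ with $G_{h\ell}$ diagonal, so the GD step preserves the form. Your side remarks on where symmetry of the target enters (more precisely, the obstruction is $U_\star\neq V_\star$ up to column signs) and on signed diagonal entries are accurate and slightly more explicit than the paper.
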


\begin{proof}
Assume inductively that $W_{h\ell}(t)=V\Sigma_{h\ell}(t)V^\top$ with diagonal
$\Sigma_{h\ell}(t)$ for all $h,\ell$. Each pathway product is
$\Omega_h(t)=V\bigl(\prod_\ell\Sigma_{h\ell}(t)\bigr)V^\top$, hence the total map
\[
M(t)=\sum_{h=1}^H\Omega_h(t)=V\Bigl(\sum_{h=1}^H\prod_{\ell=1}^L\Sigma_{h\ell}(t)\Bigr)V^\top
\]
is diagonal in the same basis. The residual $R(t):=M(t)-M_\star$ therefore
satisfies $R(t)=V\Delta(t)V^\top$ with diagonal $\Delta(t)$. For the squared
Frobenius loss,
\[
\nabla_{W_{h\ell}}\mathcal{L}(\Theta(t))=W_{h,L:\ell+1}(t)^\top R(t)\,W_{h,\ell-1:1}(t)^\top.
\]
Each factor is diagonal in the $V$ basis, so
$\nabla_{W_{h\ell}}\mathcal{L}=V G_{h\ell}V^\top$ with diagonal $G_{h\ell}$.
The GD update preserves the diagonal SVS form.
\end{proof}

\begin{lemma}[Depth-wise balance within each pathway is preserved]
\label{lem:depthbalance_preserved}
Assume homogeneous depth $L$ and $\sigma_{hi}^{(1)}(t)=\cdots=\sigma_{hi}^{(L)}(t)$
for every $h,i$ at some iterate $t$. Then the same equalities hold at $t+1$.
\end{lemma}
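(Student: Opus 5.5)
The argument reuses the per-layer update formula \eqref{eq:sv-update} from Proposition~\ref{prop:depthwise-invariant}. That formula is valid here because, by Lemma~\ref{lem:svs_invariant_multipath}, the iterate at time $t$ lies in the SVS set, so the gradient is diagonal in the fixed basis. We then show that, under the hypothesis, every layer of a given pathway receives the same update in every mode.

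Fix a pathway $h$ and a mode $i$, and write $s_{hi}(t)$ for the common value $\sigma_{hi}^{(1)}(t)=\cdots=\sigma_{hi}^{(L)}(t)$. Equation \eqref{eq:sv-update} gives
\[
\sigma_{hi}^{(\ell)}(t+1)=\sigma_{hi}^{(\ell)}(t)-\eta\, r_i(t)\prod_{j\neq \ell}\sigma_{hi}^{(j)}(t),
\]
where $r_i(t)=\sum_{k}\sigma_{ki}(t)-\sigma_{\star i}$ depends only on the mode and not on the layer index $\ell$. Under the hypothesis, the product over the other $L-1$ layers equals $s_{hi}(t)^{L-1}$ whatever $\ell$ is. Therefore
\[
\sigma_{hi}^{(\ell)}(t+1)=s_{hi}(t)-\eta\, r_i(t)\, s_{hi}(t)^{L-1}
\qquad\text{for every }\ell\in[L].
\]
The right-hand side does not depend on $\ell$, so the layer singular values of pathway $h$ in mode $i$ are still all equal at time $t+1$. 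Since $h$ and $i$ were arbitrary, this proves the claim.

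One point needs care: the update must stay inside the SVS chart, so that the diagonal entries really are the layer coordinates at $t+1$. Lemma~\ref{lem:svs_invariant_multipath} guarantees this. If the common value turns negative, the statement is still exactly an equality of diagonal entries, so the conclusion holds; only the interpretation of those entries as singular values changes. This is the only subtlety, and the argument is otherwise a direct substitution.
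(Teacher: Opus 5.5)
Your proof is correct and follows the paper's argument: both use SVS invariance (Lemma~\ref{lem:svs_invariant_multipath}) to reduce GD to a per-layer diagonal update. Both then observe that under depth-wise balance the factor $\prod_{j\neq\ell}\sigma_{hi}^{(j)}(t)=s_{hi}(t)^{L-1}$ does not depend on $\ell$, so every layer of pathway $h$ receives the same update in mode $i$; the only cosmetic difference is that you quote \eqref{eq:sv-update}, which was derived from the matrix gradient, whereas the paper recomputes $\partial\mathcal{L}_i/\partial\sigma_{hi}^{(\ell)}$ by the product rule on the scalar mode loss.
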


\begin{proof}
By Lemma~\ref{lem:svs_invariant_multipath}, the dynamics reduce to diagonal
singular values. Fix a mode $i$ and let $x_{h\ell}:=\sigma_{hi}^{(\ell)}(t)$,
$s_h:=\prod_\ell x_{h\ell}$. The mode-$i$ residual is $r:=\sum_h s_h-\sigma_{\star i}$
and $\mathcal{L}_i=\tfrac12 r^2$. By the product rule,
$\partial\mathcal{L}_i/\partial x_{h\ell}=r\cdot s_h/x_{h\ell}=r\prod_{j\ne\ell}x_{hj}$.
If $x_{h1}=\cdots=x_{hL}$, then $\prod_{j\ne\ell}x_{hj}$ is independent of $\ell$,
so every layer in pathway $h$ receives the identical mode-$i$ update.
Depth-wise balance is preserved.
\end{proof}

\subsection{Mode-wise scalar dynamics on the depth-balanced manifold}
\label{app:scalar_dynamics}

By Lemma~\ref{lem:depthbalance_preserved}, for each mode $i$ and pathway $h$ we
can write $\sigma_{hi}^{(\ell)}(t)=a_{hi}(t)$ for all $\ell\in[L]$, so the
end-to-end pathway singular value is $\sigma_{hi}(t)=a_{hi}(t)^L$. Let
\begin{equation}
r_i(t):=\sum_{h=1}^H a_{hi}(t)^L-\sigma_{\star i}.
\label{eq:residual_ai}
\end{equation}
Then GD on $\{a_{hi}\}$ obeys the exact recursion
\begin{equation}
a_{hi}(t+1)=a_{hi}(t)-\eta\,r_i(t)\,a_{hi}(t)^{L-1}.
\label{eq:ai_update}
\end{equation}

\paragraph{Notation for the remainder of this appendix.}
We fix a mode $i$ and drop the index. Write $a_h:=a_{hi}$, $r:=r_i$, and
\[
a_\star:=(\sigma_\star/H)^{1/L},\qquad
\bar a(t):=\frac{1}{H}\sum_h a_h(t),\qquad
\delta_h(t):=a_h(t)-\bar a(t),\qquad
\mathcal{G}(t):=\sum_h\delta_h(t)^2.
\]
The balanced equilibrium is $a_h=a_\star$ for all $h$, at which $\mathcal{G}=0$.
The sharpness of the balanced minimum is
$\lambda^{\min}=LH^{2/L-1}\sigma_\star^{2-2/L}=LH a_\star^{2L-2}$.

\subsection{Gradient-flow foliation and local leaf-sharpness expansion}
\label{app:gf_foliation}

The discrete dynamics~\eqref{eq:ai_update} has a gradient-flow (GF) limit
\[
\dot a_h=-r\,a_h^{L-1}.
\]
In this subsection we identify the conserved quantities of this GF limit and use
them to define leaf coordinates for pathway imbalance. These coordinates separate
motion along a GF leaf from motion across leaves. We then show that, on the
zero-loss manifold, the terminal sharpness increases quadratically with the
leaf-coordinate norm near the balanced leaf. Thus the balanced leaf is the unique
local sharpness minimizer among nearby GF leaves.

\paragraph{Centered leaf coordinates.}
Define
\[
q_h(t):=a_h(t)^{\,2-L},
\qquad
\bar q(t):=\frac{1}{H}\sum_h q_h(t),
\qquad
z_h(t):=q_h(t)-\bar q(t),
\]
so $\sum_h z_h(t)=0$ and $z(t)\in\mathbf{1}^\perp\subset\mathbb{R}^H$.

\begin{proposition}[GF foliation in centered coordinates]
\label{prop:gf_foliation}
Let $L>2$. Under the gradient flow $\dot a_h=-r\,a_h^{L-1}$,
\[
\frac{d}{dt}q_h(t)=(L-2)\,r(t)\quad\text{for all }h,
\]
so the centered vector $z(t)$ is conserved. The phase space $\mathbb{R}_{>0}^H$
is therefore foliated by affine level sets in $q$-coordinates: for each
$z\in\mathbf{1}^\perp$, the GF leaf is
\begin{equation}
\label{eq:q_leaf_affine}
q=m\,\mathbf{1}+z,\qquad m>-\min_h z_h,
\end{equation}
and its image in $a$-coordinates is the one-dimensional curve
$a_h=(m+z_h)^{-1/(L-2)}$. The unique leaf containing the balanced minimum is
$z=0$.
\end{proposition}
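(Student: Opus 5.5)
The plan is a direct chain-rule computation on $q_h=a_h^{2-L}$, followed by reading off the leaf structure. Under the gradient flow $\dot a_h=-r\,a_h^{L-1}$, with $a_h>0$, differentiate:
\[
\frac{d}{dt}q_h=(2-L)\,a_h^{1-L}\,\dot a_h=(2-L)\,a_h^{1-L}\bigl(-r\,a_h^{L-1}\bigr)=(L-2)\,r .
\]
The important feature is that the right-hand side does not depend on $h$. The exponent $2-L$ is chosen precisely so that the factor $a_h^{L-1}$ cancels.

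Conservation of $z$ then follows at once. Averaging over $h$ gives $\dot{\bar q}=(L-2)r$, so $\dot z_h=\dot q_h-\dot{\bar q}=0$. Hence $z(t)\equiv z(0)\in\mathbf 1^\perp$, and every trajectory stays in the affine set $\{q: q-\bar q\mathbf 1=z\}=\{m\mathbf 1+z\}$ with $m=\bar q$.

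Next I would check that these sets foliate $\mathbb R^H_{>0}$. Since $L>2$, the map $a\mapsto a^{2-L}$ is a diffeomorphism from $(0,\infty)$ onto $(0,\infty)$, with inverse $q\mapsto q^{-1/(L-2)}$. Each $a\in\mathbb R^H_{>0}$ therefore has a unique $q\in\mathbb R^H_{>0}$. The decomposition $q=\bar q\mathbf 1+z$ is unique, so the leaves for different $z$ are disjoint and together they cover the space. Positivity $q_h>0$ is equivalent to $m+z_h>0$ for all $h$, which gives $m>-\min_h z_h$. Mapping back gives the curve $a_h=(m+z_h)^{-1/(L-2)}$. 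This curve is one-dimensional and injective in $m$, because each coordinate is strictly decreasing in $m$.

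For the balanced leaf: the balanced minimum $a_h=a_\star$ for all $h$ has all $q_h$ equal, so $z=0$. Conversely, $z=0$ forces all $q_h$, and hence all $a_h$, to be equal. Since the leaves are disjoint, $z=0$ is the unique leaf containing the balanced minimum.

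No step is hard. The only care needed is to use $L>2$ (so that the map is a bijection and the exponent is nonzero) and $a_h>0$ (so that the powers are defined). I would also note that on the curve the flow moves $m$ at rate $(L-2)r$, which confirms the curve is the entire leaf.
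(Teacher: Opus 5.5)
Your proof is correct and is essentially the paper's argument: the same chain-rule computation gives $\dot q_h=(L-2)r$ independently of $h$, hence $z$ is conserved, and $q_h>0$ yields the constraint $m>-\min_h z_h$. You additionally spell out the bijectivity of $a\mapsto a^{2-L}$ on $(0,\infty)$ and the disjointness and covering of the leaves, which the paper leaves implicit; your closing remark about the flow is not needed, since a leaf is the level set of $z$ rather than a single trajectory, and a trajectory traces only part of it.
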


\begin{proof}
$\frac{d}{dt}q_h=(2-L)a_h^{1-L}\dot a_h=(2-L)a_h^{1-L}(-r\,a_h^{L-1})=(L-2)r$,
independent of $h$. Hence $z_h(t)=q_h(t)-\bar q(t)$ is constant. The lower bound on
$m$ in~\eqref{eq:q_leaf_affine} ensures $q_h>0$, equivalently $a_h\in\mathbb{R}_{>0}$.
\end{proof}

\paragraph{Equivalence of $V$ and $\mathcal{G}$.}
The quantity naturally controlled by the GF foliation is the variance of the
centered leaf coordinate,
\begin{equation}
\label{eq:def_V}
V(t):=\sum_h z_h(t)^2=\sum_h\bigl(q_h(t)-\bar q(t)\bigr)^2.
\end{equation}
The quantity used in the main text to measure pathway imbalance is
\[
\mathcal{G}(t):=\sum_h\delta_h(t)^2,
\qquad
\delta_h(t):=a_h(t)-\bar a(t).
\]
Near the balanced point these two quantities are locally equivalent. Indeed, if
$a_h=a_\star+\delta_h$ with $\sum_h\delta_h=0$, then Taylor expansion of
$q_h=a_h^{2-L}$ around $a_\star$ gives
\[
q_h-\bar q=(2-L)a_\star^{1-L}\delta_h+O(\varepsilon^2),
\qquad
\varepsilon:=\max_h|\delta_h|.
\]
Consequently,
\begin{equation}
\label{eq:V_G_corrected}
V=(L-2)^2\,a_\star^{\,2-2L}\,\mathcal{G}
  +O(\mathcal{G}^{3/2}).
\end{equation}
Thus contraction of the leaf-coordinate variance $V$ is equivalent, locally, to
contraction of the pathway balancing gap $\mathcal{G}$ up to a positive
multiplicative factor.

\begin{proposition}[Local sharpness expansion across GF leaves]
\label{prop:leaf_sharpness}
Let $a(z)\in\mathcal{M}=\{a:\sum_h a_h^L=\sigma_\star\}$ be the zero-loss point on
the GF leaf indexed by $z\in\mathbf{1}^\perp$, and define the restricted-Hessian
sharpness
\[
\Lambda(z):=L\sum_h a_h(z)^{2L-2}.
\]
Then $z=0$ corresponds to the balanced minimum $a_h=a_\star$ with
$\Lambda(0)=\lambda^{\min}=LH a_\star^{2L-2}$, and
\begin{equation}
\label{eq:leaf_sharpness_explicit}
\Lambda(z)=\lambda^{\min}
+\frac{L(L-1)}{L-2}\,a_\star^{\,4L-6}\,\|z\|^2
+O(\|z\|^3).
\end{equation}
Equivalently, with $P$ denoting the orthogonal projection onto
$\mathbf{1}^\perp$,
\begin{equation}
\label{eq:H_leaf_explicit}
H_{\mathrm{leaf}}
=
\frac{2L(L-1)}{L-2}\,a_\star^{\,4L-6}\,P
\succ 0
\quad\text{on }\mathbf{1}^\perp.
\end{equation}
Hence the balanced leaf $z=0$ is the unique local minimizer of the terminal
sharpness among nearby GF leaves.
\end{proposition}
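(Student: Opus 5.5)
The plan is to parametrize the zero-loss point on each gradient-flow leaf explicitly in the leaf coordinates of Proposition~\ref{prop:gf_foliation}. Then I Taylor-expand both the constraint and $\Lambda$ to second order in $z$. Set $\kappa:=1/(L-2)>0$. On the leaf indexed by $z\in\mathbf{1}^\perp$ we have $q_h=m+z_h$, hence
\[
a_h=(m+z_h)^{-\kappa}, \qquad a_h^p=g_p(m+z_h), \qquad g_p(x):=x^{-s_p}, \qquad s_p:=p\kappa .
\]
At $z=0$ the zero-loss condition $\sum_h a_h^L=\sigma_\star$ forces $a_h=a_\star$, i.e.\ $m_0:=a_\star^{2-L}$. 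For small $z$, I define $m(z)=m_0+\mu(z)$ by the zero-loss constrait $F(\mu,z):=\sum_h g_L(m_0+\mu+z_h)-Hg_L(m_0)=0$. Since $\partial_\mu F(0,0)=Hg_L'(m_0)\neq 0$, the implicit function theorem gives a smooth $\mu(z)$ with $\mu(0)=0$. This also shows $a(z)$ is well defined and the positivity constraint $m>-\min_h z_h$ holds near $z=0$.

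Next I show $\mu=O(\|z\|^2)$ and compute its quadratic coefficient. The first-order term of $F$ is $g_L'(m_0)\bigl(H\mu+\sum_h z_h\bigr)=g_L'(m_0)H\mu$, because $\sum_h z_h=0$. So the linear part of $\mu$ vanishes. At second order,
\[
Hg_L'(m_0)\,\mu+\tfrac12 g_L''(m_0)\,\|z\|^2=O(\|z\|^3),
\]
and using $g'=-s\,x^{-s-1}$ and $g''=s(s+1)x^{-s-2}$ this yields
\[
\mu=\frac{s_L+1}{2Hm_0}\,\|z\|^2+O(\|z\|^3).
\]

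Then I expand $\Lambda(z)=L\sum_h g_{2L-2}(m_0+\mu+z_h)$ in the same way. The linear term in $z$ again cancels by $\sum_h z_h=0$, leaving
\[
\Lambda(z)=\lambda^{\min}+L\Bigl[Hg_{2L-2}'(m_0)\,\mu+\tfrac12 g_{2L-2}''(m_0)\,\|z\|^2\Bigr]+O(\|z\|^3).
\]
Substituting $\mu$, the bracket becomes $\tfrac12 s_{2L-2}\,m_0^{-s_{2L-2}-2}\,(s_{2L-2}-s_L)\,\|z\|^2$. The key simplification is $s_{2L-2}-s_L=(L-2)\kappa=1$. The coefficient is therefore
\[
\frac{L(L-1)}{L-2}\,m_0^{-s_{2L-2}-2}.
\]
Using $m_0=a_\star^{2-L}$, we get $m_0^{-s_{2L-2}-2}=a_\star^{(2L-2)+2(L-2)}=a_\star^{4L-6}$, which is \eqref{eq:leaf_sharpness_explicit}.

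The Hessian statement \eqref{eq:H_leaf_explicit} is the same quadratic form written as $\tfrac12 z^\top H_{\mathrm{leaf}}z$ on $\mathbf{1}^\perp$, where $\|z\|^2=z^\top Pz$. Positive definiteness follows from $L>2$. Combined with $\Lambda(0)=LHa_\star^{2L-2}=\lambda^{\min}$, this gives the strict local minimality of the balanced leaf. Uniqueness comes from the fact that the quadratic term is strictly positive for $z\neq0$ and dominates the cubic remainder near $0$.

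The main obstacle is the bookkeeping of the second-order correction $\mu$. Forgetting that the constraint shifts $m$ at order $\|z\|^2$ would give the wrong coefficient, namely the curvature of $g_{2L-2}$ alone. Care is also needed with signs when dividing by $g_L'(m_0)<0$. I will double-check the cancellation $s_{2L-2}-s_L=1$, since the clean final constant hinges on it.
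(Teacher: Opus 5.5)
Your proposal is correct and follows the same route as the paper's proof sketch. You parametrize the leaf as $q=m\mathbf{1}+z$, fix $m(z)$ implicitly from the zero-loss constraint, and expand $\Lambda$ to second order; your explicit $O(\|z\|^2)$ shift of $m$ and the cancellation $s_{2L-2}-s_L=1$ correctly give the coefficient $\frac{L(L-1)}{L-2}a_\star^{4L-6}$. The only cosmetic difference is that you kill the linear term directly via $\sum_h z_h=0$, while the paper appeals to the Jensen argument; both are valid.
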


\begin{proof}[Proof sketch]
Parametrize the leaf as $q_h=m+z_h$. The constraint $\sum_h a_h^L=\sigma_\star$
fixes $m=m(z)$ implicitly; at $z=0$, $m=a_\star^{2-L}$ and $a_h=a_\star$ for all
$h$. A second-order expansion of $\Lambda(z)$ using $a_h=(m(z)+z_h)^{-1/(L-2)}$
together with the implicit derivative of $m(z)$ yields~\eqref{eq:leaf_sharpness_explicit}.
The first-order term in $z$ vanishes by the Jensen argument of Theorem~4.2.
\end{proof}

Combining~\eqref{eq:leaf_sharpness_explicit} with~\eqref{eq:V_G_corrected} gives,
on the zero-loss manifold,
\begin{equation}
\label{eq:lambda_G_from_leaf}
\lambda(a)=\lambda^{\min}+C_\lambda\,\mathcal{G}+O(\mathcal{G}^{3/2}),
\qquad
C_\lambda:=L(L-1)(L-2)\,a_\star^{\,2L-4}>0.
\end{equation}

\subsection{Sign-free step-wise contraction (strong-safety regime)}
\label{app:signfree_stepwise}

The change of variable $\tilde a_h:=a_h^{2-L}$ converts the GD
update~\eqref{eq:ai_update} into a coordinate-wise application of a single
one-dimensional map.

\begin{lemma}[One-dimensional reduction]
\label{lem:one_d_map}
Set $\tilde a_h(t):=a_h(t)^{2-L}$ and $c(t):=\eta\,r(t)$. Then for every $h$,
\begin{equation}
\tilde a_h(t+1)=F_{c(t)}\bigl(\tilde a_h(t)\bigr),
\qquad
F_c(u):=\frac{u^{L-1}}{(u-c)^{L-2}},
\label{eq:Fc}
\end{equation}
provided $u-c>0$.
\end{lemma}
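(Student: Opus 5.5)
This is a direct algebraic substitution into the exact recursion \eqref{eq:ai_update}, after which the claimed form of $F_c$ should appear on its own. Fix a pathway $h$ and write $a:=a_h(t)$, $u:=\tilde a_h(t)=a^{2-L}$ and $c:=\eta\,r(t)$. The key structural fact is that $c$ is common to all pathways, because the residual $r(t)$ is shared across $h$. So the only pathway-dependent input to the update is $a_h(t)$ itself, and the claim reduces to a one-variable identity.

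\textbf{Step 1 (factor the update).} From \eqref{eq:ai_update},
\[
a_h(t+1)=a-c\,a^{L-1}=a\bigl(1-c\,a^{L-2}\bigr).
\]
Since $a^{L-2}=u^{-1}$ (taking $a>0$ and $L>2$, so $u>0$), we get
\[
1-c\,a^{L-2}=\frac{u-c}{u}.
\]
Hence
\[
a_h(t+1)=a\cdot\frac{u-c}{u}.
\]

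\textbf{Step 2 (raise to the power $2-L$).} The hypothesis $u-c>0$ makes $a_h(t+1)$ positive, so the power $2-L$ is well defined. This gives
\[
\tilde a_h(t+1)=a^{2-L}\Bigl(\frac{u-c}{u}\Bigr)^{2-L}=u\cdot\frac{u^{L-2}}{(u-c)^{L-2}}=\frac{u^{L-1}}{(u-c)^{L-2}}=F_c(u).
\]

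\textbf{Main obstacle.} The only delicate point is the domain bookkeeping. We must ensure $a_h(t)>0$ so that $u$ is finite and positive; this is the setting of the SVS chart, with nonnegative singular values and the positive initialization \eqref{eq:balanced_init}. The proviso $u-c>0$ is exactly what keeps $a_h(t+1)>0$, so the real power is legitimate and no sign flip occurs. I would state explicitly that positivity of the previous iterate together with $u>c$ propagates positivity, and note that the resulting map $F_c$ is the same for every $h$.
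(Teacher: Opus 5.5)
Your proposal is correct and follows the paper's proof essentially verbatim: factor the update as $a_h(t+1)=a_h(t)\bigl(1-c/\tilde a_h(t)\bigr)$, then raise to the power $2-L$ and simplify. Your explicit positivity bookkeeping (requiring $a_h(t)>0$ and using $u-c>0$ to keep $a_h(t+1)>0$) is a small addition that the paper leaves implicit.
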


\begin{proof}
From~\eqref{eq:ai_update},
$a_h(t+1)=a_h(t)(1-\eta r(t)\,a_h(t)^{L-2})=a_h(t)(1-c(t)/\tilde a_h(t))$. Hence
$\tilde a_h(t+1)=a_h(t+1)^{2-L}=\tilde a_h(t)\bigl(\tilde a_h(t)/(\tilde a_h(t)-c)\bigr)^{L-2}$,
which simplifies to the claimed form.
\end{proof}

The crucial point is that $F_{c(t)}$ does \emph{not} depend on $h$. Hence
$V(t+1)$ is determined entirely by the dispersion of $F_{c(t)}$ on
$\{\tilde a_h(t)\}_{h=1}^H$.

\begin{lemma}[Monotonicity and non-expansion of $F_c$]
\label{lem:F_monotone}
Let $L>2$ and suppose either (i)~$c\le 0$, or (ii)~$c>0$ with
$c\,\max_h a_h^{L-2}\le 1/(L-1)$. Then on
$I:=[\min_h\tilde a_h,\max_h\tilde a_h]$,
\begin{equation}
\label{eq:F_prime_bounds}
0\le F'_c(u)\le 1,
\end{equation}
with strict inequality $F'_c(u)<1$ whenever $c\ne 0$.
\end{lemma}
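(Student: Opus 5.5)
We prove Lemma~\ref{lem:F_monotone} by a direct computation of $F_c'$ followed by a sign analysis of a single auxiliary factor. Write $F_c(u)=u^{L-1}(u-c)^{2-L}$ for $u>c$. Logarithmic differentiation gives
\[
\frac{F_c'(u)}{F_c(u)}=\frac{L-1}{u}-\frac{L-2}{u-c}=\frac{(L-1)(u-c)-(L-2)u}{u(u-c)}=\frac{u-(L-1)c}{u(u-c)},
\]
so that
\[
F_c'(u)=\Bigl(\frac{u}{u-c}\Bigr)^{L-2}\cdot\frac{u-(L-1)c}{u-c}.
\]
Introduce the ratio $s:=c/u$. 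It is the relevant small quantity, because $u=\tilde a_h=a_h^{2-L}$ gives $s=c\,a_h^{L-2}$. Then
\[
F_c'(u)=g(s):=(1-s)^{1-L}\bigl(1-(L-1)s\bigr),
\]
and the whole lemma reduces to studying the one-variable function $g$ on the range of $s$ allowed by the hypotheses.

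\textbf{Identify the admissible $s$-range.} For every $u\in I$ we have $u\ge\min_h\tilde a_h$. Since $L>2$, the map $a\mapsto a^{2-L}$ is decreasing, so $\min_h\tilde a_h=(\max_h a_h)^{2-L}$. Consequently
\[
s=\frac{c}{u}\le c\,\max_h a_h^{L-2}.
\]
Under hypothesis (ii) this yields $0<s\le 1/(L-1)<1$, so in particular $u>c$ and $F_c$ is well defined on $I$. Under hypothesis (i) we have $s\le 0$ and $u-c>0$ automatically.

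\textbf{Check the bounds on $g$.} The lower bound $g\ge 0$ is immediate in both cases. The factor $(1-s)^{1-L}$ is positive, and $1-(L-1)s\ge 0$ because either $s\le 0$ or $s\le 1/(L-1)$.

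For the upper bound we compute the log-derivative of $g$:
\[
\frac{d}{ds}\log g=\frac{L-1}{1-s}-\frac{L-1}{1-(L-1)s}=\frac{(L-1)\bigl(-(L-2)s\bigr)}{(1-s)\bigl(1-(L-1)s\bigr)}.
\]
Its sign is that of $-s$ whenever $1-(L-1)s>0$. Hence $g$ increases on $s<0$, decreases on $0<s<1/(L-1)$, and attains its maximum $g(0)=1$ at $s=0$. This gives $F_c'(u)\le 1$, with strict inequality whenever $s\ne 0$, that is, whenever $c\ne 0$.

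The endpoint $s=1/(L-1)$ needs a separate word, since the log-derivative formula breaks down there. At that point $g=0$ directly, which is consistent with both bounds.

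\textbf{Main obstacle.} The only delicate step is the translation from the hypothesis, which is stated in terms of $\max_h a_h$, to a uniform bound on $s$ over the whole interval $I$. This relies on the reversal of order under $a\mapsto a^{2-L}$ when $L>2$. We also use the elementary observation that every $u\in I$, not only the sample points $\tilde a_h$, is at least $\min_h\tilde a_h$. Once this is in place, the rest is the monotonicity computation for $g$ given above.
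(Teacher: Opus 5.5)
Your proof is correct and follows essentially the same route as the paper. Both compute $F_c'(u)=u^{L-2}\bigl(u-(L-1)c\bigr)/(u-c)^{L-1}$, read off $F_c'\ge 0$ from the sign of $u-(L-1)c$, and obtain $F_c'<1$ from a one-variable monotonicity argument centered at $c=0$. The paper differentiates $(u-c)^{L-1}-u^{L-2}\bigl(u-(L-1)c\bigr)$ in $c$ at fixed $u$, which is equivalent to your log-derivative in $s=c/u$ because $F_c'(u)$ is homogeneous of degree zero in $(u,c)$. Your explicit step from the hypothesis on $\max_h a_h$ to $s\le 1/(L-1)$ on all of $I$ fills in what the paper checks only at the sample points $u=\tilde a_h$.
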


\begin{proof}
$F'_c(u)=u^{L-2}\bigl(u-(L-1)c\bigr)/(u-c)^{L-1}$. The denominator is positive
since $u>c$. The factor $u-(L-1)c$ is positive when $c\le 0$, and when $c>0$ it is
positive iff $u\ge(L-1)c$. In terms of $a_h$, $u=a_h^{2-L}\ge(L-1)c$ rewrites as
$\eta r\,a_h^{L-2}\le 1/(L-1)$, hypothesis (ii). For non-expansion, let
$g(c):=(u-c)^{L-1}-u^{L-2}\bigl(u-(L-1)c\bigr)$. Then $g(0)=0$ and
$g'(c)=(L-1)\bigl[u^{L-2}-(u-c)^{L-2}\bigr]$ is strictly positive for $c>0$ and
strictly negative for $c<0$, hence $g(c)>0$ for all $c\ne 0$. This is equivalent
to $F'_c(u)<1$.
\end{proof}

\begin{definition}[Strong-safety condition]
\label{def:strong_safety}
At iterate $t$,
\begin{equation}
r(t)\le 0,\qquad\text{or}\qquad r(t)>0\ \text{and}\
\eta\,r(t)\,\max_h a_h(t)^{L-2}\le\tfrac{1}{L-1}.
\tag{$\star$}
\label{eq:strong_safety}
\end{equation}
\end{definition}

\begin{theorem}[Sign-free step-wise contraction]
\label{thm:signfree_stepwise}
Let $L>2$ and suppose the strong-safety condition~\eqref{eq:strong_safety} holds at
iterate $t$. Then
\begin{equation}
V(t+1)\le V(t),
\label{eq:V_step_decay}
\end{equation}
with strict inequality whenever $r(t)\ne 0$ and the $a_h(t)$ are not all equal.
\end{theorem}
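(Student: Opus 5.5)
The plan is to reduce the statement to a one-dimensional fact about the map $F_c$ of Lemma~\ref{lem:one_d_map}, using that the same map $F_{c(t)}$ with $c(t)=\eta r(t)$ acts coordinate-wise on $\tilde a_h(t)=a_h(t)^{2-L}=q_h(t)$. In the notation of \eqref{eq:def_V}, $V(t)=\sum_h(\tilde a_h(t)-\bar q(t))^2$ and $V(t+1)=\sum_h\bigl(F_c(\tilde a_h(t))-\overline{F_c(\tilde a)}\bigr)^2$. The idea is that a map with $0\le F_c'\le 1$ on the convex hull $I$ of the points is a contraction in the sense of pairwise distances, and the centered sum of squares is a function of pairwise distances only.

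First, I would verify that the strong-safety condition \eqref{eq:strong_safety} is exactly what is needed to invoke Lemma~\ref{lem:F_monotone}. Case (i), $r(t)\le0$, gives $c\le0$. Case (ii) gives $c>0$ with $c\max_h a_h^{L-2}\le 1/(L-1)$. I also need $u-c>0$ on $I$ so that Lemma~\ref{lem:one_d_map} applies and $F_c$ is smooth there. For $c\le0$ this is automatic since $u>0$. For $c>0$ it follows from $u\ge (L-1)c>c$ when $L>2$. One subtlety: Lemma~\ref{lem:F_monotone} is phrased pointwise at the $\tilde a_h$, but the condition $u\ge(L-1)c$ is monotone in $u$. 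It therefore holds on all of $I$ once it holds at $\min_h\tilde a_h$, which corresponds to $\max_h a_h$, and this is precisely the hypothesis. So $0\le F_c'\le1$ on $I$.

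Next, I would use the pairwise identity $V=\frac{1}{2H}\sum_{h,k}(\tilde a_h-\tilde a_k)^2$, which holds for any centered sum of squares, at both $t$ and $t+1$. By the mean value theorem on $I$, $F_c(\tilde a_h)-F_c(\tilde a_k)=F_c'(\xi_{hk})(\tilde a_h-\tilde a_k)$ with $\xi_{hk}\in I$ and $0\le F_c'(\xi_{hk})\le 1$. Squaring and summing then gives $V(t+1)\le V(t)$.

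For strictness, suppose $r(t)\ne0$, so $c\ne0$, and the $a_h$ are not all equal. Then some pair has $\tilde a_h\ne\tilde a_k$, and Lemma~\ref{lem:F_monotone} gives $F_c'(\xi_{hk})<1$. That pair's term strictly decreases, and hence $V(t+1)<V(t)$.

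I expect the main obstacle to be the endpoint and domain bookkeeping rather than the core argument. This includes checking that $u-c>0$ holds on the whole interval $I$ and not just at the sample points, and that the mean value points lie where the derivative bound is valid. Both are handled by the monotonicity-in-$u$ observation above.
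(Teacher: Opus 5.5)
Your proposal is correct and follows essentially the same route as the paper. The paper's proof likewise uses three ingredients: the coordinate-wise action of the $h$-independent map $F_{c(t)}$ (Lemma~\ref{lem:one_d_map}), the bounds $0\le F_c'\le 1$ with strict upper bound when $c\ne 0$ (Lemma~\ref{lem:F_monotone}), and the mean value theorem on pairs combined with the pairwise-difference form $V=\frac{1}{2H}\sum_{h,k}(\tilde a_h-\tilde a_k)^2$. Your extra domain checks are sound: you verify $u-c>0$ on all of $I$ via $u\ge (L-1)c>c$, and you note that the hypothesis at $\min_h\tilde a_h$ covers the whole interval. These checks just make explicit what Lemma~\ref{lem:F_monotone}, which is already stated on $I$, takes for granted.
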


\begin{proof}
By Lemma~\ref{lem:one_d_map}, $\tilde a_h(t+1)=F_{c(t)}(\tilde a_h(t))$ for every
$h$. By Lemma~\ref{lem:F_monotone}, $F_{c(t)}$ has derivative in $[0,1]$ on the
relevant interval, with $F'_{c(t)}<1$ strictly whenever $c(t)\ne 0$. The mean-value
theorem gives $|F_{c(t)}(\tilde a_h)-F_{c(t)}(\tilde a_k)|\le|\tilde a_h-\tilde a_k|$,
strict unless $\tilde a_h=\tilde a_k$. The variance $V$ is proportional to
$\sum_{h,k}(\tilde a_h-\tilde a_k)^2$, so summing pairwise inequalities yields
$V(t+1)\le V(t)$ with the stated strictness.
\end{proof}

\paragraph{Scope of Theorem~\ref{thm:signfree_stepwise}.}
The strong-safety condition~\eqref{eq:strong_safety} is strictly more restrictive
than the linear stability bound $\eta\lambda^{\min}<2$ characterizing the basin
of the balanced minimum. In particular, large-amplitude EoS oscillations generally
violate~\eqref{eq:strong_safety}. Theorem~\ref{thm:signfree_stepwise} therefore
guarantees step-wise monotonicity only inside an inner neighborhood of the balanced
minimum, where the strong-safety condition is automatically satisfied (since
$r\to 0$ implies $c\to 0$). To handle the EoS regime itself, we pass to two-step
contraction with explicit quadratic correction; this is the content of the next
subsection.

\subsection{Two-step contraction from the EoS normal form}
\label{app:two_step_cubic}

The one-step contraction theorem above applies only in the strong-safety regime.
To analyze the actual Edge-of-Stability regime, where the residual oscillates and
the strong-safety condition may fail, we pass to a two-step description. The key
input is a local normal form for the scalar residual recursion, retained one
order beyond the linear EoS approximation. This normal form has two consequences:
first, sign alternation of the residual follows from sharpness bracketing near
the stability threshold; second, the quadratic term in the residual recursion
contributes a positive correction to the two-step contraction coefficient.

\begin{lemma}[Residual recursion: EoS normal form]
\label{lem:residual_recursion}
Let $\lambda(\Theta_t):=L\sum_h a_h(t)^{2L-2}$, define
$\Lambda_3(\Theta_t):=L(L-1)\sum_h a_h(t)^{3L-4}$ and
$\Lambda_4(\Theta_t):=L(L-1)(L-2)\sum_h a_h(t)^{4L-6}$, and set
\[
\beta(t):=2-\eta\lambda(\Theta_t),\quad
A(t):=\tfrac{\eta^2}{2}\Lambda_3(\Theta_t)>0,\quad
B(t):=\tfrac{\eta^3}{6}\Lambda_4(\Theta_t)>0.
\]
Then
\begin{equation}
\label{eq:residual_recursion_corrected}
r(t+1)=-r(t)+\beta(t)\,r(t)+A(t)\,r(t)^2-B(t)\,r(t)^3+O\bigl(r(t)^4\bigr).
\end{equation}
\end{lemma}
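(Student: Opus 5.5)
The plan is a direct computation: substitute the exact per-layer recursion \eqref{eq:ai_update} into the definition \eqref{eq:residual_ai} of the residual, and expand in powers of $c(t):=\eta\,r(t)$. No linearization is needed. For each pathway the update factors as
\[
a_h(t+1)=a_h(t)\bigl(1-c(t)\,a_h(t)^{L-2}\bigr).
\]
Raising this to the $L$-th power gives a polynomial of degree $L$ in $c$.

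\textbf{Step 1 (binomial expansion).} By the binomial theorem,
\[
a_h(t+1)^L=\sum_{k=0}^{L}\binom{L}{k}(-1)^k c^k a_h^{L+k(L-2)} .
\]
The terms for $k=0,1,2,3$ are $a_h^L$, $-Lc\,a_h^{2L-2}$, $\tfrac{L(L-1)}{2}c^2a_h^{3L-4}$ and $-\tfrac{L(L-1)(L-2)}{6}c^3a_h^{4L-6}$.

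\textbf{Step 2 (sum over pathways).} Summing over $h$ and subtracting $\sigma_\star$ gives
\[
r(t+1)=r(t)-c\,\lambda(\Theta_t)+\tfrac{c^2}{2}\Lambda_3(\Theta_t)-\tfrac{c^3}{6}\Lambda_4(\Theta_t)+R_4(t).
\]
Here I use $\sum_h a_h^L-\sigma_\star=r(t)$ and the definitions of $\lambda,\Lambda_3,\Lambda_4$.

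\textbf{Step 3 (regroup).} Substitute $c=\eta r$. The linear term $r-\eta\lambda r$ equals $-r+(2-\eta\lambda)r=-r+\beta r$. The quadratic and cubic terms become $A r^2$ and $-B r^3$, with $A,B$ exactly as defined in the statement. Their positivity is immediate for $L>2$ and $a_h>0$. When $L=3$, $\Lambda_4$ still carries the factor $(L-2)=1$, so nothing degenerates.

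\textbf{Step 4 (remainder).} The remainder is
\[
R_4=\sum_{k=4}^{L}\binom{L}{k}(-\eta r)^k\sum_h a_h^{L+k(L-2)} .
\]
This is a finite polynomial in $r$ whose lowest power is $r^4$, and it vanishes identically when $L\le3$. So $R_4=O(r^4)$ with constant at most $\sum_{k\ge4}\binom{L}{k}\eta^k\sum_h a_h^{L+k(L-2)}\,|r|^{k-4}$.

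The only real obstacle is making the $O(r^4)$ uniform. The constant depends on $\eta$ and on $\max_h a_h$, so I would state the claim on a compact region: $a_h\in(0,\bar a]$ and $|r|\le r_0$, for instance a neighborhood of the balanced point $a_h=a_\star$. The remainder is then bounded by $C(\eta,L,H,\bar a,r_0)\,|r|^4$.
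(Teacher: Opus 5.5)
Your proposal is correct and matches the paper's proof: expand $a_h(t+1)^L=a_h(t)^L\bigl(1-\eta r(t)\,a_h(t)^{L-2}\bigr)^L$ binomially, sum over $h$ to identify $\lambda$, $\Lambda_3$ and $\Lambda_4$, and rewrite the linear term as $-r+\beta r$. Your explicit treatment of the remainder adds precision the paper leaves implicit: you note it is an exact finite polynomial whose lowest power is $r^4$ (and which vanishes for $L\le 3$), with the $O(r^4)$ constant uniform on a compact region $a_h\le\bar a$, $|r|\le r_0$.
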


\begin{proof}
Expand $a_h(t+1)^L=a_h(t)^L\bigl(1-\eta r(t)\,a_h(t)^{L-2}\bigr)^L$ using
$(1-x)^L=1-Lx+\binom{L}{2}x^2-\binom{L}{3}x^3+O(x^4)$ and sum over $h$, identifying
$\lambda$, $\Lambda_3$, $\Lambda_4$.
\end{proof}

\begin{hypothesis}[Self-stabilized EoS regime]
\label{hyp:self_stabilization}
There exist a neighborhood $\mathcal{N}$ of the balanced minimum, constants
$\varepsilon_0,\rho_0,C_\beta>0$, and a time $t_0$ such that for all $t\ge t_0$
with $\Theta_t\in\mathcal{N}$:
\begin{enumerate}
\item \emph{Locality:} $\max_h|\delta_h(t)|\le\varepsilon_0$ and $|r(t)|\le\rho_0$;
\item \emph{Sharpness bracketing:} $|\beta(t)|=|2-\eta\lambda(\Theta_t)|\le C_\beta\,r(t)^2$.
\end{enumerate}
\end{hypothesis}

\paragraph{Derived sign alternation.}
Hypothesis~\ref{hyp:self_stabilization}(ii), combined with
Lemma~\ref{lem:residual_recursion}, gives
\[
r(t+1)=-r(t)+O(r(t)^2).
\]
Therefore, for all sufficiently small $r(t)\ne 0$, the residuals $r(t)$ and
$r(t+1)$ have opposite signs. In particular, the sign alternation used in the
two-step contraction is not imposed as a separate dynamical assumption; it is a
local consequence of the EoS residual normal form together with sharpness
bracketing.

\paragraph{Two-step expansion.}
Under Hypothesis~\ref{hyp:self_stabilization},
\begin{equation}
\label{eq:rt_sum_and_product}
r(t)+r(t+1)=A(t)\,r(t)^2+O\bigl(r(t)^3\bigr),
\qquad
r(t)r(t+1)=-r(t)^2+O\bigl(r(t)^3\bigr).
\end{equation}
The crucial point — and the source of the corrected contraction coefficient — is
that $r(t)+r(t+1)$ is $O(r^2)$, \emph{not} $O(r^3)$. The frozen-linearization
two-step transverse multiplier at the balanced point is
\[
m_t=(1-\kappa_\star r(t))(1-\kappa_\star r(t+1))=1-\bigl(\kappa_\star^2+\kappa_\star A(t)\bigr)r(t)^2+O\bigl(r(t)^3\bigr),
\]
where $\kappa_\star:=\eta(L-1)a_\star^{L-2}$, hence
$m_t^2=1-2(\kappa_\star^2+\kappa_\star A(t))r(t)^2+O(r(t)^3)$.

\paragraph{Robust formulation.}
For a fully local theorem, the step-to-step variation of $\kappa_t$ around
$\kappa_\star$ should be absorbed into the local error term rather than claimed
exactly. We state the robust form, which is the version we use.

\begin{theorem}[Robust two-step contraction from the EoS normal form]
\label{thm:two_step_eos_nf_robust}
Let $L>2$. Suppose Hypothesis~\ref{hyp:self_stabilization} holds at iterate $t$
and the iterates remain on the positive branch. Then there exist constants $c_0>0$
and $C>0$, depending only on the local neighborhood, such that
\begin{equation}
\label{eq:robust_two_step_contraction}
\mathcal{G}(t+2)\le
\Bigl(1-c_0\,r(t)^2+C\bigl(|r(t)|^3+r(t)^2\varepsilon_t+r(t)^2\varepsilon_t^2\bigr)\Bigr)\,\mathcal{G}(t),
\end{equation}
where $\varepsilon_t:=\max_h|\delta_h(t)|$. Equivalently, the remainder is
$o(r(t)^2)$ relative to the leading term under the local scaling assumptions.
Moreover, if the transverse linearization is frozen at the balanced point, the
leading coefficient admits the explicit form
\[
c_0=2\bigl(\kappa_\star^2+\kappa_\star A_\star\bigr),\qquad
\kappa_\star=\eta(L-1)a_\star^{L-2},\qquad
A_\star=\tfrac{\eta^2}{2}L(L-1)H\,a_\star^{3L-4}.
\]
Whenever the error terms in~\eqref{eq:robust_two_step_contraction} are dominated
by $c_0\,r(t)^2$, the balancing gap contracts: $\mathcal{G}(t+2)<\mathcal{G}(t)$.
\end{theorem}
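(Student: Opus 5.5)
The plan is to work directly with the pathway-centered deviations $\delta_h(t)=a_h(t)-\bar a(t)$ and linearize the one-step map transversally to the balanced direction. From \eqref{eq:ai_update}, $a_h(t+1)=a_h(t)-\eta r(t)a_h(t)^{L-1}$. Subtracting the mean and Taylor-expanding $a_h^{L-1}$ around $\bar a(t)$ gives
\[
\delta_h(t+1)=\bigl(1-\kappa_t r(t)\bigr)\delta_h(t)+O\bigl(|r(t)|\,\varepsilon_t^2\bigr),\qquad \kappa_t:=\eta(L-1)\bar a(t)^{L-2}.
\]
The centering kills the constant term, and the quadratic term in $\delta$ is only partly removed by centering; it leaves the $O(|r|\varepsilon^2)$ remainder. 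Applying this twice yields
\[
\delta_h(t+2)=\bigl(1-\kappa_{t+1}r(t+1)\bigr)\bigl(1-\kappa_t r(t)\bigr)\delta_h(t)+\text{error},
\]
so that $\mathcal{G}(t+2)=m_t^2\,\mathcal{G}(t)\bigl(1+O(\cdot)\bigr)$ with the two-step multiplier $m_t$.

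The key step is to expand $m_t$ using the EoS normal form. By Lemma~\ref{lem:residual_recursion} and Hypothesis~\ref{hyp:self_stabilization}(ii), $\beta(t)r(t)=O(r^3)$. This gives \eqref{eq:rt_sum_and_product}: $r(t)+r(t+1)=A(t)r(t)^2+O(r^3)$ and $r(t)r(t+1)=-r(t)^2+O(r^3)$. I will also need to control $\kappa_{t+1}-\kappa_t$. Since $\bar a(t+1)-\bar a(t)=O(|r(t)|)$, we have $\kappa_{t+1}=\kappa_t+O(|r|)$. This variation contributes $O(r^2)$ after multiplying by $r(t+1)$, but the leading term is $-\kappa'\,r(t)\,r(t+1)\sim +\kappa' r^2$ with $\kappa'=O(r)$, i.e.\ $O(r^3)$.

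The expansion is then
\[
m_t=1-\kappa_t\bigl(r(t)+r(t+1)\bigr)+\kappa_t^2\,r(t)r(t+1)+O(r^3)=1-(\kappa_t A(t)+\kappa_t^2)\,r(t)^2+O(r^3).
\]
Squaring gives $m_t^2=1-2(\kappa_t^2+\kappa_t A(t))r(t)^2+O(|r|^3)$.

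Next I replace $\kappa_t$ and $A(t)$ by their balanced values $\kappa_\star$ and $A_\star$. Since $\bar a=a_\star+O(\varepsilon_t)+O(|r|)$, where the mean is pinned near $a_\star$ by $|r|$ small and $\sum_h\delta_h=0$, this costs $O(r^2\varepsilon_t+r^2\varepsilon_t^2)$ plus $O(|r|^3)$. The result is exactly the claimed coefficient $c_0=2(\kappa_\star^2+\kappa_\star A_\star)$, which is positive because $\kappa_\star,A_\star>0$ on the positive branch. The positive-branch assumption ensures every $a_h$ stays positive, so the powers are smooth and all constants are uniform on $\mathcal{N}$ by compactness.

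I expect the main obstacle to be bookkeeping the remainder from the nonlinear term in $\delta$. It gives $\delta_h(t+1)$ an additive error of order $|r|\varepsilon^2$, which is not obviously proportional to $\delta_h$. I would handle this by bounding it as $|r|\,\varepsilon_t\,|\delta_h|$ up to constants. This works because the centered second-order term $\tfrac12 f''(\bar a)(\delta_h^2-\overline{\delta^2})$ is bounded by $\varepsilon\cdot\max|\delta|$, and $\max|\delta|^2\le\mathcal{G}$ gives a multiplicative error on $\mathcal{G}$. Over two steps, the first-order parts of these errors could cancel by sign alternation, but I will not rely on that. I simply absorb them into $C(r^2\varepsilon_t+r^2\varepsilon_t^2)$ after squaring, which is what \eqref{eq:robust_two_step_contraction} allows. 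The final contraction claim follows when $c_0r^2$ dominates.
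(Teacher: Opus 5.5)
Your overall route (linearize the centered deviations, multiply the two one-step transverse factors, expand with \eqref{eq:rt_sum_and_product}, then square) is the same as the paper's. However, the way you dispose of the nonlinear remainder does not deliver the stated error order. The one-step quadratic correction is $e_h=-\tfrac{\eta}{2}(L-1)(L-2)\bar a^{L-3}r(t)\bigl(\delta_h^2-\overline{\delta^2}\bigr)$. When you square $\delta(t+1)=(1-\kappa_t r(t))\delta(t)+e$, the cross term $2(1-\kappa_t r(t))\langle\delta,e\rangle\propto r(t)\sum_h\delta_h^3$ is generically of size $|r(t)|\,\varepsilon_t\,\mathcal{G}(t)$. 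For $H=2$ it vanishes at this order, but the cubic Taylor term gives $r\sum_h\delta_h^4\sim|r|\varepsilon_t^2\mathcal{G}$. Bounding $\|e\|\le C|r|\varepsilon_t\sqrt{\mathcal{G}}$ and then squaring therefore gives only $\mathcal{G}(t+2)\le\bigl(m_t^2+C|r|\varepsilon_t+Cr^2\varepsilon_t^2\bigr)\mathcal{G}(t)$. This error is first order in $r$, not $r^2\varepsilon_t$, and it swamps $c_0r^2$ whenever $\varepsilon_t\gtrsim|r(t)|$, which Hypothesis~\ref{hyp:self_stabilization} does not rule out.

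The sign-alternation cancellation you explicitly set aside is exactly what is needed. The corrections at steps $t$ and $t+1$ carry the factors $r(t)$ and $r(t+1)=-r(t)+O(r^2)$, while $\bar a$ and $\delta$ change only by $O(r)$ in between. Together they are therefore $O(r^2\varepsilon_t\sqrt{\mathcal{G}})$. The paper's sketch asserts the $r^2\varepsilon_t$ order without displaying this pairing, but the claim holds only because of it. The cleanest way to make this rigorous is the leaf coordinate of Lemma~\ref{lem:one_d_map}: $q_h^+=q_h+(L-2)\eta r+\tfrac{(L-1)(L-2)}{2}\eta^2r^2/q_h+O(r^3)$. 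The centered vector $z$ then has no first-order term in $r$ at all, and you only need to convert back to $\mathcal{G}$ through the $O(r^2)$ two-step drift of $\bar a$.

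Second, your estimate of $\kappa_{t+1}-\kappa_t$ counts a factor of $r$ twice. From $\bar a(t+1)=\bar a(t)-\eta r(t)\bar a^{L-1}+O(|r|\varepsilon_t^2)$ you get $\kappa_{t+1}-\kappa_t=-\kappa' r(t)+O(r^2+|r|\varepsilon_t^2)$ with $\kappa'=\eta^2(L-1)(L-2)\bar a^{2L-4}$, which is of order one. Hence $-\kappa_{t+1}r(t+1)=-\kappa_t r(t+1)-\kappa' r(t)^2+\dots$, and $m_t=1-\bigl(\kappa_t^2+\kappa_tA(t)+\kappa'\bigr)r(t)^2+O(|r|^3+r^2\varepsilon_t^2)$. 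The variation of $\kappa$ thus enters at the same order as the leading term, not at $O(r^3)$. The $z$-coordinate computation confirms that the unfrozen coefficient is $2\bigl(\kappa_\star^2+\kappa_\star A_\star+\eta^2(L-1)(L-2)a_\star^{2L-4}\bigr)$. This does not break the inequality: $\kappa'>0$ when $L>2$, so the dropped term only strengthens contraction, and the frozen $c_0=2(\kappa_\star^2+\kappa_\star A_\star)$ remains valid as a lower bound. But you have to justify dropping it by its sign, not by declaring it higher order.
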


\begin{proof}[Proof sketch]
The relations~\eqref{eq:rt_sum_and_product} imply the linear part of the two-step
transverse update has multiplier $1-c\,r(t)^2+O(r(t)^3+r(t)^2\varepsilon_t)$ for
some $c>0$. Squaring gives the leading factor $1-c_0\,r(t)^2$. The nonlinear
transverse contributions from the quadratic Taylor correction of the per-pathway
update are proportional to powers of $r(t)$ (since the GD map reduces to the
identity at $r(t)=0$), contributing
$O\bigl((|r(t)|^3+r(t)^2\varepsilon_t)\mathcal{G}(t)\bigr)$. This
yields~\eqref{eq:robust_two_step_contraction}.
\end{proof}

\paragraph{Remark on the remainder.}
The GD update $a_h^+=a_h-\eta r\,a_h^{L-1}$ reduces to the identity when $r=0$,
so every term in the two-step error must carry at least one factor of $r(t)$.
This is reflected in~\eqref{eq:robust_two_step_contraction}: there is no
standalone $\varepsilon_t^4$ contribution. The compact form is
$R(t)=o(r(t)^2\,\mathcal{G}(t))$ under the EoS scaling.

\paragraph{Relation to the sign-alternating formulation.}
A weaker way to state the two-step mechanism is to assume directly that the
residual forms a small-amplitude two-cycle,
\[
r(t+1)=-r(t)+O(r(t)^2).
\]
Under this assumption, the leading part of
Theorem~\ref{thm:two_step_eos_nf_robust} reduces to the familiar contraction
form
\[
\mathcal{G}(t+2)\le \bigl(1-c\,r(t)^2\bigr)\mathcal{G}(t)
\]
for some $c>0$. The present formulation is stronger in two respects: the
small-amplitude sign alternation is derived from the EoS sharpness bracketing,
and the retained quadratic term in the residual normal form contributes the
positive correction $\kappa_\star A_\star$ to the leading frozen contraction
coefficient.

\subsection{Formal central-flow description of the balancing gap}
\label{app:central_flow}

We now translate the local two-step contraction into a formal slow-time ODE in
the spirit of central flows~\citep{cohen2025understanding}. This subsection
should be read as a leading-order asymptotic description, not as a finite-time
approximation theorem. Its purpose is to identify the direction of the slow
cross-leaf drift and the stopping boundary selected by the EoS stability
constraint.

The resulting picture distinguishes two cases. At the critical learning rate
$\eta=2/\lambda^{\min}$, the formal drift continues until full balance
$\mathcal{G}=0$. At intermediate learning rates, the drift stops earlier at the
one-sided boundary where the sharpness reaches $2/\eta$; the prediction is then
partial balance rather than exact balance.

\paragraph{Sign convention.}
Use the sharpness excess $\chi(\Theta):=\eta\lambda(\Theta)-2$. The EoS regime is
$\chi>0$, where sustained residual oscillations occur; below the threshold,
$\chi<0$, the residual decays exponentially and the central-flow drift is absent.

\paragraph{Self-stabilized variance.}
The cubic restoring term in Lemma~\ref{lem:residual_recursion} pins the
time-averaged squared residual to scale with the positive part of $\chi$:
\begin{equation}
\label{eq:self_stab_variance_corrected}
\langle r^2\rangle(\Theta)=K(\Theta)\,\chi(\Theta)_+\,+O(\chi^2),
\end{equation}
where $K(\Theta)>0$ depends locally on $\Lambda_3,\Lambda_4$ via the standard
self-stabilization calculation~\citep[Section 3]{damian2023selfstabilization}. Crucially, the
positive part $\chi_+$ appears: below the EoS threshold there is no oscillation
and no drift.

\begin{theorem}[Formal slow ODE for the balancing gap]
\label{thm:slow_ode_stopping_boundary}
Let $L>2$ and suppose Hypothesis~\ref{hyp:self_stabilization} holds along the
trajectory. Define slow time $\tau:=t/2$. To leading order in $\mathcal{G}$,
\begin{equation}
\label{eq:slow_ode_stopping_boundary}
\frac{d\mathcal{G}}{d\tau}
=
-\,K_0\,\bigl(\eta\lambda(\mathcal{G})-2\bigr)_+\,\mathcal{G}
+O(\mathcal{G}^2),
\end{equation}
where $K_0>0$ collects the per-step contraction coefficient from
Theorem~\ref{thm:two_step_eos_nf_robust} and the self-stabilized residual scale,
and $\lambda(\mathcal{G})$ is given by~\eqref{eq:lambda_G_from_leaf}.
\end{theorem}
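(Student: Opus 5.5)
The plan is to derive the slow ODE by averaging the two-step contraction of Theorem~\ref{thm:two_step_eos_nf_robust} over one period of the residual two-cycle. We then substitute the self-stabilized residual amplitude \eqref{eq:self_stab_variance_corrected} and pass to slow time $\tau=t/2$.

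\textbf{Step 1: the one-period increment of $\mathcal{G}$.} Theorem~\ref{thm:two_step_eos_nf_robust} gives
\[
\mathcal{G}(t+2)-\mathcal{G}(t)\le -c_0\,r(t)^2\,\mathcal{G}(t)+C\bigl(|r(t)|^3+r(t)^2\varepsilon_t+r(t)^2\varepsilon_t^2\bigr)\mathcal{G}(t).
\]
To get an equation rather than an inequality, we use the explicit frozen multiplier $m_t^2=1-2(\kappa_\star^2+\kappa_\star A_\star)r(t)^2+O(r^3)$. This yields $\mathcal{G}(t+2)=\bigl(1-c_0 r(t)^2\bigr)\mathcal{G}(t)+R(t)$ with $R(t)=o(r^2\mathcal{G})$, by the remark on the remainder.

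Since $\varepsilon_t\asymp\mathcal{G}^{1/2}$, the term $r^2\varepsilon_t\mathcal{G}$ is $O(r^2\mathcal{G}^{3/2})$. Because $r^2$ is bounded in $\mathcal{N}$, this is absorbed into the $O(\mathcal{G}^2)$ slot up to the stated order, or more precisely into $o(\mathcal{G})$. The $|r|^3$ term is higher order in the oscillation amplitude and is dropped at leading order.

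\textbf{Step 2: substitute the self-stabilized amplitude.} We replace $r(t)^2$ by its two-step time average, which is legitimate because $\mathcal{G}$ changes at rate $O(r^2)$, much slower than the oscillation. By \eqref{eq:self_stab_variance_corrected}, $\langle r^2\rangle=K(\Theta)\,\chi_+ +O(\chi^2)$. Dividing by the two-step increment in $t$, i.e. one unit of $\tau$, gives
\[
\frac{d\mathcal{G}}{d\tau}=-c_0K\,\bigl(\eta\lambda-2\bigr)_+\,\mathcal{G}+\text{(higher order)}.
\]
We set $K_0:=c_0K$, evaluated at the balanced point; freezing it there costs only $O(\mathcal{G})$ relative corrections, hence $O(\mathcal{G}^2)$ overall.

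\textbf{Step 3: close the equation in $\mathcal{G}$.} We express $\lambda$ as a function of $\mathcal{G}$ via \eqref{eq:lambda_G_from_leaf}. Along the self-stabilized trajectory the iterate stays $O(r)$-close to the zero-loss manifold, and the resulting correction to $\lambda$ is absorbed into $\chi$ at order $\chi^2$. When $\chi<0$ the residual decays geometrically, so $\langle r^2\rangle\to0$ and the drift vanishes; this is consistent with the $(\cdot)_+$ factor.

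\textbf{Main obstacle.} The hard part is Step 2, justifying the averaging. The two-step contraction is pointwise in $r(t)^2$, while \eqref{eq:self_stab_variance_corrected} controls only a time average. One must also verify that the slowly varying $\chi$ and the fast oscillation separate. I would first try a standard two-timescale argument in the spirit of \citet{damian2023selfstabilization}: treat $\mathcal{G}$ and $\chi$ as slow variables, and the amplitude of the two-cycle as fast and slaved to $\chi$ via the cubic term $B$. I would accept this only at the formal level, consistent with the theorem being labelled formal.
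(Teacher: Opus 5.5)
Your argument is essentially the paper's own formal derivation. The paper states the result without a separate proof and obtains it exactly as you do: insert the self-stabilized amplitude $\langle r^2\rangle = K\chi_+$ into the two-step contraction of Theorem~\ref{thm:two_step_eos_nf_robust}, and count one two-step period as one unit of $\tau$, so that $K_0=c_0K$; only the positivity of the two-step coefficient is actually used, not the frozen value of $c_0$.

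One bookkeeping fix. Boundedness of $r^2$ turns $r^2\varepsilon_t\mathcal{G}$ into $O(\mathcal{G}^{3/2})$, not $O(\mathcal{G}^2)$, so that absorption step does not go through as written. Instead, keep the factor $r^2\approx K\chi_+$ and write this error as $O(\chi_+\mathcal{G}^{3/2})$. In that form it is subleading to the drift term and vanishes together with it at the stopping boundary.
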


\paragraph{Stopping boundary, not stable equilibrium.}
The right-hand side of~\eqref{eq:slow_ode_stopping_boundary} contains $\chi_+$,
not $\chi$. Hence the drift acts only when $\lambda(\mathcal{G})>2/\eta$ and is
\emph{identically zero} when $\lambda(\mathcal{G})\le 2/\eta$. There is no
restoring force in the opposite direction. The correct dynamical picture is
therefore a \emph{one-sided sliding boundary}: $\mathcal{G}$ decreases
monotonically until $\lambda(\mathcal{G})$ reaches $2/\eta$, at which point the
drift switches off and the iterate halts (at the slow-ODE level).

Combining with~\eqref{eq:lambda_G_from_leaf}, the stopping condition is
\begin{equation}
\label{eq:stopping_boundary}
\lambda(\mathcal{G}^\dagger)=\frac{2}{\eta},\qquad
\mathcal{G}^\dagger=\frac{2-\eta\lambda^{\min}}{\eta\,L(L-1)(L-2)\,a_\star^{2L-4}}+O\bigl((\eta\lambda^{\min}-2)^{3/2}\bigr).
\end{equation}

\paragraph{Three regimes.}
\begin{enumerate}
\item \emph{Intermediate window $2/S_1<\eta<2/\lambda^{\min}$.} Here
$\eta\lambda^{\min}<2$, so the balanced point $\mathcal{G}=0$ is below the EoS
threshold ($\chi(0)<0$). The drift is active when
$\mathcal{G}>\mathcal{G}^\dagger>0$ and inactive when
$\mathcal{G}\le\mathcal{G}^\dagger$. The formal ODE predicts \emph{drift until
the sharpness reaches $2/\eta$, then stop}; the resulting state is partially
balanced, not fully balanced. This is a stopping boundary, not a two-sided stable
equilibrium.

\item \emph{Critical $\eta=2/\lambda^{\min}$.} Here $\eta\lambda^{\min}=2$, so
$\chi(0)=0$ and $\mathcal{G}^\dagger=0$. The stopping boundary coincides with the
balanced center, and the drift drives the trajectory all the way to
$\mathcal{G}=0$. At this learning rate the formal ODE selects the fully balanced
minimum.

\item \emph{Super-critical $\eta>2/\lambda^{\min}$.} Here $\chi(0)>0$: even the
balanced minimum is GD-unstable in the local model. The slow ODE has no zero-loss
stopping boundary in $\mathcal{G}\ge 0$; the trajectory must leave $\mathcal{N}$
or enter a different nonlinear regime, consistent with the worst-case-return
analysis of Section~6.
\end{enumerate}

\paragraph{Recovery of the stability window~(17).}
The window $2/S_1<\eta<2/\lambda^{\min}$ of the main text is precisely the set of
$\eta$ for which (a)~the sharp single-path minimum is GD-unstable (so the
symmetry-breaking phase cannot terminate there) and (b)~the slow ODE admits a
nontrivial stopping boundary
$\mathcal{G}^\dagger\in[0,\mathcal{G}^{\text{1-path}})$. The endpoint
$\eta=2/\lambda^{\min}$ is the unique learning rate at which the stopping boundary
collapses to the fully balanced minimum.

\paragraph{Geometric interpretation.}
Combining Theorem~\ref{thm:slow_ode_stopping_boundary} with
Propositions~\ref{prop:gf_foliation} and~\ref{prop:leaf_sharpness}: GD
oscillations of amplitude $\langle r^2\rangle\propto\chi_+$ produce a one-sided
cross-leaf drift $d\mathcal{G}/d\tau\propto-\chi_+\mathcal{G}$. Via
\eqref{eq:V_G_corrected}, the leaf coordinate $z$ tracks $\mathcal{G}$, so the
trajectory drifts across GF leaves in the direction of decreasing $\|z\|$. The
drift switches off precisely at the leaf whose terminal sharpness equals $2/\eta$
— i.e., at the boundary between the EoS regime and the linearly stable regime —
and the trajectory stops there. At intermediate learning rates this stopping
leaf is not $z=0$ but a partially balanced leaf; at $\eta=2/\lambda^{\min}$, the
stopping leaf is exactly $z=0$.

\subsection{Scope of the local analysis}
\label{app:scope}

We conclude by separating the rigorous local statements from the formal
asymptotic interpretation.

\begin{enumerate}
\item \textbf{Rigorous reductions and local identities.}
We prove SVS invariance and preservation of depth-wise balance
(Lemmas~\ref{lem:svs_invariant_multipath} and
\ref{lem:depthbalance_preserved}), the exact scalar recursion
$a_h^+=a_h-\eta r\,a_h^{L-1}$ in~\eqref{eq:ai_update}, the GF foliation in
centered coordinates $z\in\mathbf{1}^\perp$
(Proposition~\ref{prop:gf_foliation}), and the local sharpness expansion across
GF leaves in~\eqref{eq:leaf_sharpness_explicit}. We also prove one-step
contraction of the leaf-coordinate variance $V$ under the strong-safety
condition (Theorem~\ref{thm:signfree_stepwise}) and the residual normal form
(Lemma~\ref{lem:residual_recursion}).

\item \textbf{Local EoS contraction under self-stabilization.}
The two-step contraction theorem
(Theorem~\ref{thm:two_step_eos_nf_robust}) is a local statement near the balanced
minimum, conditional on the self-stabilized EoS bracketing
\[
|2-\eta\lambda(\Theta_t)|=O(r(t)^2).
\]
Given this bracketing, the residual normal form implies sign alternation and the
pathway balancing gap contracts over two steps whenever the local error terms are
dominated by the leading quadratic contraction term. The bracketing condition
itself is imported from the self-stabilization theory of GD at the EoS and is
not proved from first principles here.

\item \textbf{Formal slow-time interpretation.}
The central-flow equation~\eqref{eq:slow_ode_stopping_boundary} is a leading-order
asymptotic description. It explains the direction of the slow drift and the
one-sided stopping boundary selected by the stability threshold
$\eta\lambda=2$, but it is not a finite-time approximation theorem.

\item \textbf{Not proved globally.}
We do not prove that every winner-takes-all transient enters the local
self-stabilized EoS neighborhood in finite time. Establishing such a result would
turn the local mechanism in this appendix into a full global re-balancing
theorem. The experiments in Figures~1--4 support this global picture, but the
present proof is local.
\end{enumerate}

\newpage
\section{Worst-Case Return Threshold for Deep Chains}
\label{app:wcr}

\begin{figure}[tb]
    \centering
    \includegraphics[width=0.4\linewidth]{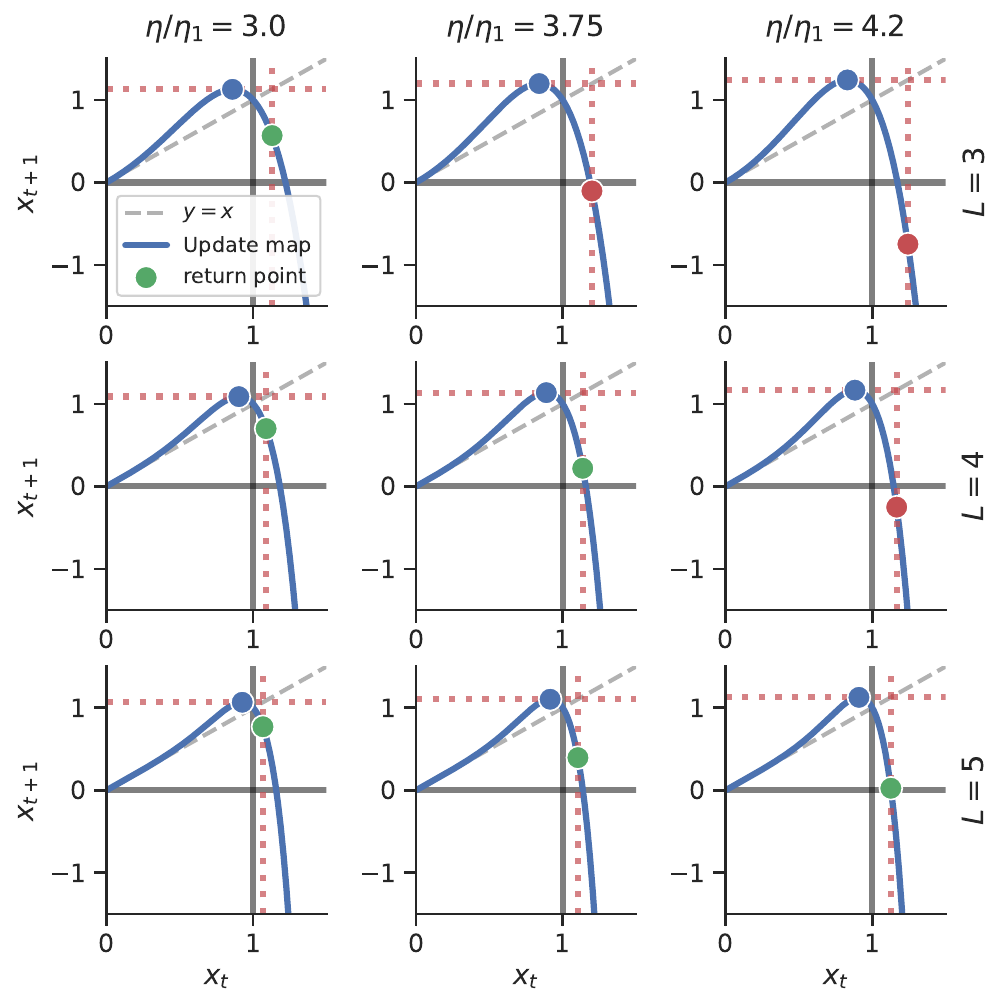}
    \caption{Deep linear chain update map \eqref{eq:deep_chain_map} for different values of $\eta/\eta_1$ and $L$.
    The worst-case overshoot $(c,f_\eta(c))$ and the return point
    $(f_\eta(c),f_\eta(f_\eta(c)))$ illustrate how larger depth permits larger normalized steps before a sign flip occurs.}
    \label{fig:update_map}
\end{figure}
This appendix provides the proofs and computational details for Section~\ref{sec:wcr}. Our goal is to characterize the
largest learning rate for which the deep-chain GD map cannot cross the origin after the \emph{worst} overshoot from the
positive branch.

\subsection{From one-path concentration to a deep-chain map}
\label{app:wcr_reduction}

On the SVS set restricted to the depth-balanced manifold, each pathway--mode pair admits a per-layer scalar
$a_{h1}(t)\ge 0$ whose end-to-end singular value is $\sigma_{h1}(t)=a_{h1}(t)^L$. The corresponding mode-wise GD
recursion is
\begin{align}
a_{h1}(t+1)
&=a_{h1}(t)-\eta\,r_1(t)\,a_{h1}(t)^{L-1},
\label{eq:svs_depth_balanced_update_mode1}\\
r_1(t)
&=\sum_{k=1}^H a_{k1}(t)^L-\sigma_{\star1}.
\nonumber
\end{align}
If, during symmetry breaking, the dominant mode is concentrated in one pathway $h=1$ so that
$a_{11}(t)^L \gg \sum_{k\neq 1}a_{k1}(t)^L$, then the residual satisfies
$r_1(t) \approx a_{11}(t)^L-\sigma_{\star1}$. Substituting this into
\eqref{eq:svs_depth_balanced_update_mode1} yields the single-variable deep-chain map
\[
w_{t+1}=w_t-\eta\,w_t^{L-1}(w_t^L-\sigma_{\star1}),
\qquad w_t\equiv a_{11}(t),
\]
which is \eqref{eq:deep_chain_map}.

\subsection{Normalization and geometric preliminaries}
\label{app:wcr_normalization}

Let $w_\star=(\sigma_{\star 1})^{1/L}$ and define normalized variables
\begin{equation}
u := \frac{w}{w_\star}\in(0,\infty),
\qquad
\alpha := \eta\,(\sigma_{\star 1})^{2-2/L}.
\end{equation}
Then \eqref{eq:deep_chain_map} becomes the dimensionless map
\begin{equation}
u^+ = g_\alpha(u)
:= u - \alpha\,u^{L-1}(u^L-1)
= u + \alpha\,u^{L-1}(1-u^L).
\label{eq:app_g_map}
\end{equation}
The overshoot-onset threshold $\eta_1=1/S_1$ corresponds to $\alpha=1/L$, since $S_1=L(\sigma_{\star 1})^{2-2/L}$.

\begin{lemma}[Overshoot regime and monotonicity for $u\ge 1$]
\label{lem:wcr_monotone_right}
Assume $\alpha>1/L$. Then (i) there exist $u\in(0,1)$ such that $g_\alpha(u)>1$ (overshoot occurs),
and (ii) $g_\alpha$ is strictly decreasing on $[1,\infty)$.
\end{lemma}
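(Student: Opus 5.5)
Both parts reduce to one-variable calculus on $g_\alpha(u)=u+\alpha u^{L-1}(1-u^L)$, with $L\ge 2$ and $\alpha>0$.

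\emph{Part (i): overshoot occurs.} The idea is to look just to the left of the fixed point $u=1$. We have $g_\alpha(1)=1$ and
\[
g_\alpha'(u)=1+\alpha\bigl[(L-1)u^{L-2}-(2L-1)u^{2L-2}\bigr],
\]
so at the fixed point $g_\alpha'(1)=1+\alpha\bigl[(L-1)-(2L-1)\bigr]=1-\alpha L$. The hypothesis $\alpha>1/L$ gives $g_\alpha'(1)<0$. By continuity of $g_\alpha'$, there is $\epsilon>0$ with $g_\alpha'<0$ on $(1-\epsilon,1]$. The mean value theorem then gives, for $u\in(1-\epsilon,1)$,
\[
g_\alpha(u)=g_\alpha(1)-\int_u^1 g_\alpha'(s)\,ds>1 .
\]
This is exactly the overshoot of (i), and it also matches the normalization $\eta_1\leftrightarrow\alpha=1/L$.

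\emph{Part (ii): strict decrease on $[1,\infty)$.} It suffices to show $g_\alpha'(u)<0$ for every $u\ge1$ (equality only at isolated points would also be enough). Rewrite the derivative as
\[
g_\alpha'(u)=1-\alpha u^{L-2}\bigl[(2L-1)u^{L}-(L-1)\bigr].
\]
Set $\phi(u):=u^{L-2}\bigl[(2L-1)u^L-(L-1)\bigr]$.
\begin{itemize}
\item For $u\ge1$ and $L\ge2$, both factors $u^{L-2}$ and $(2L-1)u^L-(L-1)$ are positive and nondecreasing.
\item The second factor is strictly increasing, so $\phi$ is strictly increasing on $[1,\infty)$.
\item Hence $\phi(u)\ge\phi(1)=L$ on $[1,\infty)$.
\end{itemize}
Therefore $g_\alpha'(u)\le 1-\alpha L<0$ for all $u\ge1$, and $g_\alpha$ is strictly decreasing there.

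\emph{Main obstacle.} The only delicate point is the $u^{L-2}$ factor in $\phi$: it must be handled as nonnegative and monotone, and the case $L=2$ (where it equals $1$) checked separately. After that the bound $\phi\ge L$ closes both parts. No earlier results are needed beyond the normalized map \eqref{eq:app_g_map}.
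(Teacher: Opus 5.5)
Your proposal is correct and is essentially the paper's proof. You compute $g_\alpha'(1)=1-\alpha L<0$ to get an overshoot just to the left of the fixed point $u=1$, and you bound $g_\alpha'(u)\le 1-\alpha L<0$ on $[1,\infty)$ via $u^{L-2}\bigl[(2L-1)u^L-(L-1)\bigr]\ge L$. The paper gets that last bound directly from $u^{L-2}\ge 1$ and $(2L-1)u^L-(L-1)\ge L$, so your strict-monotonicity step for $\phi$ is harmless but not needed, and neither is the separate $L=2$ check.
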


\begin{proof}
We have $g_\alpha(1)=1$ and
\[
g_\alpha'(u)=1-\alpha\,u^{L-2}\Bigl((2L-1)u^L-(L-1)\Bigr),
\qquad
g_\alpha'(1)=1-\alpha L<0 \quad (\alpha>1/L),
\]
so $g_\alpha$ exceeds $1$ for $u<1$ sufficiently close to $1$, proving (i). For (ii), note that for every $u\ge 1$,
\[
(2L-1)u^L-(L-1)\ge L,\qquad u^{L-2}\ge 1,
\]
hence $g_\alpha'(u)\le 1-\alpha L<0$ for all $u\ge 1$.
\end{proof}

\subsection{Worst-case overshoot and the reduction to a single inequality}
\label{app:wcr_worst_case_reduction}

For $\alpha>1/L$, define the worst-case overshoot value
\begin{equation}
v_{\max}(\alpha) := \max_{u\in[0,1]} g_\alpha(u).
\end{equation}
By continuity, the maximum exists; by Lemma~\ref{lem:wcr_monotone_right}(i), we have $v_{\max}(\alpha)>1$ in the
overshoot regime.

Before stating the reduction, write the two-step return property in normalized coordinates:
\begin{equation}
\forall\, u\in(0,1):\quad
g_\alpha(u)>1
\ \Longrightarrow\
0<g_\alpha^{\circ 2}(u)<1.
\label{eq:wcr_property}
\end{equation}

\begin{lemma}[Two-step return reduces to positivity at $v_{\max}$]
\label{lem:wcr_reduce_to_gvmax}
Assume $\alpha>1/L$. Then the WCR property \eqref{eq:wcr_property} is equivalent to
\begin{equation}
g_\alpha\bigl(v_{\max}(\alpha)\bigr) > 0.
\label{eq:app_gvmax_pos}
\end{equation}
\end{lemma}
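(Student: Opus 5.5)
The plan is to split the two-step return property \eqref{eq:wcr_property} into its upper half ($g_\alpha^{\circ 2}(u)<1$) and its lower half ($g_\alpha^{\circ 2}(u)>0$). The upper half should hold automatically. The lower half should reduce to a single evaluation at the worst overshoot. Both parts rest only on Lemma~\ref{lem:wcr_monotone_right} and the elementary boundary values $g_\alpha(0)=0$ and $g_\alpha(1)=1$.

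\emph{Step 1: the upper bound is free.} Take any $u\in(0,1)$ with $v:=g_\alpha(u)>1$. By Lemma~\ref{lem:wcr_monotone_right}(ii), $g_\alpha$ is strictly decreasing on $[1,\infty)$. Hence $g_\alpha^{\circ 2}(u)=g_\alpha(v)<g_\alpha(1)=1$. So \eqref{eq:wcr_property} is equivalent to the positivity statement: $g_\alpha(v)>0$ for every overshoot value $v=g_\alpha(u)>1$ with $u\in(0,1)$.

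\emph{Step 2: the worst overshoot is realized in the open interval.} Since $\alpha>1/L$, Lemma~\ref{lem:wcr_monotone_right}(i) gives $v_{\max}(\alpha)>1$. The boundary values are $g_\alpha(0)=0<v_{\max}$ and $g_\alpha(1)=1<v_{\max}$, so any maximizer $u^\ast$ of $g_\alpha$ on $[0,1]$ lies in $(0,1)$. It satisfies $g_\alpha(u^\ast)=v_{\max}>1$, so $u^\ast$ is itself an admissible overshoot point in \eqref{eq:wcr_property}. Every overshoot value also satisfies $v\in(1,v_{\max}]$, by the definition of $v_{\max}$.

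\emph{Step 3: the equivalence.} ($\Rightarrow$) Apply \eqref{eq:wcr_property} at $u^\ast$. This gives $g_\alpha(v_{\max})=g_\alpha^{\circ 2}(u^\ast)>0$, which is \eqref{eq:app_gvmax_pos}. ($\Leftarrow$) Assume $g_\alpha(v_{\max})>0$ and let $v=g_\alpha(u)>1$ with $u\in(0,1)$. Since $1<v\le v_{\max}$ and $g_\alpha$ is decreasing on $[1,\infty)$, we get $g_\alpha(v)\ge g_\alpha(v_{\max})>0$. Together with Step 1 this gives $0<g_\alpha^{\circ 2}(u)<1$.

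The main point needing care is Step 2, namely that the supremum of overshoot values is actually attained inside $(0,1)$ rather than only approached. Without this, the necessity direction would yield only $g_\alpha(v_{\max})\ge 0$. The boundary comparison $g_\alpha(0)=0$, $g_\alpha(1)=1<v_{\max}$ resolves it.
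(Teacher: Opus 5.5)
Your proof is correct and follows the paper's argument. The upper bound $g_\alpha^{\circ 2}(u)<1$ comes for free from strict decrease of $g_\alpha$ on $[1,\infty)$ together with $g_\alpha(1)=1$, and the lower bound reduces to the worst overshoot $v_{\max}(\alpha)$ by the same monotonicity; your Step 2 ($g_\alpha(0)=0$ and $g_\alpha(1)=1<v_{\max}(\alpha)$ force the maximizer into $(0,1)$) makes explicit the attainment that the necessity direction needs, which the paper's proof leaves implicit.
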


\begin{proof}
Fix $\alpha>1/L$ and let $u\in(0,1)$ be such that $g_\alpha(u)>1$. By Lemma~\ref{lem:wcr_monotone_right}(ii),
$g_\alpha$ is strictly decreasing on $[1,\infty)$, hence
\[
g_\alpha\bigl(g_\alpha(u)\bigr) \ge g_\alpha\bigl(v_{\max}(\alpha)\bigr).
\]
Moreover, because $g_\alpha$ is decreasing on $[1,\infty)$ and $g_\alpha(1)=1$, we automatically have
$g_\alpha(g_\alpha(u))<1$ whenever $g_\alpha(u)>1$. Therefore the only nontrivial part of two-step return is the lower
bound $g_\alpha(g_\alpha(u))>0$, which is ensured for all overshooting $u$ iff the worst case is positive:
$g_\alpha(v_{\max}(\alpha))>0$.
\end{proof}

\subsection{Boundary system and the 1D root equation}
\label{app:wcr_boundary_equation}

Let $c\in(0,1)$ be any maximizer attaining $v_{\max}(\alpha)=g_\alpha(c)$; then necessarily $g_\alpha'(c)=0$.
Define
\[
y:=c^L,\qquad
A(y):=(2L-2)y-(L-2),\qquad
B(y):=(2L-1)y-(L-1).
\]
A direct substitution yields the standard identities
\begin{align}
g_\alpha'(c)=0
&\Longleftrightarrow
\alpha=\frac{1}{c^{L-2}\,B(y)}
=
\frac{y^{2/L-1}}{B(y)},
\label{eq:app_alpha_from_c}
\\
v_{\max}(\alpha)=g_\alpha(c)
&=
c\,\frac{A(y)}{B(y)}.
\label{eq:app_vmax_from_y}
\end{align}

\begin{definition}[Normalized WCR constant]
\label{def:app_alpha_wcr}
Define
\begin{equation}
\alpha_{\mathrm{WCR}}(L) :=
\sup\{\alpha>0:\ g_\alpha(v_{\max}(\alpha))>0\}.
\end{equation}
Equivalently, $\alpha_{\mathrm{WCR}}(L)$ is the unique boundary value for which
$g_\alpha(v_{\max}(\alpha))=0$.
\end{definition}

\begin{lemma}[Boundary system]
\label{lem:wcr_boundary_system}
At $\alpha=\alpha_{\mathrm{WCR}}(L)$, there exists $c\in(0,1)$ such that
\begin{equation}
g_\alpha'(c)=0,
\qquad
g_\alpha\bigl(g_\alpha(c)\bigr)=0.
\label{eq:app_boundary_system}
\end{equation}
Conversely, any $(\alpha,c)$ satisfying \eqref{eq:app_boundary_system} yields $\alpha=\alpha_{\mathrm{WCR}}(L)$.
\end{lemma}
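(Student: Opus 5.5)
The strategy is to treat $\alpha\mapsto \Phi(\alpha):=g_\alpha(v_{\max}(\alpha))$ as a continuous function on the overshoot regime $\alpha>1/L$. By Definition~\ref{def:app_alpha_wcr}, $\alpha_{\mathrm{WCR}}$ is the supremum of the set where $\Phi>0$. If that set is an interval $(1/L,\alpha_{\mathrm{WCR}})$ with $\Phi(\alpha_{\mathrm{WCR}})=0$, both directions of the lemma follow from the characterization of the maximizer.

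\textbf{Forward direction.} I would first establish continuity of $\Phi$. Here $g_\alpha(u)$ is jointly continuous and $[0,1]$ is compact, so $v_{\max}(\alpha)$ is continuous by Berge's maximum theorem, and hence $\Phi$ is continuous. Next I check the endpoints. As $\alpha\downarrow 1/L$ we have $v_{\max}\to 1$ and $\Phi\to 1>0$. For large $\alpha$, take $u=1/2$: then $g_\alpha(1/2)$ grows like $\alpha$, so $v_{\max}\to\infty$, and $g_\alpha(v)=v-\alpha v^{L-1}(v^L-1)\to-\infty$, so $\Phi<0$. Hence the supremum is finite, exceeds $1/L$, and $\Phi(\alpha_{\mathrm{WCR}})=0$ by continuity.

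At $\alpha=\alpha_{\mathrm{WCR}}$ we have $v_{\max}>1$ by Lemma~\ref{lem:wcr_monotone_right}(i). The maximizer $c$ cannot be $u=0$ (since $g_\alpha(0)=0$) or $u=1$ (since $g_\alpha(1)=1<v_{\max}$). So $c\in(0,1)$ is an interior maximizer, $g'_\alpha(c)=0$ holds, and $g_\alpha(g_\alpha(c))=\Phi=0$. This gives \eqref{eq:app_boundary_system}.

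\textbf{Converse direction.} Given $(\alpha,c)$ satisfying \eqref{eq:app_boundary_system}, I need two facts.

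First, $c$ is the global maximizer. I would show that $g'_\alpha$ has a unique zero in $(0,1)$. Write $g'_\alpha(u)=1-\alpha u^{L-2}B(u^L)$. The function $u^{L-2}B(u^L)$ is negative for small $u$, then increasing once $B>0$: its derivative is positive wherever $B>0$, because both factors are increasing and positive there. So the equation $u^{L-2}B(u^L)=1/\alpha$ has exactly one root in $(0,1)$ when $\alpha>1/L$. That root is the maximum, since $g'_\alpha>0$ before it and $g'_\alpha<0$ after it. Hence $g_\alpha(c)=v_{\max}(\alpha)$ and $\Phi(\alpha)=0$.

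Second, $\Phi$ has a unique zero. Using \eqref{eq:app_alpha_from_c}--\eqref{eq:app_vmax_from_y}, I would reparametrize the boundary system by $y=c^L$. Both $\alpha(y)=y^{2/L-1}/B(y)$ and $v_{\max}=cA(y)/B(y)$ are then explicit. The condition $g_\alpha(v)=0$, i.e.\ $1=\alpha v^{L-2}(v^L-1)$, becomes a single scalar equation $\Psi(y)=0$ on the range $B(y)>0$, i.e.\ $y>(L-1)/(2L-1)$, with $y<1$.

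\textbf{Main obstacle.} The hard step is proving monotonicity: that $\alpha(y)$ is strictly decreasing in $y$, and that $\Psi$ changes sign exactly once, or equivalently that $\Phi$ is strictly decreasing in $\alpha$. For $\Phi$, I would first try the envelope theorem: $\frac{d}{d\alpha}v_{\max}=c^{L-1}(1-c^L)>0$. Then
\[
\Phi'(\alpha)=g'_\alpha(v_{\max})\,\frac{dv_{\max}}{d\alpha}+\partial_\alpha g_\alpha(v_{\max}).
\]
The first term is negative by Lemma~\ref{lem:wcr_monotone_right}(ii). The second term equals $-v_{\max}^{L-1}(v_{\max}^L-1)<0$. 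So $\Phi$ is strictly decreasing. I expect this envelope argument to settle uniqueness cleanly, with only differentiability of $v_{\max}$ left to justify; that follows from the uniqueness of the nondegenerate interior critical point together with the implicit function theorem.
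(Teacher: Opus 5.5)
Your proof is correct. It shares the paper's skeleton: the boundary is the zero of $\Phi(\alpha)=g_\alpha(v_{\max}(\alpha))$, and the worst overshoot comes from an interior critical point. The difference is that you prove what the paper's three-line argument only asserts.

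\textbf{What the paper leaves unproved.} The paper's converse says ``the worst overshoot lands exactly at the origin, so $\alpha$ is the WCR boundary.'' This silently assumes two things: that any critical point $c\in(0,1)$ is the maximizer attaining $v_{\max}$, and that $\Phi$ has a single zero. The word ``unique'' in Definition~\ref{def:app_alpha_wcr} is likewise unproved. The paper's forward direction also takes for granted that $\alpha_{\mathrm{WCR}}<\infty$ and $\Phi(\alpha_{\mathrm{WCR}})=0$.

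\textbf{What your route supplies.} Your monotonicity of $u^{L-2}B(u^L)$ on $\{B>0\}$, your envelope computation, and your continuity/sign-change argument fill exactly these holes. As a bonus, they show that the return-safe set is the whole interval $(0,\alpha_{\mathrm{WCR}})$, which the main text's admissible window tacitly uses.

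\textbf{Two tightenings.}
\begin{enumerate}
\item You can skip the implicit-function detour. Take $1/L<\alpha_1<\alpha_2$, write $v_j:=v_{\max}(\alpha_j)$, and let $c_1$ be the maximizer for $\alpha_1$. Then $v_2\ge g_{\alpha_2}(c_1)>g_{\alpha_1}(c_1)=v_1>1$. Lemma~\ref{lem:wcr_monotone_right}(ii) and $\partial_\alpha g_\alpha(v)=-v^{L-1}(v^L-1)<0$ for $v>1$ then give $\Phi(\alpha_2)=g_{\alpha_2}(v_2)<g_{\alpha_2}(v_1)<g_{\alpha_1}(v_1)=\Phi(\alpha_1)$.
\item In the converse, state explicitly that a critical point in $(0,1)$ forces $\alpha>1/L$. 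This holds because $u^{L-2}B(u^L)<L$ on $(0,1)$, so $g'_\alpha>0$ there whenever $0<\alpha\le 1/L$. Your uniqueness claim is phrased only for $\alpha>1/L$, so this step is needed before you invoke it.
\end{enumerate}
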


\begin{proof}
By Lemma~\ref{lem:wcr_reduce_to_gvmax}, the boundary is characterized by
$g_\alpha(v_{\max}(\alpha))=0$. At the maximizer $c$ attaining $v_{\max}(\alpha)=g_\alpha(c)$ we have $g_\alpha'(c)=0$,
hence \eqref{eq:app_boundary_system}. Conversely, if \eqref{eq:app_boundary_system} holds, then the worst overshoot
lands exactly at the origin in the next step, so the corresponding $\alpha$ is the WCR boundary.
\end{proof}

Combining the boundary identities gives the one-dimensional root equation
\begin{equation}
B(y)^{2L-1}
=
A(y)^{L-2}\Bigl(y\,A(y)^L-B(y)^L\Bigr),
\label{eq:wcr_root_equation}
\end{equation}
on the interval
\[
y\in\Bigl(\frac{L-1}{2L-1},\,1\Bigr).
\]
Once this root is found, the normalized WCR constant is
\begin{equation}
\gamma_{\mathrm{WCR}}(L)
=
L\,\frac{y^{2/L-1}}{B(y)}.
\label{eq:gamma_wcr_from_y}
\end{equation}

\begin{proof}[Proof of Theorem~\ref{thm:wcr_characterization}]
Let $(\alpha,c)$ satisfy \eqref{eq:app_boundary_system}. Using \eqref{eq:app_alpha_from_c} and \eqref{eq:app_vmax_from_y},
write $y=c^L$ and $v_{\max}=cA/B$. The condition $g_\alpha(v_{\max})=0$ implies
\[
\alpha=\frac{1}{v_{\max}^{L-2}(v_{\max}^L-1)}.
\]
Combining this with \eqref{eq:app_alpha_from_c} and substituting
$v_{\max}^L=y(A/B)^L$ yields, after cancellation and algebra,
\eqref{eq:wcr_root_equation}. Finally,
$\gamma_{\mathrm{WCR}}(L)=\eta_{\mathrm{WCR}}S_1=L\alpha_{\mathrm{WCR}}(L)$ and
$\alpha_{\mathrm{WCR}}(L)=y^{2/L-1}/B(y)$ by \eqref{eq:app_alpha_from_c}, proving
\eqref{eq:gamma_wcr_from_y} and \eqref{eq:eta_wcr_factorized}.
\end{proof}

\subsection{Numerical computation (bisection) and representative values}
\label{app:wcr_numerics}

Equation \eqref{eq:wcr_root_equation} is one-dimensional on the interval
$y\in\bigl(\frac{L-1}{2L-1},1\bigr)$ (which ensures $B(y)>0$). In practice, we solve it via bisection in that interval and
then compute $\gamma_{\mathrm{WCR}}(L)$ by \eqref{eq:gamma_wcr_from_y}.

Representative values (computed to high precision) are:
\begin{equation}
\begin{array}{c|cccccccc}
L & 2 & 3 & 4 & 5 & 6 & 8 & 10 & 20 \\
\hline
\gamma_{\mathrm{WCR}}(L)=\eta_{\mathrm{WCR}}/\eta_1
& 3.196 & 3.661 & 3.979 & 4.224 & 4.427 & 4.751 & 5.008 & 5.844
\end{array}
\end{equation}
and the values continue to increase with $L$ (e.g., $\gamma_{\mathrm{WCR}}(50)\approx 7.04$,
$\gamma_{\mathrm{WCR}}(100)\approx 8.02$).

\subsection{Asymptotic growth with depth}
\label{app:wcr_asymptotic}

In this subsection we justify Proposition~\ref{prop:wcr_growth}. Let
$\gamma_{\mathrm{WCR}}(L)=L\alpha_{\mathrm{WCR}}(L)$, and let $y=y(L)$ be the solution of \eqref{eq:wcr_root_equation}.
Define $t:=2y-1\in(0,1)$ (so $y=(1+t)/2$). A convenient form of the boundary algebra (equivalent to
\eqref{eq:wcr_root_equation}) is
\begin{equation}
B(y)=r(y)^{L-2}\bigl(y\,r(y)^L-1\bigr),
\qquad
r(y):=\frac{A(y)}{B(y)}.
\label{eq:app_compact_boundary}
\end{equation}

For large $L$, the root satisfies $t\downarrow 0$ and hence $y\downarrow 1/2$. In that regime, a standard expansion
shows
\begin{equation}
r(y)=1+\frac{1}{2Lt}+o\!\left(\frac{1}{Lt}\right),
\qquad\Rightarrow\qquad
r(y)^L = \exp\!\left(\frac{1}{2t}+o(1)\right).
\label{eq:app_rL_asymptotic}
\end{equation}
Moreover, since $B(y)=Lt+O(1)$ and $y=1/2+O(t)$, equation \eqref{eq:app_compact_boundary} implies (to leading order)
\begin{equation}
\exp\!\left(\frac{1}{t}\right) \asymp L t.
\label{eq:app_exp_balance}
\end{equation}
Using $\gamma_{\mathrm{WCR}}(L)=L\alpha_{\mathrm{WCR}}(L)$ and $\alpha_{\mathrm{WCR}}(L)=y^{2/L-1}/B(y)$, we have
$\gamma_{\mathrm{WCR}}(L)\sim 2/t$ (because $y\to 1/2$ and $B\sim Lt$). Substituting $t\sim 2/\gamma_{\mathrm{WCR}}$
into \eqref{eq:app_exp_balance} yields
\begin{equation}
\exp\!\left(\frac{\gamma_{\mathrm{WCR}}(L)}{2}\right)\asymp \frac{L}{\gamma_{\mathrm{WCR}}(L)},
\end{equation}
which implies $\gamma_{\mathrm{WCR}}(L)=\Theta(\log L)$. More refined manipulations yield the Lambert-$W$ scaling
\begin{equation}
\gamma_{\mathrm{WCR}}(L) = 2W(\Theta(L)) = 2\log L - 2\log\log L + O(1),
\end{equation}
establishing Proposition~\ref{prop:wcr_growth}.

\newpage
\section{Other Figures}
\label{app:other_fig}
\begin{figure}[h]
    \centering
    \begin{tabular}{cl} 
        \rotatebox[origin=c]{90}{$\quad\eta < 2/S_1$} & 
        \begin{subfigure}[b]{0.9\textwidth}
            \centering
            \includegraphics[width=\linewidth, valign=c]{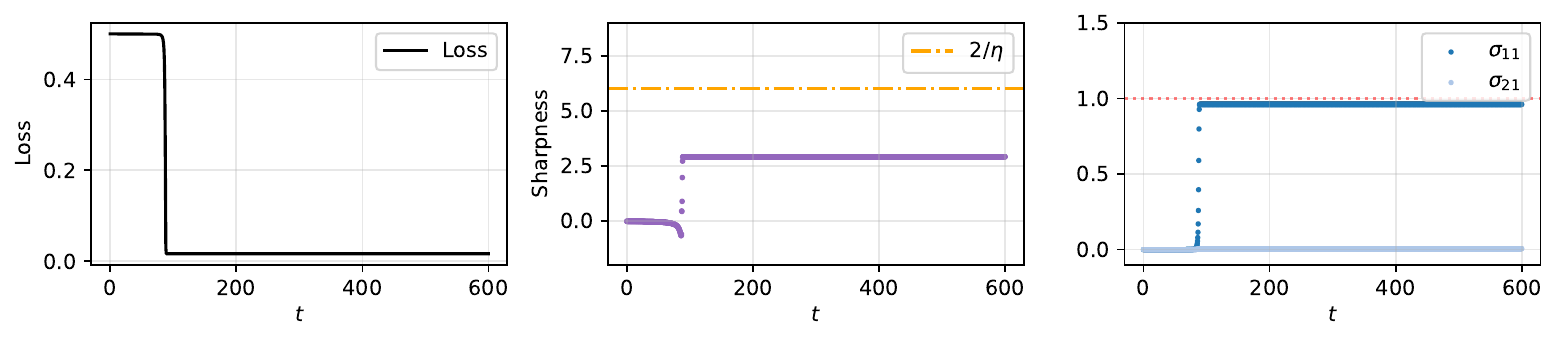} 
        \end{subfigure} \\
        
        \rotatebox[origin=c]{90}{$\quad\eta > 2/S_1$} & 
        \begin{subfigure}[b]{0.9\textwidth}
            \centering
            \includegraphics[width=\linewidth, valign=c]{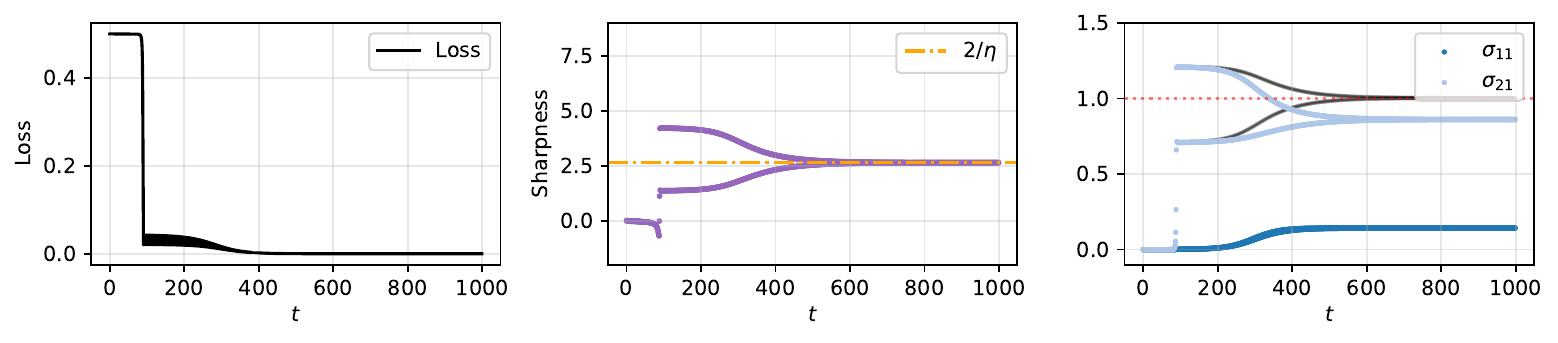} 
        \end{subfigure} \\
        
        \rotatebox[origin=c]{90}{$\quad\eta = 2H^{2/L-1}/S_1$} & 
        \begin{subfigure}[b]{0.9\textwidth}
            \centering
            \includegraphics[width=\linewidth, valign=c]{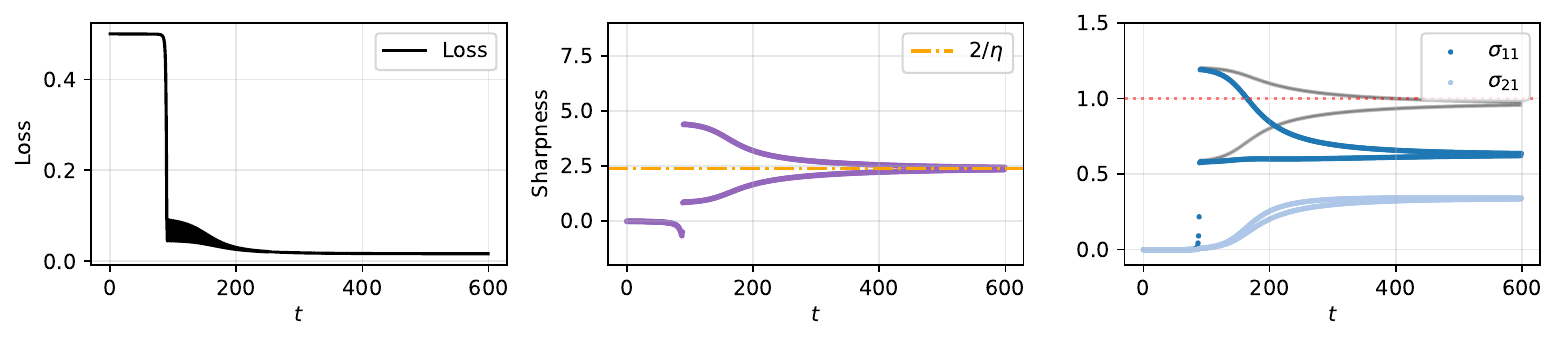} 
        \end{subfigure}\\
        
        \rotatebox[origin=c]{90}{$\quad\eta > 2H^{2/L-1}/S_1$} & 
        \begin{subfigure}[b]{0.9\textwidth}
            \centering
            \includegraphics[width=\linewidth, valign=c]{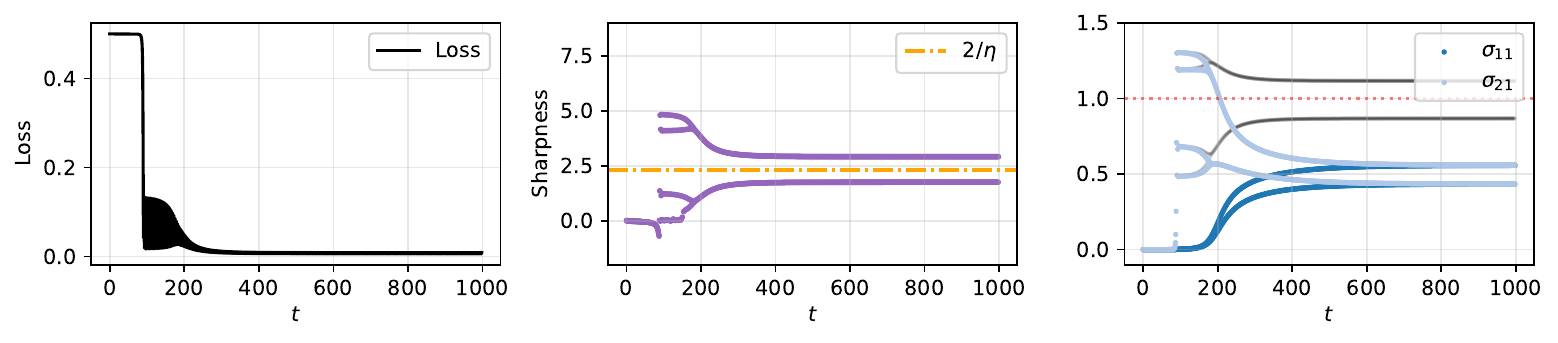} 
        \end{subfigure}
    \end{tabular}
    \caption{Training dynamics of GD with multi path MLP ($L=3$, $H=2$, $\sigma_{\star1} =1$) according to learning rate $\eta$. GD with larger learning rate $\eta$ mitigate sharp minima with $\lambda_{\max} > 2/\eta$.}
    \label{fig:MLP_Symmetric_breaking_rebalancing_app}
\end{figure}

\begin{figure}[ht]
    \centering
    \includegraphics[width=0.95\linewidth]{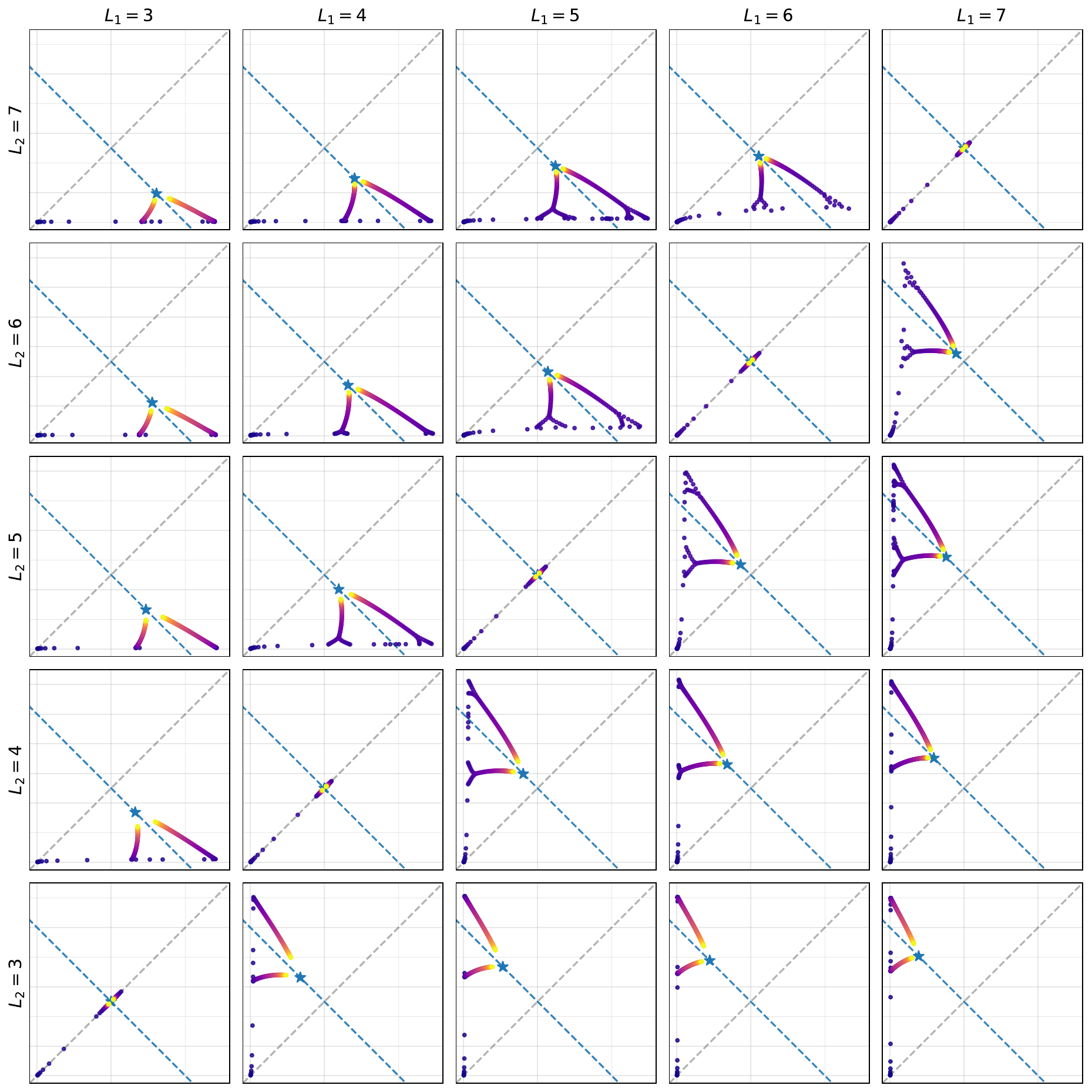}
    \caption{Trajectory of $(\sigma_{11}, \sigma_{21})$ for heterogeneous depth model with learning rate $\eta^*$.}
    \label{fig:heterogeneous_trajectory_full}
\end{figure}
\begin{figure}
    \centering
    \includegraphics[width=0.5\linewidth]{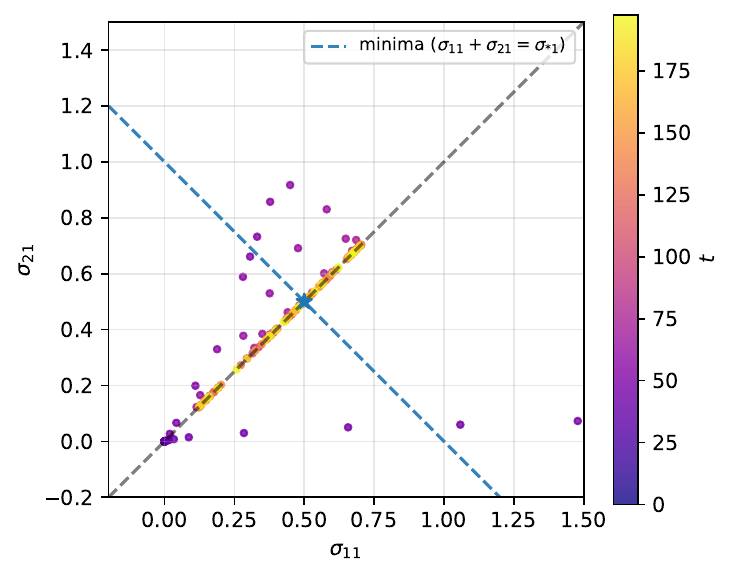}
    \caption{Trajectory of $\sigma_{h1}$ for learning rate $\eta=0.99\eta_{\mathrm{WCR}}$. As learning rates getting bigger, $\sigma_{h1}$ returns close to zero.}
    \label{fig:trajectory_max}
\end{figure}

\begin{figure}
    \centering
    \begin{tabular}{ccc}

        {$\quad\eta < 2/S_1$} & {$\quad\eta > 2/S_1$} & {$\quad\eta =2/\lambda_1^{\min}$} \\
        \begin{subfigure}[b]{0.3\textwidth}
            \centering
            \includegraphics[width=\linewidth, valign=c]{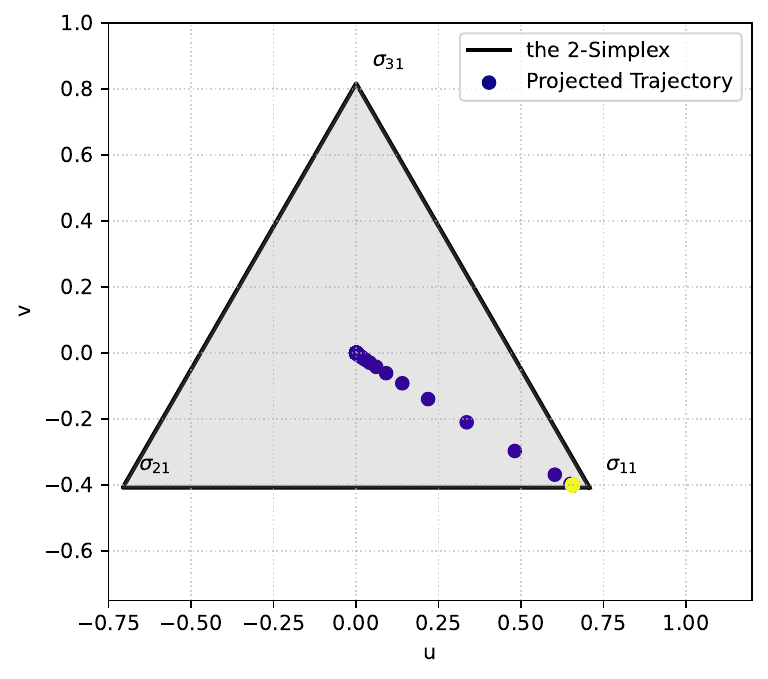} 
        \end{subfigure} 
        &
        \begin{subfigure}[b]{0.3\textwidth}
            \centering
            \includegraphics[width=\linewidth, valign=c]{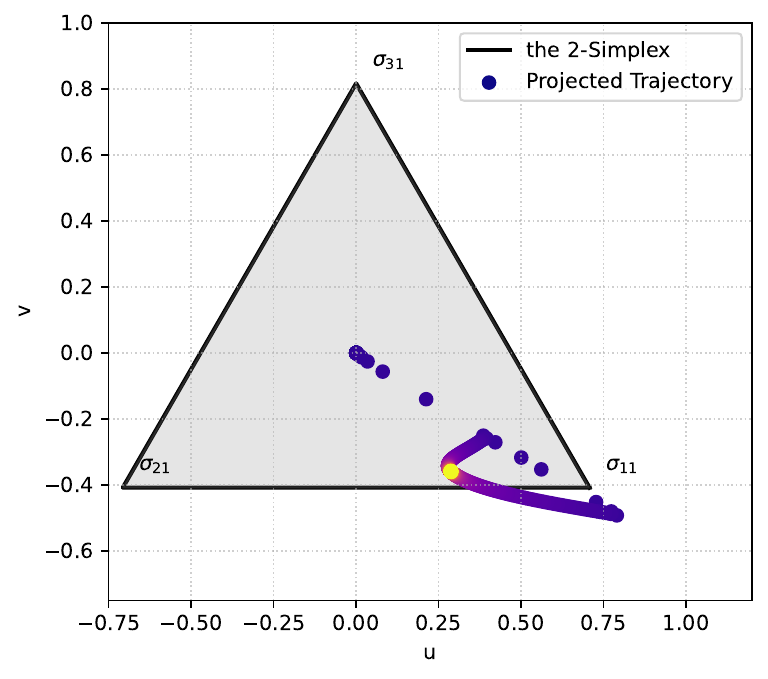} 
        \end{subfigure} 
        &
        \begin{subfigure}[b]{0.35\textwidth}
            \centering
            \includegraphics[width=\linewidth, valign=c]{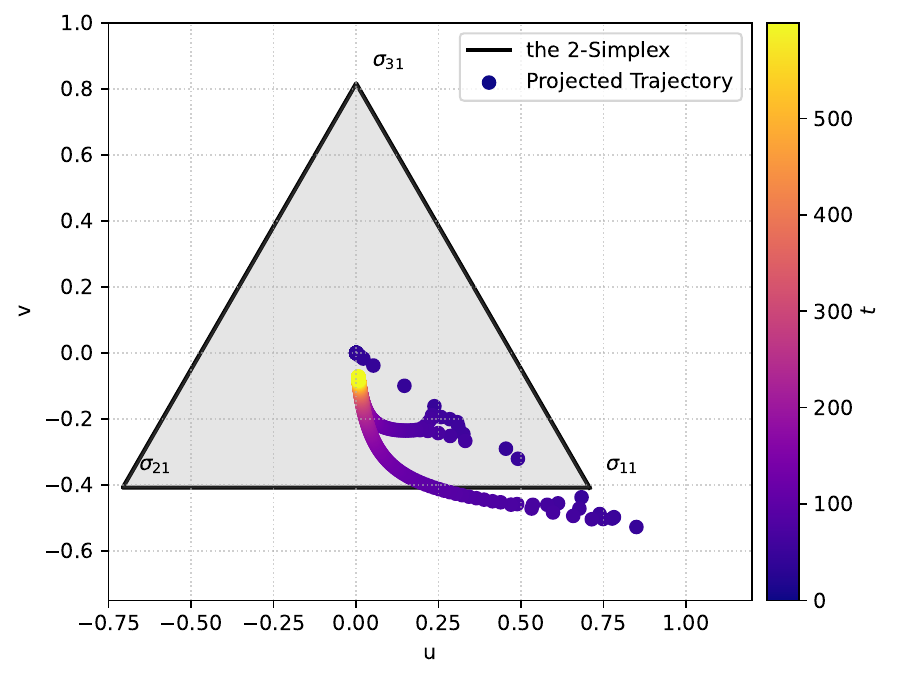} 
        \end{subfigure}
    \end{tabular}
    \caption{Trajecory of $\sigma_{h1}$ trained with $\eta_1= 2/\lambda_1^{\min}$ with 2D unfolded view on minima $\sum_{h=1}^3 \sigma_{h1}=\sigma_{\star1}$.} 
    \label{fig:trajectory}
\end{figure}

\end{document}